\newcommand{\ifcomments}{\iftrue}
\author{Difan Zou\\
Department of Computer Science\\
University of California, Los Angeles \\
\texttt{knowzou@cs.ucla.edu}
\And
Philip M. Long\\
Google\\
\texttt{plong@google.com}
\And
Quanquan Gu\\
Department of Computer Science \\
University of California, Los Angeles\\
\texttt{qgu@cs.ucla.edu}
}
\title{\huge On the Global Convergence  of Training Deep Linear ResNets}
\date{}
\def\Tr{\mathrm{Tr}}
\def\cN{\mathcal{N}}
\def \EEE {\mathfrak{E}}
\def \epsilonprime {\epsilon'}
\newcommand{\la}{\langle}
\newcommand{\ra}{\rangle}
\def\CC{\textcolor{black}}
\begin{document}

\maketitle

\begin{abstract}
We study the convergence of gradient descent (GD) and stochastic gradient descent (SGD) for training $L$-hidden-layer linear residual networks (ResNets). We prove that for training deep residual networks with certain linear transformations at input and output layers, which are fixed throughout training,
both GD and SGD with zero initialization on all hidden weights can converge to the global minimum of the training loss. Moreover, when specializing to appropriate Gaussian random linear transformations, GD and SGD provably optimize wide enough deep linear ResNets. Compared with the global convergence result of GD for training standard deep linear networks \citep{du2019width}, our condition on the neural network width is sharper by a factor of $O(\kappa L)$, where $\kappa$ denotes the condition number of the covariance matrix of the training data. We further propose a modified identity input and output transformations, and show that a $(d+k)$-wide neural network is sufficient to guarantee the global convergence of GD/SGD, where $d,k$ are the input and output dimensions respectively.
\end{abstract}




\section{Introduction}
Despite the remarkable power of deep neural networks (DNNs) trained using
stochastic gradient descent (SGD) in many machine learning applications,
theoretical understanding of the properties of this algorithm, 
or even plain gradient descent (GD), 
remains limited.
Many key properties of the learning process for such systems are also
present in the idealized case of deep linear networks.  For example,
(a) the objective function is not convex; (b) errors back-propagate; and
(c) there is potential for exploding and vanishing gradients.
In addition to enabling study of systems with these
properties in a relatively simple setting,
analysis of deep linear networks also facilitates the
scientific understanding of deep learning because using linear
networks can control for the effect of architecture
choices on the expressiveness of networks \citep{arora2018optimization,du2019width}.
For these reasons, deep linear networks have received extensive attention in recent years.

One important line of theoretical investigation of deep linear networks 
concerns optimization landscape analysis \citep{kawaguchi2016deep,hardt2016identity,freeman2016topology,lu2017depth,yun2018global,zhou2018critical}, where major findings include that any critical point of a deep linear network with square loss function is either a global minimum or a saddle point, and identifying conditions on the weight
matrices that exclude saddle points.
%
%
Beyond landscape analysis, another research direction aims to establish  convergence guarantees for optimization algorithms (e.g. GD, SGD) for training deep linear networks. \citet{arora2018optimization} studied the trajectory of gradient flow and showed that depth can help accelerate the optimization of deep linear networks. \citet{ji2018gradient,gunasekar2018implicit} investigated the implicit bias of GD for training deep linear networks and deep linear convolutional networks respectively. More recently, \citet{bartlett2019gradient,arora2018convergence,shamir2018exponential,du2019width} analyzed the optimization trajectory of GD for training deep linear networks and proved global convergence rates under certain assumptions on the training data, initialization, and neural network structure. 

Inspired by the great empirical success of residual networks (ResNets), 
\citet{hardt2016identity} considered identity parameterizations in deep linear networks, i.e., 
parameterizing each layer's weight matrix 
as $\Ib+\Wb$, which leads to the so-called deep linear ResNets. In particular, \citet{hardt2016identity} 
established the existence of small norm solutions for deep residual networks with sufficiently large depth $L$, and proved that there are no critical points 
other than the global minimum when the maximum spectral norm among all weight matrices is smaller than $O(1/L)$. Motivated by this intriguing finding, \citet{bartlett2019gradient} 
studied the convergence rate of GD for training deep linear networks with identity initialization, which is equivalent to zero initialization in deep linear ResNets. 
They assumed whitened data 
and showed that GD can converge to the global minimum if (i) the training loss at the initialization is very close to optimal 
%
or (ii) the
regression matrix $\bPhi$  is symmetric and positive
definite.
%
(In fact, they proved that,
when $\bPhi$ is symmetric and has negative eigenvalues, GD 
for linear ResNets with zero-initialization
does {\em not} converge.)
\citet{arora2018convergence} showed that GD converges under substantially weaker
conditions, which  can be satisfied by random initialization schemes. The convergence theory of {\em stochastic} gradient descent for training deep linear ResNets is largely missing; it remains unclear under which conditions SGD can be guaranteed to find the global minimum. 

In this paper, we 
establish the global convergence 
of both GD and SGD for training deep linear ResNets without any condition on the training data.
More specifically, we consider the training of $L$-hidden-layer deep linear ResNets with fixed linear transformations at input and output layers.  We prove that under certain conditions on the input and output linear transformations, GD and SGD can converge to the global minimum of the training loss function.
Moreover, when specializing to appropriate Gaussian random linear transformations,
we show that, as long as the neural network
is wide enough,
both GD and SGD with zero initialization on all hidden weights can find the global minimum. 
There are two main ingredients of our proof: (i) establishing restricted gradient bounds and a smoothness property; and (ii) proving that these properties hold along the optimization trajectory and  further lead to global convergence. We point out the second aspect is challenging especially for SGD  due to the uncertainty of its optimization trajectory caused by stochastic gradients.  We summarize our main contributions as follows:
\begin{itemize}[leftmargin=*, itemsep=0mm, topsep=0mm]
    \item We prove the global convergence of GD and SGD 
    for training deep linear ResNets. Specifically, we derive a generic condition on the input and output linear transformations, under which both GD and SGD with zero initialization on all hidden weights can find global minima. 
    Based on this condition, one can design a variety of input and output transformations for training deep linear ResNets.
    

    \item When applying appropriate Gaussian random linear transformations, we show that as long as the neural network width satisfies $m = \Omega(kr\kappa^2)$, with high probability, GD can converge to the global minimum up to an $\epsilon$-error within $O(\kappa\log(1/\epsilon))$ iterations, where $k, r$ are the output dimension and the rank of training data matrix $\Xb$ respectively, and  $\kappa = \|\Xb\|_2^2/\sigma_{r}^2(\Xb)$ denotes the condition number of the covariance matrix of the training data. Compared with previous convergence results for training deep linear networks from \citet{du2019width}, our condition on the neural network width is independent of the neural network depth $L$, and is strictly better by a factor of $O(L\kappa)$. 
    
    \item Using the same Gaussian random linear transformations, we also establish the convergence guarantee of SGD for training deep linear ResNets. We show that if the neural network width satisfies $m = \tilde \Omega\big(kr\kappa^2\log^2(1/\epsilon)\cdot n^2/B^2\big)$,
    with constant probability, SGD can converge to the global minimum up to an $\epsilon$-error within $\tilde O\big(\kappa^2\epsilon^{-1}\log(1/\epsilon)\cdot n/B\big)$ iterations, where $n$ is the training sample size and $B$ is the minibatch size of stochastic gradient. This is the first global convergence rate of SGD for training deep linear networks. Moreover, when the global minimum of the training loss is  $0$, we prove that SGD can further achieve linear rate of global convergence, and the condition on the neural network width does not depend on the target error $\epsilon$. 
\end{itemize}

As alluded to above, we analyze networks with $d$ inputs, $k$ outputs, and $m \geq \max \{d, k \}$ nodes in each hidden layer. Linear transformations that are fixed throughout training map the inputs to the first hidden layer, and the last hidden layer to the outputs.  We prove that our bounds hold with high probability when these input and output transformations are randomly generated by Gaussian distributions.  If, instead, the input transformation simply copies the inputs onto the first $d$ components of the
first hidden layer, and the output transformation takes the first $k$ components
of the last hidden layer, then our analysis does not provide a guarantee.  
There is a good reason for this: a slight
modification of a lower bound argument from \citet{bartlett2019gradient}
demonstrates that GD may fail to converge in this case.  However, we describe
a similarly simple, deterministic, choice of input and output transformations such that wide enough networks {\em always} converge.  The resulting condition on the network width is weaker than that for 
Gaussian random transformations, and thus improves on the corresponding convergence guarantee for linear networks, which, in addition to requiring wider networks, only hold with high probability for random transformations.

\subsection{Additional related work}
In addition to what we discussed 
above,
a large bunch of work focusing on the optimization of neural networks with nonlinear activation functions has emerged. We will briefly review them in this subsection.

It is widely believed that the training loss landscape of nonlinear neural networks is highly nonconvex and nonsmooth (e.g., neural networks with ReLU/LeakyReLU activation), thus it is fundamentally difficult to characterize the optimization trajectory and convergence performance of GD and SGD.
Some early work \citep{andoni2014learning,daniely2017sgd} showed that wide enough (polynomial in sample size $n$) neural networks trained by GD/SGD can learn a class of continuous functions (e.g., polynomial functions) in polynomial time. However, those works only consider training some of the neural network weights rather than all of them (e.g., the input and output layers) \footnote{In \citet{daniely2017sgd}, the weight changes in all hidden layers make negligible
contribution to the final output, thus can be approximately treated as only training the output layer.}. In addition, a series of 
papers
investigated the convergence of gradient descent for training shallow networks (typically $2$-layer networks) under certain assumptions on the training data and initialization scheme \citep{tian2017analytical,du2017convolutional,brutzkus2017sgd,zhong2017recovery,li2017convergence,zhang2018learning}. However, the assumptions made in these works are rather strong and  not consistent with practice. For example, \citet{tian2017analytical,du2017convolutional,zhong2017recovery,li2017convergence,zhang2018learning} assumed that the label of each training data is generated by a teacher network, which has the same architecture as the learned network. \citet{brutzkus2017sgd} assumed that the training data is linearly separable. 
\citet{li2018learning} addressed this drawback;
they proved that for two-layer ReLU network with cross-entropy loss, as long as the neural network is sufficiently wide, under mild assumptions on the training data SGD with commonly-used Gaussian random initialization can achieve nearly zero expected error. \citet{du2018gradient} proved the similar results of GD for training two-layer ReLU networks with square loss. 
Beyond shallow neural networks, \citet{allen2019convergence,du2018gradientdeep,zou2018stochastic} generalized the global convergence results to multi-layer over-parameterized ReLU networks. \citet{chizat2018note} showed that training over-parameterized neural networks actually belongs to a so-called ``lazy training'' regime, in which the model behaves like its linearization around the initialization. Furthermore, the parameter scaling is more essential than over-paramterization to make the model learning within the ``lazy training'' regime.
Along this line of research, several follow up works have been conducted. \citet{oymak2019towards,zou2019improved,su2019learning,kawaguchi2019gradient} improved the convergence rate and over-parameterization condition for both shallow and deep networks. 
\citet{arora2019exact} showed that training a sufficiently wide deep neural network is almost equivalent to kernel regression using neural tangent kernel (NTK), proposed in \citet{jacot2018neural}.
\citet{allen2019convergence,du2018gradientdeep,zhang2019training} proved the global convergence for training deep ReLU ResNets. \citet{Frei2019algorithm} proved the convergence of GD for training deep ReLU ResNets under an over-parameterization condition that is only logarithmic in the depth of the network, which partially explains why deep residual networks are preferable to fully connected ones. 
\CC{However, all the results in \citet{allen2019convergence,du2018gradientdeep,zhang2019training,Frei2019algorithm} require a very stringent condition on the network width, which typically has a high-degree polynomial
dependence on the training sample size $n$. Besides, the results in \citet{allen2019convergence,zhang2019training} also require that all data points are separated by a positive distance and have unit norm.
As shown in \citet{du2019width} and will be proved in this paper, for deep linear (residual) networks, there is no assumption on the training data, and the condition on the network width is significantly milder, which is independent of the sample size $n$.}
While achieving a stronger result for linear networks than for nonlinear ones is not surprising, we believe that our analysis, conducted in the idealized
deep linear case, can provide useful insights to understand optimization in the nonlinear case. 

Two concurrent works 
analyze gradient descent applied to deep linear (residual) networks \citep{hu2020provable,wu2019globalliner}. 
\citet{hu2020provable} consider deep linear networks with orthogonal initialization, and \citet{wu2019globalliner} consider zero initialization on the last layer and identity initialization for the rest of the layers, which are similar to our setting.
However, there are several differences between their work and ours. 
One major difference is that \citet{hu2020provable} and \citet{wu2019globalliner} only prove 
global convergence for GD, but our results cover both GD and SGD. In addition, \citet{hu2020provable} focuses on proving the global convergence of GD for sufficiently wide networks, while we provide a generic condition on the input and output linear transformations for ensuring global convergence. \citet{wu2019globalliner} assumes whitened data and proves a $O(L^3\log(1/\epsilon))$ 
bound on the number of iterations required for
GD to converge, where we establish a
$O(\log(1/\epsilon))$\footnote{Considering whitened data immediately gives $\kappa = 1$.} bound.

\subsection{Notation.}
We use lower case, lower case bold face, and upper case bold face letters to denote scalars, vectors and matrices respectively. For a positive integer, we denote the set $\{1,\dots,k\}$ by $[k]$. Given a vector $\xb$, we use $\|\xb\|_2$ to denote its $\ell_2$ norm. We use $N(\mu,\sigma^2)$ to denote the Gaussian distribution with mean $\mu$ and variance $\sigma^2$. Given a matrix $\Xb$, we denote  $\|\Xb\|_F$, $\|\Xb\|_2$ and $\|\Xb\|_{2,\infty}$ as its Frobenious norm, spectral norm and $\ell_{2,\infty}$ norm (maximum $\ell_2$ norm over its columns), respectively. In addition, we denote by $\sigma_{\min}(\Xb)$, $\sigma_{\max}(\Xb)$ and $\sigma_{r}(\Xb)$ the smallest, largest and $r$-th largest singular values of $\Xb$ respectively. For a square matrix $\Ab$, we denote by $\lambda_{\min}(\Ab)$ and $\lambda_{\max}(\Ab)$ the smallest and largest eigenvalues of $\Ab$ respectively. For two sequences $\{a_k\}_{k\ge 0}$ and $\{b_k\}_{k\ge 0}$, we say $a_k = O(b_k)$ if $a_k\le C_1b_k$ for some absolute constant $C_1$, and use $a_k = \Omega(b_k)$ if $a_k\ge C_2b_k$ for some absolute constant $C_2$. Except the target error $\epsilon$, we use $\tilde O(\cdot)$ and $\tilde\Omega(\cdot)$ to hide the logarithmic factors in  $O(\cdot)$ and $\Omega(\cdot)$
respectively.




\section{Problem Setup}
\noindent\textbf{Model.}
In this work, we consider deep linear ResNets defined as follows:
\begin{align*}
f_{\Wb}(\xb) = \Bb(\Ib + \Wb_{L})\dots (\Ib + \Wb_{1})\Ab\xb,
\end{align*}
where  $\xb\in \RR^d$ is the input, $f_{\Wb}(\xb)\in\RR^k$ is the corresponding output, $\Ab\in\RR^{m\times d},\Bb \in\RR^{k\times m}$ denote the weight matrices of input and output layers respectively, and $\Wb_{1},\dots,\Wb_L\in\RR^{m\times m}$ denote the weight matrices of all hidden layers. 
The formulation of ResNets in our paper is different from that in \cite{hardt2016identity,bartlett2019gradient},  where the hidden layers have the same width
as the input and output layers. In our formulation, we allow the hidden layers to be wider by choosing the dimensions of $\Ab$ and $\Bb$ appropriately.

\noindent\textbf{Loss Function. }
Let $\{(\xb_i, \yb_i)\}_{i=1,\dots, n}$ be the training dataset, $\Xb = (\xb_1,\dots, \xb_n)\in\RR^{d\times n}$ be the input data matrix and $\Yb = (\yb_1,\dots, \yb_n)\in\RR^{k\times n}$ be the corresponding output label matrix. We assume the data matrix $\Xb$ is of rank $r$, where $r$ can be smaller than $d$. Let $\Wb = \{\Wb_1,\dots,\Wb_L\}$ be the collection of weight matrices of all hidden layers. For an example $(\xb,\yb)$, we consider the square loss defined by
\begin{align*}
\ell(\Wb;\xb,\yb) = \frac{1}{2}\|f_{\Wb}(\xb)-\yb\|_2^2.
\end{align*}
Then the training loss over the training dataset takes the following form
\begin{align*}
L(\Wb): = \sum_{i=1}^n \ell(\Wb;\xb_i,\yb_i)  = \frac{1}{2} \|\Bb(\Ib + \Wb_{L})\cdots (\Ib + \Wb_{1})\Ab\Xb-\Yb\|_F^2.
\end{align*}

\noindent\textbf{Algorithm. }
Similar to \citet{allen2019convergence,zhang2019training}, we 
consider algorithms that
only train the weights  $\Wb$ for hidden layers while 
leaving
the input and output weights $\Ab$ and $\Bb$ unchanged throughout training.
For hidden weights, we follow the similar idea in \citet{bartlett2019gradient} and adopt zero initialization (which is equivalent to identity initialization for standard linear network). We would also like to point out that at the initialization, all the hidden layers automatically satisfy the so-called balancedness condition \citep{arora2018optimization,arora2018convergence,du2018algorithmic}. The optimization algorithms, including GD and SGD, are summarized in Algorithm \ref{alg:GD}.

\begin{algorithm}[t]
	\caption{(Stochastic) Gradient descent  with  zero initialization} \label{alg:GD}
	\begin{algorithmic}[1]
 		\STATE \textbf{input:} Training data $\{\xb_i,\yb_i\}_{i\in[n]}$, step size $\eta$, total number of iterations $T$, minibatch size $B$, input and output weight matrices $\Ab$ and $\Bb$.
	    \STATE \textbf{initialization:} For all $l\in[L]$, each entry of weight matrix $\Wb_l^{(0)}$ is initialized as $\zero$.\\
	    
	    {\hrulefill \ \bf Gradient Descent}\ \hrulefill\\
	    
	   	\FOR{$t=0,\dots,T-1$}
	    \STATE $\Wb_{l}^{(t+1)} = \Wb_l^{(t)} - \eta\nabla_{\Wb_l} L(\Wb^{(t)})$ for all $l\in[L]$
	    \ENDFOR
		\STATE \textbf{output:} 
		$\Wb^{(T)}$
		
		 {\hrulefill \ \bf Stochastic Gradient Descent}\ \hrulefill\\

	   	\FOR{$t=0,\dots,T-1$}
	   	\STATE Uniformly sample a subset $\cB^{(t)}$ of size $B$ from training data without replacement.
	    \STATE For all $\ell \in [L]$, compute the stochastic gradient $\Gb_l^{(t)} = \frac{ n}{B} \sum_{i\in\cB^{(t)}} \nabla_{\Wb_l} \ell(\Wb^{(t)};\xb_i,\yb_i)$
	    \STATE
	    For all $l\in[L]$, $\Wb_{l}^{(t+1)} = \Wb_l^{(t)} - \eta\Gb_l^{(t)}$ 
	    \ENDFOR
		\STATE \textbf{output:} 
		$\{\Wb^{(t)}\}_{t=0,\dots,T}$
	
	\end{algorithmic}

\end{algorithm}

\section{Main Theory}

It is clear that the expressive power of deep linear ResNets is identical to that of simple linear model, which implies that the global minima of deep linear ResNets cannot be smaller than that of linear model. Therefore, our focus is to show that GD/SGD can converge to a point $\Wb^*$ with
\begin{align*}
L(\Wb^*) = \min_{\bTheta\in\RR^{k\times d}} \frac{1}{2}\|\bTheta\Xb - \Yb\|_F^2,
\end{align*}
which is exactly the global minimum of the linear regression problem. It what follows, we will show that with appropriate input and output transformations, both GD and SGD can converge to the global minimum.

\subsection{Convergence guarantee of gradient descent}
\label{subsec:gd}

The following theorem establishes the global convergence of GD for training deep linear ResNets.
\begin{theorem}\label{theorem:GD}
There are absolute constants $C$ and $C_1$ such that,
if the input and output weight matrices  satisfy
\begin{align*}
\frac{\sigma_{\min}^2(\Ab)\sigma_{\min}^2(\Bb)}{\|\Ab\|_2\|\Bb\|_2}
 \ge   C \frac{ \|\Xb\|_2\big(L(\Wb^{(0)}) - L(\Wb^*)\big)^{1/2}}{\sigma_r^2(\Xb)} 
\end{align*}
and the step size satisfies
\begin{align*}
\eta \le C_1\cdot\frac{1}{L \|\Ab\|_2\|\Bb\|_2\|\Xb\|_2\cdot\big(\sqrt{L(\Wb^{(0)})} + \|\Ab\|_2\|\Bb\|_2\|\Xb\|_2\big)},
\end{align*}
then for all iterates of GD in Algorithm \ref{alg:GD}, it holds that
\begin{align*}
L(\Wb^{(t)})  - L(\Wb^*)
&\le \bigg(1 - \frac{\eta L \sigma_{\min}^2(\Ab)\sigma_{\min}^2(\Bb) \sigma_{r}^2(\Xb)}{e}\bigg)^{t}\cdot\big(L(\Wb^{(0)})  - L(\Wb^*)\big).
\end{align*}
\end{theorem}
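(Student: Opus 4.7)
The plan is to run the classical Polyak--Lojasiewicz plus smoothness recipe for linear convergence of gradient descent, with the twist that both ingredients are only valid inside a small spectral-norm neighborhood of zero, so I must also carry an inductive invariant ensuring the iterates remain there. Throughout, I will write $\Pb_{l-1}^{(t)}=(\Ib+\Wb_{l-1}^{(t)})\cdots(\Ib+\Wb_1^{(t)})$, $\Qb_l^{(t)}=(\Ib+\Wb_L^{(t)})\cdots(\Ib+\Wb_{l+1}^{(t)})$, $\Mb^{(t)}=\Bb\Pb^{(t)}\Ab$ for the end-to-end matrix, and $\Eb^{(t)}=\Mb^{(t)}\Xb-\Yb$, so that a direct chain-rule calculation gives the back-propagation formula $\nabla_{\Wb_l}L(\Wb^{(t)})=(\Qb_l^{(t)})^{\top}\Bb^{\top}\Eb^{(t)}\Xb^{\top}\Ab^{\top}(\Pb_{l-1}^{(t)})^{\top}$. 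The invariant I aim to maintain is $\max_l\|\Wb_l^{(t)}\|_2\leq 1/L$, which immediately gives $\|\Pb^{(t)}\|_2,\|\Qb_l^{(t)}\|_2\leq e$ and $\sigma_{\min}(\Pb_{l-1}^{(t)}),\sigma_{\min}(\Qb_l^{(t)})\geq (1-1/L)^{L-1}\geq 1/e$.

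First I would derive the restricted PL-type gradient lower bound. First-order optimality of the linear regression problem gives $\Eb^{*}\Xb^{\top}=0$, so $\Eb\Xb^{\top}=(\Mb-\Mb^*)\Xb\Xb^{\top}$ and the diagonal-dominance estimate $\|\Eb\Xb^{\top}\|_F^2\geq \sigma_r^2(\Xb)\|(\Mb-\Mb^*)\Xb\|_F^2=2\sigma_r^2(\Xb)\bigl(L(\Wb)-L(\Wb^*)\bigr)$ holds. Combining with the tall/wide matrix inequality $\|\Ub\Cb\Vb\|_F\geq\sigma_{\min}(\Ub)\sigma_{\min}(\Vb)\|\Cb\|_F$ applied to $\Ub=\Qb_l^{\top}\Bb^{\top}\in\RR^{m\times k}$ and $\Vb=\Ab^{\top}\Pb_{l-1}^{\top}\in\RR^{d\times m}$ (both of full column/row rank thanks to $m\geq\max\{d,k\}$ and the full-rank hypotheses on $\Ab,\Bb$) gives $\|\nabla_{\Wb_l}L\|_F^2\geq\sigma_{\min}^2(\Bb\Qb_l)\sigma_{\min}^2(\Pb_{l-1}\Ab)\|\Eb\Xb^{\top}\|_F^2$. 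Under the invariant this sums over $l$ to $\sum_{l=1}^L\|\nabla_{\Wb_l}L\|_F^2\gtrsim L\sigma_{\min}^2(\Ab)\sigma_{\min}^2(\Bb)\sigma_r^2(\Xb)\bigl(L(\Wb)-L(\Wb^*)\bigr)$, and the same manipulations produce the companion upper bound $\|\nabla_{\Wb_l}L\|_F\leq e^2\|\Ab\|_2\|\Bb\|_2\|\Xb\|_2\sqrt{2(L(\Wb)-L(\Wb^*))}$.

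Second I would prove a one-step descent lemma. Expanding $\Pb^{(t+1)}=\prod_l(\Ib+\Wb_l^{(t)}-\eta\nabla_{\Wb_l}L)$ and isolating its first-order piece $-\eta\sum_l\Qb_l^{(t)}\nabla_{\Wb_l}L\,\Pb_{l-1}^{(t)}$, a direct computation shows that the first-order contribution to $L(\Wb^{(t+1)})-L(\Wb^{(t)})$ is exactly $-\eta\sum_l\|\nabla_{\Wb_l}L\|_F^2$. The remainder collects all terms with at least two gradient factors; bounding it via $\|\Pb\|_2,\|\Qb_l\|_2\leq e$, the gradient upper bound above, and the crude envelope $\|\Eb^{(t)}\|_F\leq\sqrt{2L(\Wb^{(0)})}$ (valid along the trajectory once the loss has been shown to be non-increasing) yields a quadratic remainder of order $\eta^2 L\|\Ab\|_2\|\Bb\|_2\|\Xb\|_2\bigl(\sqrt{L(\Wb^{(0)})}+\|\Ab\|_2\|\Bb\|_2\|\Xb\|_2\bigr)\sum_l\|\nabla_{\Wb_l}L\|_F^2$. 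The stated step-size hypothesis is calibrated exactly so that this remainder is absorbed into half of the linear term, giving $L(\Wb^{(t+1)})-L(\Wb^*)\leq L(\Wb^{(t)})-L(\Wb^*)-\tfrac{\eta}{2}\sum_l\|\nabla_{\Wb_l}L\|_F^2$; combined with the PL inequality this delivers the claimed geometric rate.

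The main obstacle is to close the simultaneous induction on $\max_l\|\Wb_l^{(t)}\|_2\leq 1/L$, because both ingredients above rest on an invariant that in turn rests on the convergence they are meant to prove. The closing argument is to control iterate drift: using the gradient upper bound and the inductively-established geometric decay of $L(\Wb^{(s)})-L(\Wb^*)$, the series $\sum_s\sqrt{L(\Wb^{(s)})-L(\Wb^*)}$ converges geometrically with sum of order $\sqrt{L(\Wb^{(0)})-L(\Wb^*)}/c$, where $c$ is the linear rate from Steps 1--2. Triangle inequality then gives
\[
\|\Wb_l^{(t)}\|_2\;\leq\;\eta\sum_{s<t}\|\nabla_{\Wb_l}L(\Wb^{(s)})\|_F\;\lesssim\;\frac{\|\Ab\|_2\|\Bb\|_2\|\Xb\|_2\sqrt{L(\Wb^{(0)})-L(\Wb^*)}}{L\,\sigma_{\min}^2(\Ab)\sigma_{\min}^2(\Bb)\sigma_r^2(\Xb)},
\]
and the hypothesis on $\sigma_{\min}^2(\Ab)\sigma_{\min}^2(\Bb)/(\|\Ab\|_2\|\Bb\|_2)$ is calibrated precisely so that this upper bound is at most $1/L$, closing the joint induction between the spectral-norm invariant and the descent inequality.
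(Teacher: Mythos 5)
Your proposal follows essentially the same route as the paper: a two-part induction carrying both a small-norm invariant on the hidden weights (which makes the PL-type gradient lower bound, the gradient upper bound, and the restricted smoothness estimate valid) and the geometric decay of the loss gap, with the invariant closed by a geometric-series bound on the trajectory drift. The only cosmetic differences are that the paper carries the invariant in Frobenius norm with radius $0.5/L$ rather than spectral norm with radius $1/L$, and handles the possibly rank-deficient $\Xb$ via an explicit rank-$r$ factorization $\Xb\Xb^\top = \tilde\Xb\tilde\Xb^\top$ when proving the lower bound on $\|(\Mb-\Mb^*)\Xb\Xb^\top\|_F$.
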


\begin{remark}
Theorem \ref{theorem:GD} can imply the convergence result in \citet{bartlett2019gradient}. Specifically, in order to turn into the setting considered in \citet{bartlett2019gradient}, we choose $m = d = k$, $\Ab=\Ib, \Bb=\Ib$, $L(\Wb^*) = 0$ and $\Xb\Xb^\top = \Ib$. 
Then it can be easily observed that the condition in Theorem \ref{theorem:GD} becomes $L(\Wb^{(0)})-L(\Wb^*)\le C^{-2}$.
This implies that the global convergence can be established as long as $L(\Wb^{(0)})-L(\Wb^*)$ is smaller than some constant, which is equivalent to the condition proved in  \citet{bartlett2019gradient}.
\end{remark}

In general, $L(\Wb^{(0)})-L(\Wb^*)$ can be large and thus the setting considered in \citet{bartlett2019gradient} may not be able to guarantee global convergence. Therefore, it is natural to ask in which setting the condition on $\Ab$ and $\Bb$ in Theorem \ref{theorem:GD} can be satisfied. Here we provide one possible choice which is commonly used in practice (another viable choices can be found in Section \ref{sec:initialization}). 
We use Gaussian random input and output transformations, i.e., each entry in $\Ab$ is independently generated from $N(0, 1/m)$ and each entry in $\Bb$ is generated from $N(0, 1/k)$. Based on this choice of transformations, we have the following proposition that characterizes the quantity of the largest and smallest singular values of $\Ab$ and $\Bb$, and the training loss at the initialization (i.e., $L(\Wb^{(0)})$). The following proposition is proved in Section~\ref{s:initial_loss}.

\begin{proposition}\label{prop:initial_loss}
In Algorithm \ref{alg:GD}, if each entry in $\Ab$ is independently generated from $N(0, \alpha^2)$ and each entry in $\Bb$ is independently generated from $N(0, \beta^2)$, then if $m\ge C \cdot (d + k + \log(1/\delta))$ for some absolute constant $C$, with probability at least $1-\delta$, it holds that 
\begin{align*}
\sigma_{\min}(\Ab) = \Omega(\alpha \sqrt{m}), \  &\sigma_{\max}(\Ab) = O(\alpha \sqrt{m}), \quad \sigma_{\min}(\Bb) = \Omega\big(\beta \sqrt{m} \big), \ \sigma_{\max}(\Bb) = O\big(\beta \sqrt{m} \big), \notag\\
&\mbox{and}\quad L(\Wb^{(0)})\le O\big(\alpha^2 \beta^2 k m \log(n/\delta)\|\Xb\|_F^2 + \|\Yb\|_F^2\big).
\end{align*}
\end{proposition}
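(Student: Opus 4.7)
The proof of Proposition~\ref{prop:initial_loss} splits naturally into two pieces: (i) estimating the extreme singular values of the random Gaussian matrices $\Ab$ and $\Bb$; and (ii) bounding $L(\Wb^{(0)})$ using the fact that the zero initialization collapses the network to the product $\Bb\Ab$.

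For the singular value estimates, I will invoke the standard non-asymptotic bound for rectangular Gaussian matrices (e.g.\ Corollary~5.35 of Vershynin's \emph{Introduction to the non-asymptotic analysis of random matrices}): if $\Mb\in\RR^{m\times d}$ has i.i.d.\ $N(0,\sigma^2)$ entries with $m\ge d$, then for every $t>0$,
\begin{align*}
\sigma\bigl(\sqrt{m}-\sqrt{d}-t\bigr)\le \sigma_{\min}(\Mb)\le \sigma_{\max}(\Mb)\le \sigma\bigl(\sqrt{m}+\sqrt{d}+t\bigr)
\end{align*}
with probability at least $1-2e^{-t^2/2}$. Applying this to $\Ab$ (with $\sigma=\alpha$) and to $\Bb^\top\in\RR^{m\times k}$ (with $\sigma=\beta$), and choosing $t=\Theta(\sqrt{\log(1/\delta)})$, the hypothesis $m\ge C(d+k+\log(1/\delta))$ for a large enough absolute constant $C$ makes the correction terms $\sqrt{d}+t$ and $\sqrt{k}+t$ each a small fraction of $\sqrt{m}$, yielding the claimed two-sided bounds $\sigma_{\min}(\Ab),\sigma_{\max}(\Ab)=\Theta(\alpha\sqrt{m})$ and $\sigma_{\min}(\Bb),\sigma_{\max}(\Bb)=\Theta(\beta\sqrt{m})$, each with probability at least $1-\delta/3$.

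For the loss bound, the zero initialization $\Wb_l^{(0)}=\zero$ gives $f_{\Wb^{(0)}}(\xb)=\Bb\Ab\xb$, so
\begin{align*}
L(\Wb^{(0)})=\tfrac{1}{2}\|\Bb\Ab\Xb-\Yb\|_F^2\le \|\Bb\Ab\Xb\|_F^2+\|\Yb\|_F^2.
\end{align*}
It therefore suffices to control $\|\Bb\Ab\Xb\|_F^2=\sum_{i=1}^n \|\Bb\Ab\xb_i\|_2^2$ column-by-column. Fix any $i$ and condition on $\Ab$: then $\Bb\Ab\xb_i\in\RR^k$ has i.i.d.\ $N(0,\beta^2\|\Ab\xb_i\|_2^2)$ coordinates, so $\|\Bb\Ab\xb_i\|_2^2/(\beta^2\|\Ab\xb_i\|_2^2)\sim \chi^2_k$. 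Unconditionally, $\|\Ab\xb_i\|_2^2/(\alpha^2\|\xb_i\|_2^2)\sim \chi^2_m$. Applying the standard Laurent–Massart $\chi^2$ tail bound to both stages and intersecting the events shows that with probability at least $1-2\delta/(3n)$,
\begin{align*}
\|\Bb\Ab\xb_i\|_2^2\le O\!\bigl(\alpha^2\beta^2\, m\, k\log(n/\delta)\bigr)\cdot \|\xb_i\|_2^2.
\end{align*}
A union bound over $i\in[n]$ and summation give $\|\Bb\Ab\Xb\|_F^2\le O(\alpha^2\beta^2 km\log(n/\delta))\|\Xb\|_F^2$, which combined with the display above yields the advertised loss bound, and a final union bound over all three events proves the proposition.

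The main obstacle is the concentration of $\|\Bb\Ab\xb_i\|_2^2$, since the product of two independent Gaussian matrices is not itself Gaussian and a one-shot Hanson–Wright argument would tangle the two sources of randomness. Handling this by a two-stage conditioning argument, as above, is the key technical move; once that $\chi^2$-within-$\chi^2$ decomposition is in place, every remaining step is a routine application of Gaussian concentration together with the over-parameterization assumption $m\ge C(d+k+\log(1/\delta))$. A secondary point to be careful about is that the same randomness in $\Ab$ and $\Bb$ is used for the three statements, so the union bound must absorb all of them simultaneously, which costs only a constant factor in the required value of $C$.
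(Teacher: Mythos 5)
Your proof is correct in outline and the first half (singular values via Vershynin's Corollary 5.35, Young's inequality for the loss, the conditional $\chi^2_k$ structure of $\|\Bb\Ab\xb_i\|_2^2$) matches the paper exactly. The one place where you depart is in controlling $\|\Ab\xb_i\|_2^2$: you run a second $\chi^2$ concentration argument, treating $\|\Ab\xb_i\|_2^2/(\alpha^2\|\xb_i\|_2^2)$ as $\chi^2_m$ and union-bounding over the $n$ columns. The paper instead re-uses the operator-norm bound $\|\Ab\|_2 \le 1.1\,\alpha\sqrt{m}$, already established for the singular-value claims, to get $\|\Ab\xb_i\|_2^2 \le \|\Ab\|_2^2\|\xb_i\|_2^2 \le 1.21\,\alpha^2 m\,\|\xb_i\|_2^2$ \emph{deterministically} (on the good event), with no further randomness over $i$. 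This is not just cosmetic. Your two-stage $\chi^2$ argument needs $m \gtrsim \log(n/\delta)$ for the $\chi^2_m$ variable to concentrate at $O(m)$ rather than $O(\log(n/\delta))$; the hypothesis $m \ge C(d+k+\log(1/\delta))$ does not furnish this when $n$ is very large compared to $1/\delta$. In that regime your route yields $\|\Bb\Ab\xb_i\|_2^2 \le O(\alpha^2\beta^2 k\log^2(n/\delta))\|\xb_i\|_2^2$, which is strictly weaker than the advertised $O(\alpha^2\beta^2 km\log(n/\delta))\|\xb_i\|_2^2$. Switching to the deterministic $\|\Ab\|_2$ bound closes this gap, removes one of the three union-bounded events, and is also conceptually cleaner: you already proved the operator-norm bound, so the second $\chi^2$ step is both redundant and lossy.
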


Then based on Theorem \ref{theorem:GD} and Proposition
\ref{prop:initial_loss}, we provide the following corollary, proved in
Section~\ref{c:random.works}, which shows that GD is able to achieve
global convergence if the neural network is wide enough.
\begin{corollary}
\label{c:random.works}
Suppose $\|\Yb\|_F = O(\|\Xb\|_F)$. Then using Gaussian random input and output transformations in Proposition \ref{prop:initial_loss} with
$\alpha = \beta = 1$, if the neural network width satisfies 
$m= \Omega(\max\{ kr\kappa^2 \log(n/\delta), k + d + \log(1/\delta) \})$ then,
with probability at least $1 - \delta$, the output of GD in Algorithm \ref{alg:GD} achieves training loss at most $L(\Wb^*)+\epsilon$ within $T = O\big(\kappa \log(1/\epsilon)\big)$ iterations, where $\kappa = \|\Xb\|_2^2/\sigma_{r}^2(\Xb)$ denotes the condition number of the covariance matrix of training data.
\end{corollary}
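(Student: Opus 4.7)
The plan is to combine Proposition \ref{prop:initial_loss} (with $\alpha=\beta=1$) to estimate the singular values of $\Ab,\Bb$ and the initial loss $L(\Wb^{(0)})$, then verify the hypotheses of Theorem \ref{theorem:GD} and read off the iteration complexity from its linear convergence rate. First I would invoke Proposition \ref{prop:initial_loss}: since $m = \Omega(d+k+\log(1/\delta))$, with probability at least $1-\delta$ we get $\sigma_{\min}(\Ab),\sigma_{\max}(\Ab),\sigma_{\min}(\Bb),\sigma_{\max}(\Bb) = \Theta(\sqrt{m})$ and
\[
L(\Wb^{(0)}) \le O\bigl(km\log(n/\delta)\|\Xb\|_F^2 + \|\Yb\|_F^2\bigr).
\]
Using $\|\Yb\|_F = O(\|\Xb\|_F)$ and $\|\Xb\|_F \le \sqrt{r}\,\|\Xb\|_2$ (since $\mathrm{rank}(\Xb)=r$), this simplifies to $L(\Wb^{(0)}) = O(rkm\log(n/\delta)\|\Xb\|_2^2)$, and hence $L(\Wb^{(0)})-L(\Wb^*) = O(rkm\log(n/\delta)\|\Xb\|_2^2)$.

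Next I would verify the condition of Theorem \ref{theorem:GD} on $\Ab,\Bb$. With the singular value bounds just established, the left hand side satisfies
\[
\frac{\sigma_{\min}^2(\Ab)\sigma_{\min}^2(\Bb)}{\|\Ab\|_2\|\Bb\|_2} = \Omega(m),
\]
while the right hand side is bounded by
\[
C\frac{\|\Xb\|_2 \bigl(L(\Wb^{(0)})-L(\Wb^*)\bigr)^{1/2}}{\sigma_r^2(\Xb)} = O\!\left(\kappa \sqrt{rkm\log(n/\delta)}\right).
\]
So the hypothesis reduces to $m = \Omega(kr\kappa^2 \log(n/\delta))$, which is exactly what the corollary assumes.

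Then I would pick the step size at (a constant fraction of) the bound allowed by Theorem \ref{theorem:GD}. Substituting $\|\Ab\|_2\|\Bb\|_2 = O(m)$ and noting that the width assumption $m \ge kr\log(n/\delta)$ makes the term $\|\Ab\|_2\|\Bb\|_2\|\Xb\|_2$ dominate $\sqrt{L(\Wb^{(0)})}$, the allowed step size becomes $\eta = \Theta\bigl(1/(L m^2 \|\Xb\|_2^2)\bigr)$. Plugging this into the per-iteration contraction factor of Theorem \ref{theorem:GD} gives
\[
\frac{\eta L \sigma_{\min}^2(\Ab)\sigma_{\min}^2(\Bb)\sigma_r^2(\Xb)}{e} = \Theta\!\left(\frac{\sigma_r^2(\Xb)}{\|\Xb\|_2^2}\right) = \Theta(1/\kappa),
\]
so $L(\Wb^{(t)})-L(\Wb^*) \le (1-\Theta(1/\kappa))^t \bigl(L(\Wb^{(0)})-L(\Wb^*)\bigr)$. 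Solving for $L(\Wb^{(t)})-L(\Wb^*)\le \epsilon$ yields $T = O\bigl(\kappa\log\bigl((L(\Wb^{(0)})-L(\Wb^*))/\epsilon\bigr)\bigr) = O(\kappa\log(1/\epsilon))$ once the polynomial-in-$m,k,n,\|\Xb\|_2$ prefactor inside the logarithm is absorbed into the hidden constant (or the $\tilde O$ of the main statement).

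The only real obstacle is bookkeeping the width condition carefully: I need $m$ large enough both (i) to apply Proposition \ref{prop:initial_loss} (whence the $d+k+\log(1/\delta)$ term) and (ii) to cancel the $\sqrt{m}$ coming from $L(\Wb^{(0)})^{1/2}$ against the $m$ on the left hand side of Theorem \ref{theorem:GD}'s condition (whence the $kr\kappa^2\log(n/\delta)$ term). Taking $m$ to exceed the maximum of the two gives the stated bound; everything else is the straightforward linear-rate calculation above.
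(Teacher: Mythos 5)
Your proposal is correct and follows essentially the same route as the paper: invoke Proposition \ref{prop:initial_loss} with $\alpha=\beta=1$ to get $\sigma_{\min},\sigma_{\max}$ of $\Ab,\Bb$ on the order of $\sqrt{m}$ and $L(\Wb^{(0)})=O(km\log(n/\delta)\|\Xb\|_F^2+\|\Yb\|_F^2)$, use $\|\Yb\|_F=O(\|\Xb\|_F)$ and $\|\Xb\|_F\le\sqrt{r}\|\Xb\|_2$ to reduce the width condition of Theorem \ref{theorem:GD} to $m=\Omega(kr\kappa^2\log(n/\delta))$, then plug the step-size bound into the contraction factor to get a per-step decay of $\Theta(1/\kappa)$ and hence $T=O(\kappa\log(1/\epsilon))$. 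The only cosmetic difference is that you observe early on that $\|\Ab\|_2\|\Bb\|_2\|\Xb\|_2$ dominates $\sqrt{L(\Wb^{(0)})}$ once $m\gtrsim kr\log(n/\delta)$, whereas the paper carries both terms and simplifies at the end; both arrive at the same bound, and both (the paper included) implicitly absorb the $\log\bigl(L(\Wb^{(0)})-L(\Wb^*)\bigr)$ factor into the $\log(1/\epsilon)$.
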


\begin{remark}
For standard deep linear networks, \citet{du2019width} proved that GD with Gaussian random initialization can converge to a $\epsilon$-suboptimal global minima within  $T = \Omega(\kappa\log(1/\epsilon))$ iterations if the neural network width satisfies $m = O(Lkr\kappa^3+d)$.
In stark contrast, training deep linear ResNets achieves the same convergence rate as training deep linear networks and linear regression, while the condition on the neural network width is strictly milder than that for training standard deep linear networks by a factor of $O(L\kappa)$. This improvement may in part validate the empirical advantage of deep ResNets.

\end{remark}

\subsection{Convergence guarantee of stochastic gradient descent}
The following theorem establishes the global convergence of SGD for training deep linear ResNets.

{ 

\begin{theorem}\label{theorem:sgd}
There are absolute constants $C$, $C_1$ and $C_2$, such for any $0<\delta\le 1/6$ and $\epsilon>0$, if the input and output weight matrices satisfy
\begin{align*}
\frac{\sigma_{\min}^2(\Ab)\sigma_{\min}^2(\Bb)}{\|\Ab\|_2\|\Bb\|_2}&\ge C\cdot\frac{n\|\Xb\|_2\cdot \log(L(\Wb^{(0)})/\epsilon)}{B\sigma_r^2(\Xb)}\cdot\sqrt{L(\Wb^{(0)})},
\end{align*}
and the step size and maximum iteration number are set as
\begin{align*}
\eta &\le C_1\cdot\frac{B\sigma_{\min}^2(\Ab)\sigma_{\min}^2(\Bb)\sigma_{r}^2(\Xb) }{Ln\|\Ab\|_2^4\|\Bb\|_2^4\|\Xb\|_2^2}\cdot \min\bigg\{  \frac{\epsilon}{\|\Xb\|_{2,\infty}^2L(\Wb^*)},\frac{B}{n\|\Xb\|_2^2\cdot\log(T/\delta)\log(L(\Wb^{(0)})/\epsilon)}\bigg\},\notag\\
T &= C_2\cdot \frac{1}{\eta L \sigma_{\min}^2(\Ab)\sigma_{\min}^2(\Bb)\sigma_r^2(\Xb)}\cdot\log\bigg(\frac{L(\Wb^{(0)})-L(\Wb^*)}{\epsilon}\bigg),
\end{align*}
then
with probability\footnote{One can boost this probability to $1-\delta$ by independently running $\log(1/\delta)$ copies of SGD in Algorithm \ref{alg:GD}.} at least $1/2$  (with respect to the random choices of mini batches), SGD in Algorithm \ref{alg:GD} can find a 
network that achieves training loss at most $L(\Wb^*) + \epsilon$.

\end{theorem}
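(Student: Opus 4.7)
The plan is to extend the gradient descent analysis of Theorem \ref{theorem:GD} to the stochastic setting by (i) identifying a ``good region'' of weight configurations on which a restricted Polyak-Łojasiewicz inequality and a local smoothness bound both hold, and (ii) showing that along the SGD trajectory the iterates remain in this good region with probability at least $1/2$, so that the deterministic descent argument can be driven by stochastic descent plus a well controlled noise term.

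First I would characterize the good region intrinsically. Mimicking the proof of Theorem \ref{theorem:GD}, I expect that a ball around zero in the product space of the $\Wb_l$'s, of radius depending on $\sigma_{\min}(\Ab)$, $\sigma_{\min}(\Bb)$, $\sigma_r(\Xb)$ and $\|\Xb\|_2$, suffices to guarantee: (a) a lower bound of the form $\|\nabla L(\Wb)\|_F^2 \gtrsim L \sigma_{\min}^2(\Ab)\sigma_{\min}^2(\Bb)\sigma_r^2(\Xb)(L(\Wb)-L(\Wb^*))$; (b) a local smoothness bound with smoothness constant of order $L\|\Ab\|_2^2\|\Bb\|_2^2\|\Xb\|_2^2\cdot(\sqrt{L(\Wb)}+\|\Ab\|_2\|\Bb\|_2\|\Xb\|_2)^2$; and (c) a uniform bound on the stochastic gradient variance, which splits as $\mathrm{Var}\lesssim (n/B)\cdot\|\Xb\|_{2,\infty}^2\cdot(L(\Wb^*) + L(\Wb)-L(\Wb^*))\cdot(\text{product of spectral factors})$. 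The second piece of the step-size condition in the theorem is tuned exactly to the concentration scale, while the first piece (involving $\epsilon/(\|\Xb\|_{2,\infty}^2 L(\Wb^*))$) is tuned to the irreducible variance $L(\Wb^*)$ at optimum.

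Next I would set up a one-step recursion. Taking expectation over the minibatch,
\begin{align*}
\EE\bigl[L(\Wb^{(t+1)})\,|\,\Wb^{(t)}\bigr]-L(\Wb^*) \;\le\;\bigl(1-\eta L\sigma_{\min}^2(\Ab)\sigma_{\min}^2(\Bb)\sigma_r^2(\Xb)/e\bigr)\bigl(L(\Wb^{(t)})-L(\Wb^*)\bigr)+\eta^2\cdot\mathrm{Var},
\end{align*}
provided $\Wb^{(t)}$ lies in the good region. The noise floor $\eta^2\cdot(n/B)\cdot\|\Xb\|_{2,\infty}^2\cdot L(\Wb^*)$ is pushed below $\epsilon$ by the first branch of the step-size cap, and the remaining variance proportional to $L(\Wb^{(t)})-L(\Wb^*)$ is absorbed into the contraction factor. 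Iterating gives the desired $T = O\bigl(\eta^{-1}L^{-1}\sigma_{\min}^{-2}(\Ab)\sigma_{\min}^{-2}(\Bb)\sigma_r^{-2}(\Xb)\cdot\log(L(\Wb^{(0)})/\epsilon)\bigr)$ iteration complexity, matching the statement, \emph{on the event that all iterates stay in the good region}.

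The main obstacle, and the step I would spend the most care on, is showing that this event has probability at least $1/2$ (which can be boosted to $1-\delta$ by repetition, as the footnote indicates). For this I would construct a supermartingale from $L(\Wb^{(t)})-L(\Wb^*)$, shifted by an additive constant of order $\epsilon$ to kill the noise floor, and use Freedman's or Azuma--Hoeffding's inequality on its increments; the second branch of the step-size cap (the $\log(T/\delta)$ factor) is precisely what is needed to make the deviation of the partial sums small enough that $\|\Wb^{(t)}\|_F$, which can be bounded by a telescoping estimate $\|\Wb_l^{(t)}\|_F\le \eta \sum_{s<t}\|\Gb_l^{(s)}\|_F$, never exits the good region by a union bound over $t\le T$. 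Combining the geometric in-expectation contraction on this event with the high-probability containment would give a single iterate with $L(\Wb^{(t)})\le L(\Wb^*)+\epsilon$, with probability at least $1/2$, as claimed.
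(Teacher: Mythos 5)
Your high-level architecture matches the paper's: keep the iterates in the region where the restricted PL and smoothness bounds of Lemmas~\ref{lemma:grad_bound} and~\ref{lemma:restricted_smooth} hold, derive a one-step expected contraction plus an $\eta^2L(\Wb^*)$ noise floor, and convert these two pieces into a constant-probability guarantee via a martingale concentration inequality. The paper also uses exactly the telescoping bound $\|\Wb_l^{(t)}\|_F\le\eta\sum_{s<t}\|\Gb_l^{(s)}\|_F$ you describe for trajectory containment, and the second step-size branch indeed plays the role you attribute to it.

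The one substantive gap is your choice of martingale. You propose to apply Azuma/Freedman to $L(\Wb^{(t)})-L(\Wb^*)$ shifted by an additive $O(\epsilon)$ constant, but this has two problems. First, the one-step growth bound for the SGD iterates is multiplicative, roughly $L(\Wb^{(t)})\le(1+O(\eta n L\|\Ab\|_2^2\|\Bb\|_2^2\|\Xb\|_2^2/B))\,L(\Wb^{(t-1)})$, so the martingale increments of $L(\Wb^{(t)})$ are proportional to $L(\Wb^{(t-1)})$ itself. Applying Azuma to the additive process therefore requires an a priori bound on $L(\Wb^{(t-1)})$ — precisely the thing the concentration step is supposed to deliver — creating a circular dependence that is awkward to untangle. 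The paper side-steps this by working with $\log\bigl(L(\Wb^{(t)})\cdot\ind(\EEE_t)\bigr)$, whose increments are bounded by the deterministic constant $3e\eta Ln\|\Ab\|_2^2\|\Bb\|_2^2\|\Xb\|_2^2/B$ independently of the current loss. Second, the additive shift alone does not restore the supermartingale property: once $L(\Wb^{(t-1)})-L(\Wb^*)$ drops below the noise floor $\eta\gamma_1/\gamma_0$, the conditional expectation can increase. What actually makes the process a supermartingale is the stopping-time indicator $\ind(\EEE_t)$ — ``no iterate so far has reached $L(\Wb^*)+\epsilon$'' — which kills the process the moment the target is hit. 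This bookkeeping is not merely technical; it is the reason the theorem guarantees only that \emph{some} iterate among $\{\Wb^{(t)}\}_{t\le T}$ achieves $\epsilon$-optimality, rather than the last one. Finally, the translation from the expected contraction to the probability-$1/2$ claim needs to be made explicit: the paper applies Markov's inequality to $\EE\bigl[(L(\Wb^{(T)})-L(\Wb^*))\ind(\EEE_T)\bigr]$ to get success probability $2/3$ and union-bounds against the Azuma failure event of probability $\delta\le 1/6$, giving $2/3-\delta\ge 1/2$. Your sentence gestures at this but does not isolate Markov as the step that converts expectation into probability, which is where the constant $1/2$ in the statement actually comes from.
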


By combining Theorem \ref{theorem:sgd} and Proposition \ref{prop:initial_loss}, we can show  that as long as the neural network is wide enough, SGD can achieve global convergence. Specifically, we provide the condition on the neural network width and the iteration complexity of SGD in the following corollary.
\begin{corollary}\label{corollary:SGD}
Suppose $\|\Yb\|_F = O(\|\Xb\|_F)$. Then using Gaussian random input and output transformations in Proposition \ref{prop:initial_loss} with $\alpha = \beta = 1$, for sufficiently small $\epsilon>0$, if the neural network width satisfies $m = \tilde \Omega\big(kr\kappa^2\log^2(1/\epsilon)\cdot n^2/B^2+d\big)$,  with constant probability, SGD in Algorithm \ref{alg:GD} can find a point that achieves training loss at most $L(\Wb^*) + \epsilon$ within  $T = \tilde O\big(\kappa^2\epsilon^{-1}\log(1/\epsilon)\cdot n/B\big)$ iterations.
\end{corollary}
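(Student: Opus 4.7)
The plan is to chain Proposition \ref{prop:initial_loss} with Theorem \ref{theorem:sgd}. First, I invoke Proposition \ref{prop:initial_loss} with $\alpha=\beta=1$ and $\delta$ a small absolute constant (e.g.\ $1/4$, so that after a union bound with the $\ge 1/2$ success probability in Theorem \ref{theorem:sgd} the overall success probability is still a positive constant), which yields $\sigma_{\min}(\Ab),\sigma_{\max}(\Ab),\sigma_{\min}(\Bb),\sigma_{\max}(\Bb) = \Theta(\sqrt{m})$ together with $L(\Wb^{(0)}) = O\bigl(km\|\Xb\|_F^2\log n + \|\Yb\|_F^2\bigr)$, provided $m\ge C(d+k)$. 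Using the hypothesis $\|\Yb\|_F = O(\|\Xb\|_F)$ together with the rank bound $\|\Xb\|_F^2 \le r\|\Xb\|_2^2$, the initial loss simplifies to $L(\Wb^{(0)}) = \tilde O(kmr\|\Xb\|_2^2)$, so every $\log(L(\Wb^{(0)})/\epsilon)$ factor that appears in Theorem \ref{theorem:sgd} collapses to $\tilde O(\log(1/\epsilon))$ once $m$ is known to be polynomial in the problem parameters.

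Next, I verify the hypothesis on $\Ab,\Bb$ in Theorem \ref{theorem:sgd}. With the singular-value estimates the left-hand side evaluates to $\Omega(m)$, while the right-hand side becomes
\[
\tilde O\!\left(\frac{n\|\Xb\|_2\log(1/\epsilon)}{B\sigma_r^2(\Xb)}\cdot \sqrt{kmr}\,\|\Xb\|_2\right) \;=\; \tilde O\!\left(\frac{n\kappa\sqrt{krm}\,\log(1/\epsilon)}{B}\right).
\]
Requiring $\Omega(m)$ to dominate this bound gives $m = \tilde\Omega(n^2 kr\kappa^2 \log^2(1/\epsilon)/B^2)$; combining with the baseline $m=\Omega(d+k)$ from the proposition yields precisely the stated width condition $m = \tilde\Omega\bigl(kr\kappa^2\log^2(1/\epsilon)\cdot n^2/B^2 + d\bigr)$.

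Finally, I substitute the singular-value estimates into the step-size and iteration formulas of Theorem \ref{theorem:sgd}. Using $\sigma_{\min}^2(\Ab)\sigma_{\min}^2(\Bb)=\Omega(m^2)$, $\|\Ab\|_2^4\|\Bb\|_2^4 = O(m^4)$, $L(\Wb^\ast)\le L(\Wb^{(0)})=\tilde O(kmr\|\Xb\|_2^2)$, and $\|\Xb\|_{2,\infty}\le\|\Xb\|_2$, the $m$-dependence cancels between $\eta$ and the iteration count (for small enough $\epsilon$ the first term in the $\min$ governs $\eta$), leaving $T = \tilde O\bigl(\kappa^2\epsilon^{-1}\log(1/\epsilon)\cdot n/B\bigr)$ as claimed. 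The only genuinely delicate bookkeeping step is a mild fixed-point argument showing that $\log(L(\Wb^{(0)})/\epsilon) = \tilde O(\log(1/\epsilon))$ \emph{after} $m$ is chosen to satisfy the width condition (each bound depends mildly on the other through $\log m$), but since $m$ is polynomial in $n,k,r,\kappa,1/\epsilon$, this dependence is absorbed into the $\tilde\Omega(\cdot)$ and $\tilde O(\cdot)$ notation without difficulty.
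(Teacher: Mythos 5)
Your proof is correct and follows essentially the same route as the paper: invoke Proposition~\ref{prop:initial_loss} (with $\alpha=\beta=1$) to get $\sigma_{\min},\sigma_{\max}=\Theta(\sqrt m)$ and $L(\Wb^{(0)})=\tilde O(kmr\|\Xb\|_2^2)$, plug into the hypothesis of Theorem~\ref{theorem:sgd} using $\|\Xb\|_F\le\sqrt r\|\Xb\|_2$ to solve for the width, then substitute into the step-size and $T$ formulas. Your derivation is actually somewhat more carefully spelled out than the paper's (which, incidentally, has an apparent typo, writing the width threshold with a factor $B/n$ rather than $n^2/B^2$), and your explicit note about the mild circularity hidden in the $\log(L(\Wb^{(0)})/\epsilon)$ factor is a real gap in the paper's own exposition that you correctly flag and resolve.
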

From Corollaries \ref{corollary:SGD} and \ref{c:random.works}, we can see that compared with the convergence guarantee of GD, the condition on the neural network width for SGD is worse by a factor of $\tilde O(n^2\log^2(1/\epsilon)/B^2)$ and the iteration complexity is higher by a factor of $\tilde O(\kappa\epsilon^{-1}\cdot n/B)$. This is because for SGD, its trajectory length contains high uncertainty, and thus we need stronger conditions on the neural network in order to fully control it.

We further consider the special case that $L(\Wb^*) = 0$, which implies that there exists a ground truth matrix $\bPhi$ such that for each training data point $(\xb_i, \yb_i)$ we have $\yb_i = \bPhi \xb_i$. In this case, we have the following theorem, which shows that SGD can attain a linear rate to converge to the global minimum.

\begin{theorem}\label{theorem:SGD2}
There are absolute constants $C$, and $C_1$ such that for any $0<\delta<1$, if the input and output weight matrices satisfy
\begin{align*}
\frac{\sigma_{\min}^2(\Ab)\sigma_{\min}^2(\Bb)}{\|\Ab\|_2\|\Bb\|_2}&\ge C\cdot\frac{n\|\Xb\|_2}{B\sigma_r^2(\Xb)}\cdot\sqrt{L(\Wb^{(0)})},
\end{align*}
and the step size is set as
\begin{align*}
\eta &\le  C_1\cdot\frac{B^2\sigma_{\min}^2(\Ab)\sigma_{\min}^2(\Bb)\sigma_{r}^2(\Xb) }{Ln^2\|\Ab\|_2^4\|\Bb\|_2^4\|\Xb\|_2^4\cdot\log(T/\delta)},
\end{align*}
for some maximum iteration number $T$, then with probability at least $1-\delta$, the following holds for all $t\le T$,
\begin{align*}
L(\Wb^{(t)})\le 2L(\Wb^{(0)})\cdot \bigg(1-\frac{\eta L \sigma_{\min}^2(\Ab)\sigma_{\min}^2(\Bb)\sigma_{r}^2(\Xb)}{e}\bigg)^t.
\end{align*}

\end{theorem}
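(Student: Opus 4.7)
The plan is to build on the scaffolding of Theorem \ref{theorem:sgd} and exploit the realizable assumption $L(\Wb^*)=0$, where the variance of the stochastic gradient is proportional to the current loss rather than bounded away from zero. This is exactly what separates sublinear from linear SGD, and it collapses the analysis into a clean geometric contraction. First I would recycle the deterministic ingredients used for Theorem \ref{theorem:GD}: inside a trust region where each $\|\Wb_l\|_2$ stays below a small constant, one has (a) a Polyak--Lojasiewicz-type gradient lower bound $\|\nabla L(\Wb)\|_F^2 \gtrsim L\,\sigma_{\min}^2(\Ab)\sigma_{\min}^2(\Bb)\sigma_r^2(\Xb)\,(L(\Wb)-L(\Wb^*))$, which becomes a lower bound in $L(\Wb)$ itself once $L(\Wb^*)=0$, and (b) a semi-smoothness inequality with constant $M\lesssim L\|\Ab\|_2^2\|\Bb\|_2^2\|\Xb\|_2^2$ controlling one-step loss increments.

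The crucial additional ingredient is a loss-proportional variance bound on the stochastic gradient. Because $L(\Wb^*)=0$ implies $\yb_i=\bPhi\xb_i$ for some $\bPhi$, each per-sample residual $f_{\Wb}(\xb_i)-\yb_i$ equals the current prediction error at $\xb_i$, so combined with spectral-norm control of the intermediate products one obtains
\[
\mathbb{E}\bigl[\|\Gb_l^{(t)}\|_F^2 \mid \mathcal{F}_t\bigr] \;\lesssim\; \frac{n}{B}\,\|\Ab\|_2^2\|\Bb\|_2^2\|\Xb\|_2^2\,L(\Wb^{(t)}).
\]
Plugging this into the semi-smoothness inequality together with the PL bound gives the conditional contraction
\[
\mathbb{E}\bigl[L(\Wb^{(t+1)})\mid\mathcal{F}_t\bigr] \;\leq\; \Bigl(1-\frac{\eta L \sigma_{\min}^2(\Ab)\sigma_{\min}^2(\Bb)\sigma_r^2(\Xb)}{e}\Bigr) L(\Wb^{(t)}),
\]
provided the step size satisfies the stated bound so that the $\eta^2$ noise term is dominated by half of the linear descent term. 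This is the single place where realizability is genuinely used, and it is the reason the $O(\eta)$ noise floor present in Theorem \ref{theorem:sgd} disappears here.

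Next I would promote the conditional contraction into a uniform-in-$t$ high-probability statement. The natural object is $\log L(\Wb^{(t)}) + t\log(1-c\eta)^{-1}$, whose increments are bounded supermartingale differences by the variance inequality plus the quadratic smoothness term, so a Freedman-type concentration inequality yields $L(\Wb^{(t)})\leq 2L(\Wb^{(0)})(1-c\eta)^t$ simultaneously for all $t\leq T$ with probability $1-\delta$. The $\log(T/\delta)$ factor in the prescribed step size is exactly the price of this concentration, and the factor $2$ in the conclusion absorbs the deviation.

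The main obstacle, as in Theorem \ref{theorem:sgd}, is the coupling between the trust-region induction and the martingale argument: the gradient lower bound, semi-smoothness, and variance bound all require $\sum_{s<t}\|\Wb_l^{(s+1)}-\Wb_l^{(s)}\|_2$ to stay small, yet proving this requires the loss bound, which in turn requires the three inequalities at every previous step. I would close the loop with a stopping time $\tau$ defined as the first index at which any trust-region bound fails, run Freedman on the stopped process, and show that the assumed lower bound on $\sigma_{\min}^2(\Ab)\sigma_{\min}^2(\Bb)/(\|\Ab\|_2\|\Bb\|_2)$ combined with the geometric decay of $L(\Wb^{(t)})$ makes the cumulative displacement $\sum_s\eta\|\Gb_l^{(s)}\|_F$ summable, forcing $\tau>T$ on the good event. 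This summability is precisely why no $\epsilon$- or $T$-dependence appears in the width condition of the realizable case, in contrast to Corollary \ref{corollary:SGD}.
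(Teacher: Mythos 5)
Your proposal matches the paper's proof in all essential respects: realizability gives a loss-proportional variance bound for the stochastic gradient and hence a conditional one-step contraction with no additive noise floor, Jensen's inequality turns this into a super-martingale on $\log L(\Wb^{(t)}) + ct$, and a martingale concentration inequality together with a union bound over $t\le T$ delivers the uniform-in-$t$ high-probability statement with the factor $2$ absorbing the deviation term. The only differences are bookkeeping choices: the paper invokes one-sided Azuma rather than a Freedman-type inequality (the martingale increments are uniformly bounded via the stochastic-gradient upper bound of Lemma~\ref{lemma:grad_bound}, so the simpler tool suffices), and it closes the trust-region loop by a direct two-part induction on $t$ rather than the stopping-time argument you sketch, but both choices yield the same step-size condition and the same rate.
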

Similarly, using Gaussian random  transformations in Proposition \ref{prop:initial_loss}, we show that SGD can achieve global convergence for wide enough deep linear ResNets in the following corollary.
\begin{corollary}\label{coro:sgd2}
Suppose $\|\Yb\|_F = O(\|\Xb\|_F)$. Then using Gaussian random transformations in Proposition \ref{prop:initial_loss} with $\alpha = \beta = 1$, for any $\epsilon\le \tilde O\big(B\|\Xb\|_{2,\infty}^2/(n\|\Xb\|_2^2)\big)$, if the neural network width satisfies $m = \tilde \Omega\big(kr\kappa^2\cdot n^2/B^2+d\big)$, with high probability, SGD in Algorithm \ref{alg:GD} can find a network that achieves training loss at most $\epsilon$ within  $T = \tilde O\big(\kappa^2\log(1/\epsilon)\cdot n^2/B^2\big)$ iterations.
\end{corollary}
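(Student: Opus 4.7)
The plan is to derive Corollary~\ref{coro:sgd2} by specializing Theorem~\ref{theorem:SGD2} to the Gaussian transformations analyzed in Proposition~\ref{prop:initial_loss}, in exactly the same style that Corollary~\ref{c:random.works} follows from Theorem~\ref{theorem:GD} and Proposition~\ref{prop:initial_loss}. So there is no new dynamical analysis — it is a matter of plugging the singular value and initial-loss estimates into the hypotheses of Theorem~\ref{theorem:SGD2} and solving for $m$ and $T$.

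First, I would invoke Proposition~\ref{prop:initial_loss} with $\alpha=\beta=1$. Under $m = \Omega(d+k+\log(1/\delta))$, with probability at least $1-\delta$ this gives $\sigma_{\min}(\Ab),\|\Ab\|_2,\sigma_{\min}(\Bb),\|\Bb\|_2 = \Theta(\sqrt{m})$ and, using $\|\Yb\|_F = O(\|\Xb\|_F)\le O(\sqrt{r}\,\|\Xb\|_2)$,
\begin{align*}
L(\Wb^{(0)}) = O\bigl(km\log(n/\delta)\|\Xb\|_F^2 + \|\Yb\|_F^2\bigr) = \tilde O\bigl(kmr\|\Xb\|_2^2\bigr).
\end{align*}
Substituting into the left- and right-hand sides of the hypothesis of Theorem~\ref{theorem:SGD2} yields $\Theta(m)$ on the left and
\begin{align*}
\tilde O\Bigl(\tfrac{n\|\Xb\|_2}{B\sigma_r^2(\Xb)}\cdot\sqrt{kmr}\,\|\Xb\|_2\Bigr) = \tilde O\bigl(n\kappa\sqrt{kmr}/B\bigr)
\end{align*}
on the right, where $\kappa = \|\Xb\|_2^2/\sigma_r^2(\Xb)$. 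Solving $m \ge \tilde\Omega(n\kappa\sqrt{kmr}/B)$ for $m$ produces $m = \tilde\Omega(kr\kappa^2 n^2/B^2)$, which combined with $\Omega(d+k+\log(1/\delta))$ from Proposition~\ref{prop:initial_loss} gives the stated width.

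Second, I would plug the same bounds into the step-size condition of Theorem~\ref{theorem:SGD2}, collapsing the $\sigma(\Ab)$ and $\sigma(\Bb)$ factors to $\Theta(m)$ to get $\eta = O\bigl(B^2\sigma_r^2(\Xb)/(Ln^2m^2\|\Xb\|_2^4 \log(T/\delta))\bigr)$. The per-step linear-contraction factor in Theorem~\ref{theorem:SGD2} then simplifies to
\begin{align*}
\frac{\eta L\sigma_{\min}^2(\Ab)\sigma_{\min}^2(\Bb)\sigma_r^2(\Xb)}{e} = \Theta\Bigl(\frac{B^2}{n^2\kappa^2\log(T/\delta)}\Bigr).
\end{align*}
Requiring $(1 - \text{this factor})^T \cdot 2L(\Wb^{(0)}) \le \epsilon$ gives $T = \tilde O(\kappa^2 \log(1/\epsilon)\,n^2/B^2)$ after self-consistently absorbing the $\log(T/\delta)$ factor, which matches the corollary's iteration count.

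The one subtlety I would be careful with is the stated upper bound $\epsilon \le \tilde O\bigl(B\|\Xb\|_{2,\infty}^2/(n\|\Xb\|_2^2)\bigr)$. This is not needed to invoke Theorem~\ref{theorem:SGD2} (whose step-size bound does not explicitly involve $\epsilon$ when $L(\Wb^*)=0$), but it is the regime in which the chosen $\eta$ remains consistent with the implicit bounds used inside the proof of Theorem~\ref{theorem:SGD2} — effectively ensuring that the number of iterations $T$, and hence the $\log(T/\delta)$ slack absorbed above, does not grow so large that the step size must be further shrunk. The only other bookkeeping step is a union bound between the $1-\delta$ event of Proposition~\ref{prop:initial_loss} and the high-probability event of Theorem~\ref{theorem:SGD2}, which preserves an overall high-probability guarantee. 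Everything else is routine substitution, so I would expect no real obstacle beyond tracking this $\epsilon$-regime constraint carefully.
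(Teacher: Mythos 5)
Your proposal is correct and follows essentially the same route as the paper's proof: plug the singular-value and initial-loss estimates from Proposition~\ref{prop:initial_loss} into Theorem~\ref{theorem:SGD2}, solve the resulting inequality for $m$ to get $\tilde\Omega(kr\kappa^2 n^2/B^2)$, and then compute $T$ from the product $\eta L\sigma_{\min}^2(\Ab)\sigma_{\min}^2(\Bb)\sigma_r^2(\Xb)$, which collapses to $\Theta(B^2/(n^2\kappa^2\log(T/\delta)))$. Your observation about the stated $\epsilon$-bound is a fair one: the paper's own proof of this corollary never uses that restriction (unlike the proof of Corollary~\ref{corollary:SGD}, where the analogous restriction selects which branch of the $\min$ in the step size is binding), so it appears to be carried over rather than genuinely necessary when $L(\Wb^*)=0$.
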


}

\section{Discussion on Different Input and Output Linear Transformations}\label{sec:initialization} 
In this section, we will discuss several different choices of linear transformations at input and output layers and their effects to the convergence performance. For simplicity, we will only consider the condition for GD.

As we stated in Subsection \ref{subsec:gd}, 
GD converges if
the input and output weight matrices $\Ab$ and $\Bb$ 
\begin{align}\label{eq:condition_GD_converge}
\frac{\sigma_{\min}^2(\Ab)\sigma_{\min}^2(\Bb)}{\|\Ab\|_2\|\Bb\|_2}\ge C\cdot \frac{\|\Xb\|_2}{\sigma_r^2(\Xb)}\cdot\big(L(\Wb^{(0)}) - L(\Wb^*)\big)^{1/2}.    
\end{align}
Then it is interesting to figure out what kind of choice of $\Ab$ and $\Bb$ can satisfy this condition.
In Proposition \ref{prop:initial_loss}, we showed that Gaussian random transformations (i.e., each entry of $\Ab$ and $\Bb$ is generated from certain Gaussian distribution) satisfy
this condition with high probability, so that GD converges.
Here we will
discuss the following two other transformations.

\noindent\textbf{Identity transformations.}
We first consider the transformations that $\Ab = [\Ib_{d\times d},\boldsymbol{0}_{d\times (m-d)}]^\top$ and $\Bb = \sqrt{m/k}\cdot[\Ib_{k\times k},\boldsymbol{0}_{k\times (m-k)}]$. 
which is equivalent to the setting in \citet{bartlett2019gradient} when $m=k=d$. Then it is clear that
\begin{align*}
\sigma_{\min}(\Bb) =\sigma_{\max}(\Bb) = \sqrt{m/k} \quad \mbox{and}\quad \sigma_{\min}(\Ab) =\sigma_{\max}(\Ab)=1.
\end{align*}
Now let us consider $L(\Wb^{(0)})$.
By our choices of $\Bb$ and $\Ab$ and zero initialization on weight matrices in hidden layers, in the case that $d=k$, we have
\begin{align*}
L(\Wb^{(0)}) = \frac{1}{2}\|\Bb\Ab\Xb - \Yb\|_F^2 = \frac{1}{2}\big\|\sqrt{m/k}\Xb- \Yb\big\|_F^2.
\end{align*}
We remark that $\big\|\sqrt{m/k}\Xb- \Yb\big\|_F^2/2$ could be as big as
$\frac{1}{2} \left(m\|\Xb\|_F^2/k + \|\Yb\|_F^2\right)$ (for example,
when $\Xb$ and $\Yb$ are orthogonal).
Then plugging these results into \eqref{eq:condition_GD_converge}, the condition on $\Ab$ and $\Bb$ becomes
\begin{align*}
\sqrt{m/k}\ge C\cdot \frac{\|\Xb\|_2}{\sigma_r^2(\Xb)}\cdot \bigg(\frac{1}{2} \left(m\|\Xb\|_F^2/k + \|\Yb\|_F^2\right)- L(\Wb^*)\bigg)^{1/2}\ge C\cdot \frac{\|\Xb\|_2}{\sigma_r^2(\Xb)}\cdot\sqrt{\frac{m\|\Xb\|_F^2}{2k}} ,
\end{align*}
where the second inequality is due to the fact that $L(\Wb^*)\le \|\Yb\|_F^2/2$. Then it is clear if  $\|\Xb\|_F\ge \sqrt{2}/C$, the above inequality cannot be satisfied for any choice of $m$, since it will be cancelled out on both sides of the inequality. Therefore, in such cases, 
our bound does not guarantee that GD achieves global
convergence.  Thus, it is consistent with the
non-convergence results in \citep{bartlett2019gradient}.
Note that replacing the scaling factor $\sqrt{m/k}$ in the definition of $\Bb$ with any other
function of $d$, $k$ and $m$ would not help.  


\noindent\textbf{Modified identity transformations.}
In fact, we show that
a different type of identity transformations of
$\Ab$ and $\Bb$ can satisfy the condition \eqref{eq:condition_GD_converge}.
Here we provide one such example. Assuming $m\ge d + k$, we can construct two sets $\cS_1, \cS_2\subset [m]$ satisfying $|\cS_1| = d$, $|\cS_2| = k$ and $\cS_1\cap \cS_2 = \emptyset$.
%
Let $\cS_1 = \{i_1,\dots,i_d\}$ and $\cS_2 = \{j_1,\dots,j_k\}$.
Then we construct matrices $\Ab$ and $\Bb$ as follows:
\begin{align*}
\Ab_{ij} = \left\{\begin{array}{ccc}
1& (i,j) = (i_j,j)\\
0& \text{otherwise}
\end{array}
\right. \quad 
\Bb_{ij} = \left\{\begin{array}{ccc}
\alpha& (i,j) = (i, j_i)\\
0& \text{otherwise}
\end{array}
\right.
\end{align*}
where $\alpha$ is a parameter which will be specified later.
In this way, it can be verified that $\Bb\Ab = \boldsymbol{0}$, $\sigma_{\min}(\Ab) = \sigma_{\max}(\Ab) =  1$, and $\sigma_{\min}(\Bb) = \sigma_{\max}(\Bb) =  \alpha$. Thus it is clear that the initial training loss satisfies 
$L(\Wb^{(0)})= \|\Yb\|_F^2/2$. \CC{ Then plugging these results into \eqref{eq:condition_GD_converge}, the condition on $\Ab$ and $\Bb$ can be rewritten as
\begin{align*}
\alpha\ge C\cdot \frac{\|\Xb\|_2}{\sigma_r^2(\Xb)}\cdot\big(\|\Yb\|_F^2/2 - L(\Wb^*)\big)^{1/2}.
\end{align*}
The R.H.S. of the above inequality does not depend on $\alpha$, which implies that we can choose sufficiently large $\alpha$ to make this inequality hold. Thus, GD can be guaranteed to achieve the global convergence.  Moreover, it is worth noting that using modified identity transformation, a neural network with $m = d + k$ suffices to guarantee the global convergence of GD. We further remark that similar analysis can be extended to SGD.} 


\section{Experiments}
\vspace{-.05in}
In this section, we conduct various experiments to verify our theory on synthetic data, including i) comparison between different input and output transformations and ii) comparison between training deep linear ResNets and standard linear networks.

\vspace{-.025in}
\subsection{Different input and output transformations}\label{sec:exp1}
\vspace{-.025in}
To validate our theory, we performed simple experiment on $10$-d synthetic data. Specifically, we randomly generate $\Xb\in\RR^{10\times 1000}$ 
from a standard normal distribution
and set $\Yb = -\Xb + 0.1\cdot\Eb$, where each entry in $\Eb$ is independently generated from standard normal distribution. Consider $10$-hidden-layer linear ResNets, we apply three input and output transformations including {\em identity transformations}, {\em modified identity transformations} and {\em random transformations}.
We evaluate the convergence performances for these three choices of transformations and report the results in Figures \ref{fig:1a}-\ref{fig:1b}, where we consider two cases $m=40$ and $m=200$. It can be clearly observed that gradient descent with identity initialization gets stuck,
but gradient descent with modified identity initialization or random initialization converges well. This verifies our theory. It can be also observed that modified identity initialization can lead to slightly faster convergence rate as its initial training loss can be smaller.
In fact, with identity transformations in this setting, only the first $10$ entries of
the $m$ hidden variables in each layer ever take a non-zero value, so that,
no matter how large $m$ is, effectively, $m=10$, and the lower bound of \citet{bartlett2019gradient} applies.

\begin{figure}[!t]
\vspace{-.2in}
\centering
\subfigure[$m=40$ \label{fig:1a}]{\includegraphics[width=0.24\columnwidth]{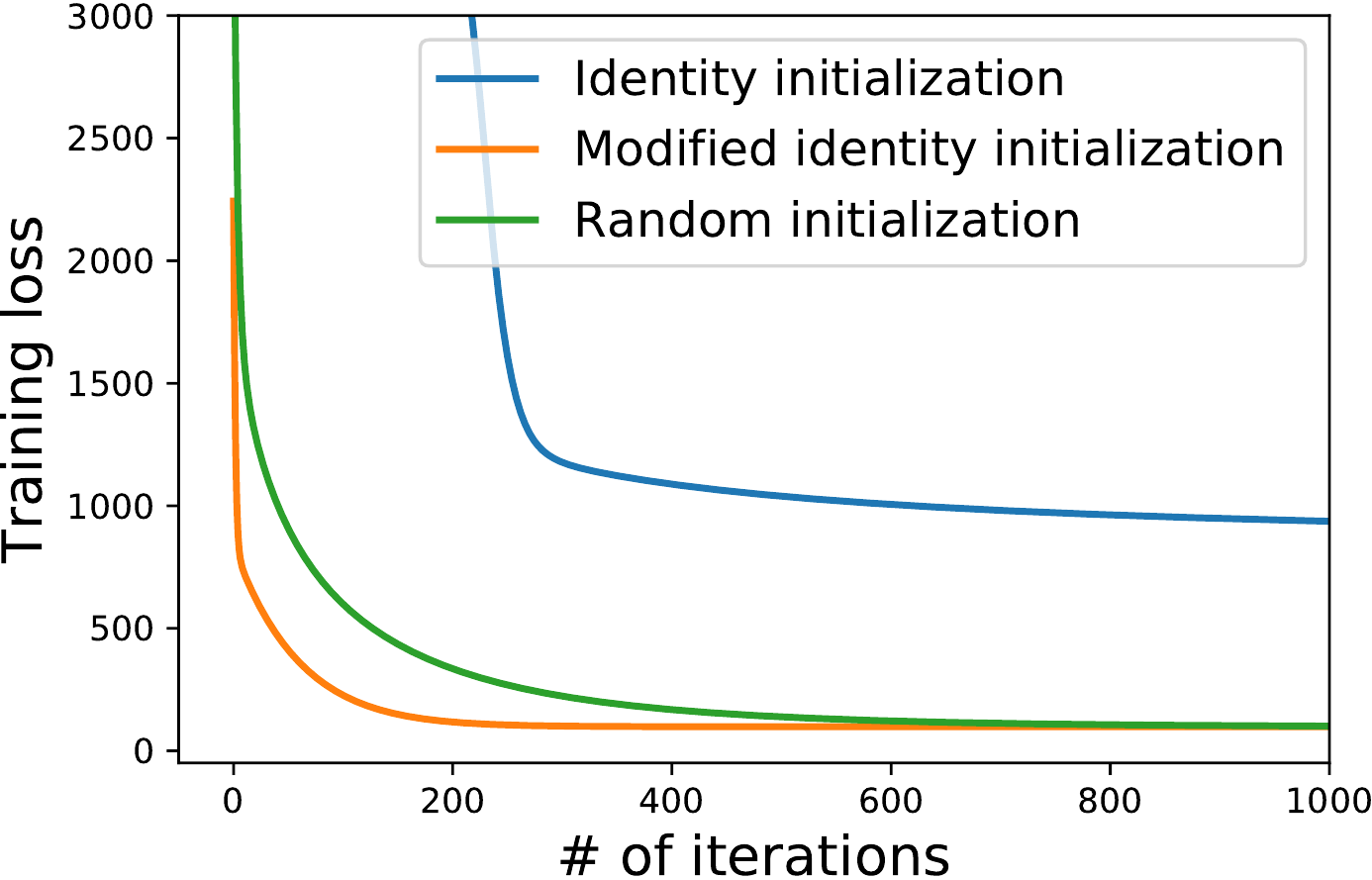}}
\subfigure[$m=200$\label{fig:1b}]{\includegraphics[width=0.24\columnwidth]{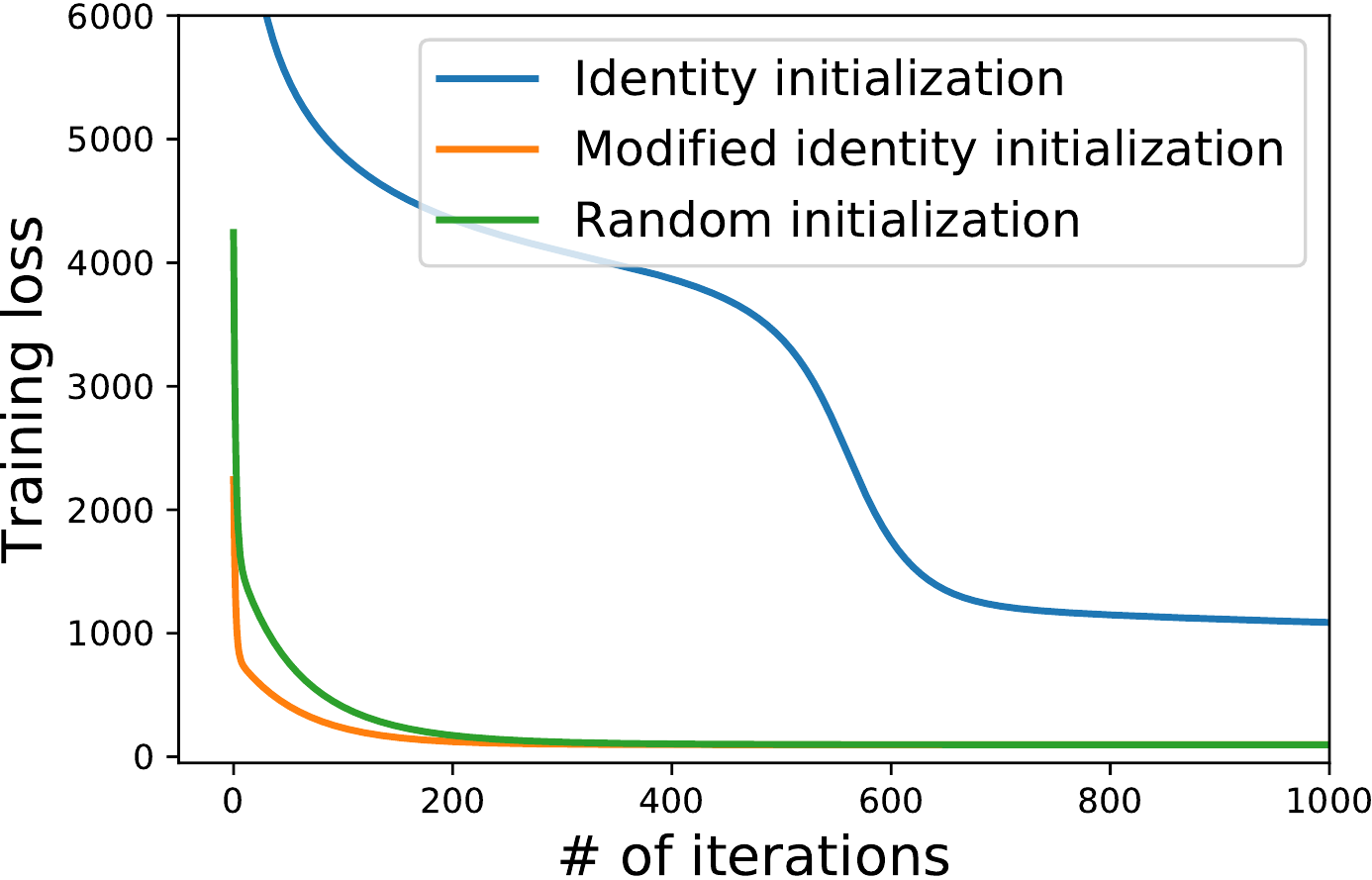}}
\subfigure[$m=40$\label{fig:1c}]{\includegraphics[width=0.24\columnwidth]{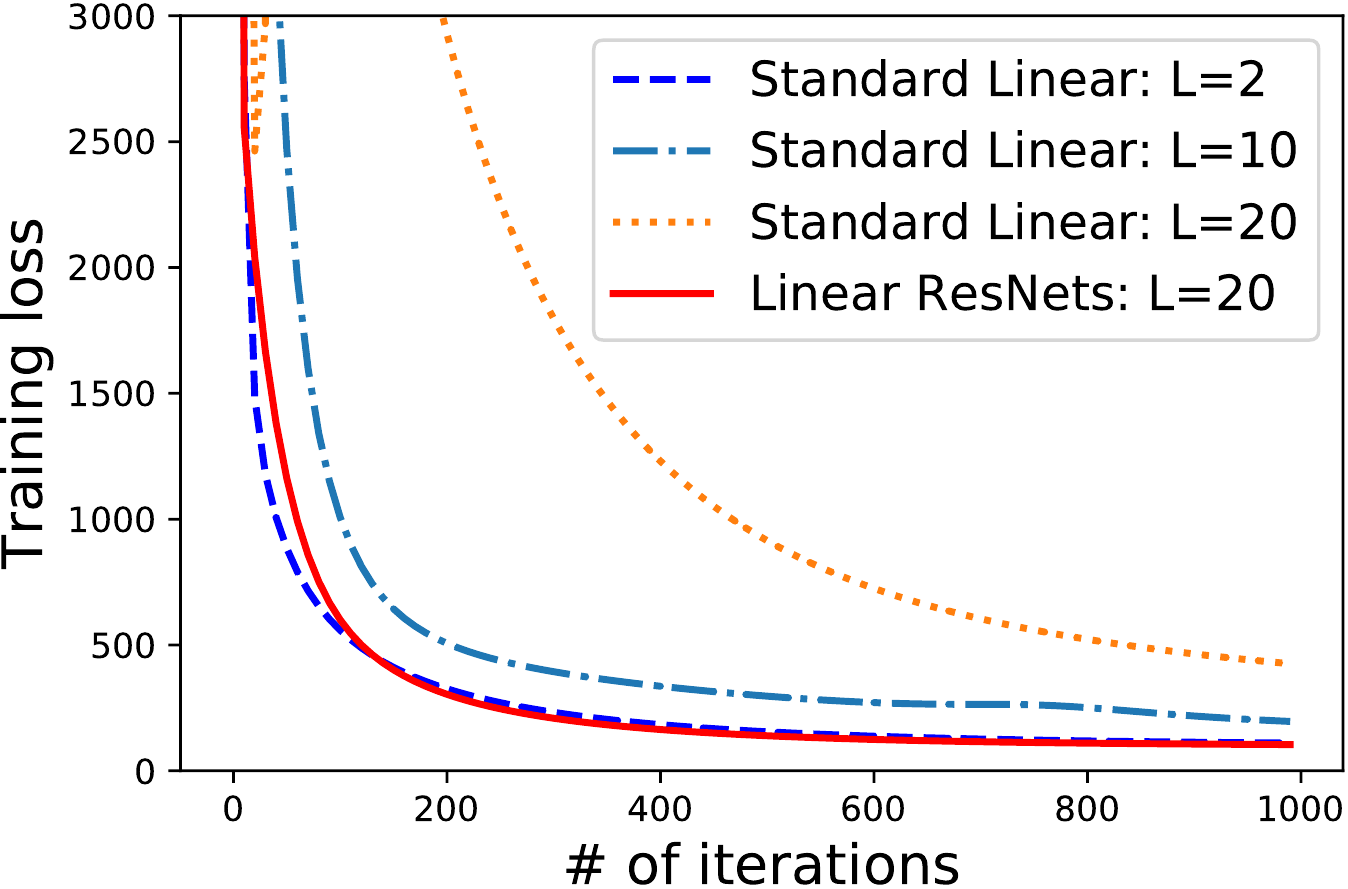}}
\subfigure[$m=200$\label{fig:1d}]{\includegraphics[width=0.24\columnwidth]{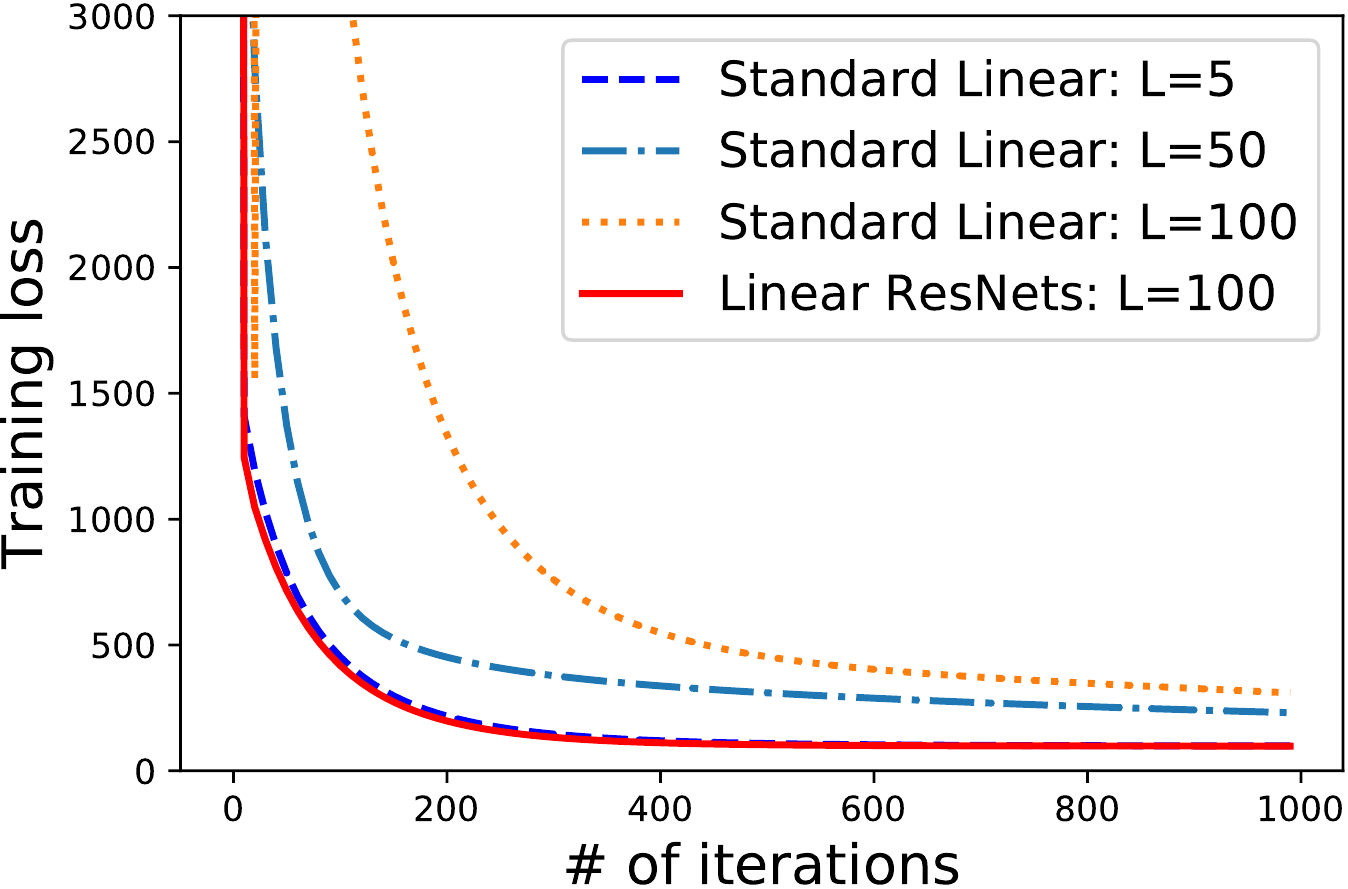}}
\vspace{-.1in}
\caption{(a)-(b):Convergence performances for three input and output transformations on a $10$-hidden-layer linear ResNets. (c)-(d) Comparison between the convergence performances of training deep linear ResNets with zero initialization on hidden weights and standard deep linear network with Gaussian random initialization on hidden weights, where the input and output weights are generated by random initialization, and remain fixed throughout the training.
}\vspace{-0.1in}
\label{fig:convergence}
\end{figure}
\vspace{-.025in}
\subsection{Comparison with standard deep linear networks}
\vspace{-.025in}
Then we compare the convergence performances with that of training standard deep linear networks. Specifically, we adopt the same training data generated in  Section \ref{sec:exp1} and consider training $L$-hidden-layer neural network with fixed width $m$. The convergence results are displayed in  Figures \ref{fig:1c}-\ref{fig:1d}, where we consider different choices of $L$. 
For training linear ResNets, we 
found that the convergence performances are quite similar for different $L$,  thus we only plot the convergence result for the largest one (e.g., $L=20$ for $m=40$ and $L=100$ for $m=200$). However, it can be observed that for training standard linear networks, the convergence performance becomes worse as the depth increases. This is consistent with the theory as our condition on the neural network width is $m  = O(kr\kappa^2)$ (please refer to Corollary \ref{c:random.works}), which has no dependency in $L$, while the condition for training standard linear network is $m = O(Lkr\kappa^3)$ \citep{du2019width}, which is linear in $L$.


\vspace{-.05in}
\section{Conclusion}
\vspace{-.05in}
In this paper, we proved the global convergence of GD and SGD for training deep linear ResNets with square loss. More specifically, we considered fixed linear transformations at both input and output layers, and proved that under certain conditions on the transformations, GD and SGD with zero initialization on all hidden weights can converge to the global minimum. In addition, we further proved that when specializing to appropriate Gaussian random linear transformations, GD and SGD can converge as long as the neural network is wide enough.  Compared with the convergence results of GD for training standard deep linear networks, our condition on the neural network width is strictly milder. Our analysis can be generalized to prove similar results for different loss functions such as cross-entropy loss, and can  potentially provide meaningful insights to the convergence analysis of deep non-linear ResNets.

\section*{Acknowledgement}
We thank the anonymous reviewers and area chair for their helpful comments. This work was initiated when Q. Gu and P. Long attended the summer program on the Foundations of Deep Learning at the Simons Institute for the Theory of Computing. D. Zou and Q. Gu were sponsored in part by the National Science Foundation CAREER Award IIS-1906169, BIGDATA IIS-1855099, and Salesforce Deep Learning Research Award. The views and conclusions contained in this paper are those of the authors and should not be interpreted as representing any funding agencies.


\appendix

\bibliographystyle{iclr2020_conference}
\bibliography{ReLU}

\section{Proof of Main Theorems}

We first provide the following lemma which proves upper and lower bounds on $\|\nabla_{\Wb_l}L(\Wb)\|_F^2$ when $\Wb$ is staying inside a certain region.  Its proof is in Section~\ref{s:grad_bound}.
\begin{lemma}\label{lemma:grad_bound}
For any weight matrices satisfying $\max_{l\in[L]}\|\Wb_l\|_2\le0.5/L$, it holds that,
\begin{align*}
\|\nabla_{\Wb_l}L(\Wb)\|_F^2 
&\ge \frac{2}{e} \sigma_{\min}^2(\Ab)\sigma_{\min}^2(\Bb)\sigma_{r}^2(\Xb)\big(L(\Wb) - L(\Wb^*)\big),\notag\\
\|\nabla_{\Wb_l}L(\Wb)\|_F^2&\le 2e \|\Ab\|_2^2\|\Bb\|_2^2\|\Xb\|_2^2 \big(L(\Wb) - L(\Wb^*)\big) \notag\\
\|\nabla_{\Wb_l} \ell(\Wb;\xb_i,\yb_i)\|_F^2&\le 2e \|\Ab\|_2^2\|\Bb\|_2^2\|\xb_i\|_2^2\ell(\Wb;\xb_i,\yb_i).
\end{align*}
In addition,  the stochastic gradient $\Gb_l$ in Algorithm \ref{alg:GD} 
satisfies
\begin{align*}
\|\Gb_l\|_F^2&\le \frac{2en^2\|\Ab\|_2^2\|\Bb\|_2^2\|\Xb\|_2^2}{B^2}L(\Wb),
\end{align*}
where $B$ is the minibatch size.
\end{lemma}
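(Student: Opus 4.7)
The plan is to write the gradient in closed form, establish spectral bounds on the telescoping products that appear, and then handle the upper and lower bounds separately. Define $\mathbf{P}_{a:b} := (\Ib+\Wb_b)(\Ib+\Wb_{b-1})\cdots(\Ib+\Wb_a)$ (equal to $\Ib$ when $a>b$), the end-to-end linear map $\mathbf{M}:=\Bb\mathbf{P}_{1:L}\Ab$, and the residual $\mathbf{R}:=\mathbf{M}\Xb-\Yb$. The chain rule gives
\begin{equation*}
\nabla_{\Wb_l}L(\Wb) = \mathbf{P}_{l+1:L}^\top\Bb^\top\mathbf{R}\Xb^\top\Ab^\top\mathbf{P}_{1:l-1}^\top.
\end{equation*}
Under $\max_l\|\Wb_l\|_2\le 0.5/L$, submultiplicativity together with the estimate $(1-x)^n \ge e^{-nx/(1-x)}$ yields $\|\mathbf{P}_{l+1:L}\|_2\|\mathbf{P}_{1:l-1}\|_2 \le (1+0.5/L)^{L-1}\le \sqrt{e}$ and $\sigma_{\min}(\mathbf{P}_{l+1:L})\,\sigma_{\min}(\mathbf{P}_{1:l-1}) \ge (1-0.5/L)^{L-1}\ge 1/\sqrt{e}$, so the two ``sandwich'' factors contribute at most $e$ and at least $1/e$ in any squared-Frobenius estimate.

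For the upper bound, a direct submultiplicativity yields only $\|\mathbf{R}\Xb^\top\|_F\le \|\Xb\|_2\sqrt{2L(\Wb)}$, which would produce the looser $L(\Wb)$ factor. To recover $(L(\Wb)-L(\Wb^*))$ I will invoke the normal-equations identity: the linear-regression minimizer $\mathbf{M}^*=\Yb\Xb^+$ satisfies $(\mathbf{M}^*\Xb-\Yb)\Xb^\top=\boldsymbol{0}$, so $\mathbf{R}\Xb^\top = (\mathbf{M}-\mathbf{M}^*)\Xb\Xb^\top$, and the least-squares Pythagorean identity gives $L(\Wb)-L(\Wb^*) = \tfrac{1}{2}\|(\mathbf{M}-\mathbf{M}^*)\Xb\|_F^2$. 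Combining these with $\|(\mathbf{M}-\mathbf{M}^*)\Xb\Xb^\top\|_F \le \|\Xb\|_2\|(\mathbf{M}-\mathbf{M}^*)\Xb\|_F$ and the spectral bound on the $\mathbf{P}$ factors yields $\|\nabla_{\Wb_l}L\|_F^2 \le 2e\|\Ab\|_2^2\|\Bb\|_2^2\|\Xb\|_2^2(L(\Wb)-L(\Wb^*))$.

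The lower bound is the most delicate step. Since $\mathbf{P}_{l+1:L}^\top,\mathbf{P}_{1:l-1}^\top\in\RR^{m\times m}$ are invertible, left/right multiplication reduces Frobenius norm by at most their smallest singular values, contributing the $1/e$ factor. To absorb the rectangular $\Bb^\top$ on the left and $\Ab^\top$ on the right, I will use $\|\Bb^\top\mathbf{N}\|_F^2 = \Tr(\mathbf{N}^\top\Bb\Bb^\top\mathbf{N}) \ge \sigma_{\min}^2(\Bb)\|\mathbf{N}\|_F^2$ and the analogous identity for $\Ab$, both valid because $\Bb$ has full row rank $k$ and $\Ab$ has full column rank $d$ (implicit in $\sigma_{\min}(\Bb),\sigma_{\min}(\Ab)>0$). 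The remaining task is to lower-bound $\|(\mathbf{M}-\mathbf{M}^*)\Xb\Xb^\top\|_F$; using the thin SVD $\Xb=\mathbf{U}\boldsymbol{\Sigma}\mathbf{V}^\top$ with $\boldsymbol{\Sigma}\in\RR^{r\times r}$,
\begin{equation*}
\|(\mathbf{M}-\mathbf{M}^*)\Xb\Xb^\top\|_F^2 = \sum_{i=1}^{r}\sigma_i^4(\Xb)\,\|[(\mathbf{M}-\mathbf{M}^*)\mathbf{U}]_i\|_2^2 \;\ge\; \sigma_r^2(\Xb)\,\|(\mathbf{M}-\mathbf{M}^*)\Xb\|_F^2,
\end{equation*}
which produces the $\sigma_r^2(\Xb)$ factor rather than the vanishing $\sigma_{\min}^2(\Xb\Xb^\top)$ when $r<d$. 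Chaining the factors gives the stated $\tfrac{2}{e}\sigma_{\min}^2(\Ab)\sigma_{\min}^2(\Bb)\sigma_r^2(\Xb)(L(\Wb)-L(\Wb^*))$.

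Finally, the per-example gradient has the identical outer-product form with $\Xb,\mathbf{R}$ replaced by $\xb_i$ and $\Bb\mathbf{P}_{1:L}\Ab\xb_i-\yb_i$; since the orthogonality trick is unavailable for a single point, the same submultiplicativity chain directly yields $\|\nabla_{\Wb_l}\ell(\Wb;\xb_i,\yb_i)\|_F^2 \le 2e\|\Ab\|_2^2\|\Bb\|_2^2\|\xb_i\|_2^2\,\ell(\Wb;\xb_i,\yb_i)$. For the minibatch stochastic gradient I will write $\mathbf{G}_l = (n/B)\mathbf{P}_{l+1:L}^\top\Bb^\top \mathbf{R}_{\mathcal{B}}\Xb_{\mathcal{B}}^\top\Ab^\top\mathbf{P}_{1:l-1}^\top$, with $\Xb_{\mathcal{B}},\Yb_{\mathcal{B}}$ restrictions to the batch, and combine $\|\Xb_{\mathcal{B}}\|_2 \le \|\Xb\|_2$ with $\|\mathbf{R}_{\mathcal{B}}\|_F^2 = \sum_{i\in\mathcal{B}}2\ell(\Wb;\xb_i,\yb_i) \le 2L(\Wb)$ to obtain the claimed $(n^2/B^2)$ bound. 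The hardest part of the whole argument is the lower bound, precisely because it must gracefully handle both rank-deficient $\Xb$ and rectangular $\Ab,\Bb$ without losing scaling; the SVD step above is the pivotal move.
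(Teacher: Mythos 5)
Your proposal is correct and follows essentially the same route as the paper: write the gradient as a sandwich $\mathbf{P}_{l+1:L}^\top\Bb^\top\mathbf{R}\Xb^\top\Ab^\top\mathbf{P}_{1:l-1}^\top$, peel off the invertible hidden-layer products with $(1\pm 0.5/L)$-type spectral bounds, absorb $\Ab,\Bb$ via the trace inequality, and reduce $L(\Wb)-L(\Wb^*)$ to $\tfrac12\|(\mathbf{M}-\mathbf{M}^*)\Xb\|_F^2$ via the least-squares Pythagorean identity (the paper's Claim B.1). Your thin-SVD step to extract $\sigma_r^2(\Xb)$ is a cosmetically different but equivalent phrasing of the paper's $\tilde\Xb\in\RR^{d\times r}$ with $\tilde\Xb\tilde\Xb^\top=\Xb\Xb^\top$ trick, so no substantive divergence.
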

The gradient lower bound can be also interpreted as the Polyak-\L{}ojasiewicz condition, which is essential to the linear convergence rate. The gradient upper bound is crucial to bound the trajectory length, since this lemma requires that $\max_{l\in[L]}\|\Wb_l\|\le 0.5/L$.

The following lemma proves the smoothness property of the training loss function $L(\Wb)$ when $\Wb$ is staying inside a certain region. 
Its proof is in Section~\ref{s:restricted_smooth}.
\begin{lemma}\label{lemma:restricted_smooth}
For any two collections of weight matrices, denoted by $\tilde\Wb = \{\tilde \Wb_1,\dots,\tilde\Wb_L\}$ and $\Wb = \{\Wb_1,\dots,\Wb_L\}$, satisfying $\max_{l\in [L]}\|\Wb_l\|_F, \max_{l\in [L]}\|\tilde\Wb_l\|_F\le 0.5/L$ that, it holds that
\begin{align*}
L(\tilde \Wb) - L(\Wb) &\le \sum_{l=1}^L\la\nabla_{\Wb_l}L(\Wb), \tilde\Wb_l -\Wb_l\ra \\
&\qquad+ L\|\Ab\|_2\|\Bb\|_2\|\Xb\|_2 \big(\sqrt{2 eL(\Wb)}+0.5e\|\Ab\|_2\|\Bb\|_2\|\Xb\|_2\big) 
      \sum_{l=1}^L\|\tilde \Wb_l - \Wb_l\|_F^2.
\end{align*}
\end{lemma}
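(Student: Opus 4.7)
\textbf{Proof proposal for Lemma \ref{lemma:restricted_smooth}.}

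The plan is to work directly with the end-to-end residual matrix and decompose the change in it into a first-order (linear) piece plus higher-order pieces, then absorb each piece into either the inner-product term or the quadratic term on the right-hand side. Write $\Pi_l = \Ib + \Wb_l$ and $\tilde\Pi_l = \Ib + \tilde\Wb_l$, and set $\Rb = \Bb\Pi_L\cdots\Pi_1\Ab\Xb - \Yb$ and $\tilde\Rb = \Bb\tilde\Pi_L\cdots\tilde\Pi_1\Ab\Xb - \Yb$, so that $L(\Wb)=\tfrac12\|\Rb\|_F^2$ and $L(\tilde\Wb)=\tfrac12\|\tilde\Rb\|_F^2$. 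The elementary identity
\begin{equation*}
L(\tilde\Wb) - L(\Wb) = \langle \Rb, \tilde\Rb - \Rb\rangle + \tfrac12\|\tilde\Rb - \Rb\|_F^2
\end{equation*}
reduces the problem to bounding the two terms on the right.

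First I would apply the standard product telescoping identity
\begin{equation*}
\tilde\Pi_L\cdots\tilde\Pi_1 - \Pi_L\cdots\Pi_1 = \sum_{l=1}^L \tilde\Pi_L\cdots\tilde\Pi_{l+1}(\tilde\Wb_l-\Wb_l)\Pi_{l-1}\cdots\Pi_1,
\end{equation*}
and further rewrite each $\tilde\Pi_L\cdots\tilde\Pi_{l+1}$ as $\Pi_L\cdots\Pi_{l+1}$ plus a second telescoping remainder in the indices $k>l$. Plugging into $\tilde\Rb - \Rb$ splits it as $\Qb_1 + \Qb_2$, where $\Qb_1 = \sum_l \Bb\Pi_L\cdots\Pi_{l+1}(\tilde\Wb_l-\Wb_l)\Pi_{l-1}\cdots\Pi_1\Ab\Xb$ is linear in $\tilde\Wb_l-\Wb_l$ and $\Qb_2$ is the second-order remainder. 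Using the explicit formula for $\nabla_{\Wb_l}L(\Wb)$ (which drops out of differentiating $L$ in $\Wb_l$ with the other layers held fixed), I would verify the matching $\langle \Rb,\Qb_1\rangle = \sum_{l=1}^L\langle\nabla_{\Wb_l}L(\Wb),\tilde\Wb_l-\Wb_l\rangle$, yielding exactly the first term on the RHS of the lemma.

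Next I would estimate the two leftover pieces $\langle\Rb,\Qb_2\rangle$ and $\tfrac12\|\tilde\Rb-\Rb\|_F^2$. The assumption $\|\Wb_l\|_F,\|\tilde\Wb_l\|_F\le 0.5/L$ gives $\|\Pi_l\|_2,\|\tilde\Pi_l\|_2\le 1+0.5/L$, hence any subproduct $\|\Pi_L\cdots\Pi_{l+1}\cdot\Pi_{l-1}\cdots\Pi_1\|_2\le (1+0.5/L)^{L-1}\le e^{1/2}$, and similarly for mixed products. Combining this with triangle inequality and the second telescoping expansion gives
\begin{equation*}
\|\tilde\Rb-\Rb\|_F\le e^{1/2}\|\Ab\|_2\|\Bb\|_2\|\Xb\|_2\sum_{l=1}^L\|\tilde\Wb_l-\Wb_l\|_F,\quad \|\Qb_2\|_F\le e\|\Ab\|_2\|\Bb\|_2\|\Xb\|_2\sum_{l<k}\|\tilde\Wb_l-\Wb_l\|_F\|\tilde\Wb_k-\Wb_k\|_F.
\end{equation*}
Applying Cauchy--Schwarz in the form $(\sum_l a_l)^2\le L\sum_l a_l^2$ and $\sum_{l<k}a_la_k\le\tfrac{L}{2}\sum_l a_l^2$, together with $\|\Rb\|_F=\sqrt{2L(\Wb)}$, converts these into the desired quadratic-in-$\|\tilde\Wb_l-\Wb_l\|_F$ bounds with the correct coefficients $L\|\Ab\|_2\|\Bb\|_2\|\Xb\|_2\sqrt{2eL(\Wb)}$ (from $|\langle\Rb,\Qb_2\rangle|$) and $\tfrac{eL}{2}\|\Ab\|_2^2\|\Bb\|_2^2\|\Xb\|_2^2$ (from $\tfrac12\|\tilde\Rb-\Rb\|_F^2$).

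The main obstacle is purely bookkeeping: one must carry the subproduct operator-norm estimates through the doubly nested telescoping without introducing spurious factors of $e$ (a naive bound via $\|\prod_{j>l}\tilde\Pi_j\|\le e^{1/2}$ and $\|\prod_{j<l}\Pi_j\|\le e^{1/2}$ separately can look like it gives $e$ per layer, but the tighter grouping $(1+0.5/L)^{L-1}\le e^{1/2}$ is what delivers the constant $0.5e$ rather than $0.5e^2$ on the squared term). Once this is handled, identifying the first-order term as the gradient inner product and assembling the two second-order bounds completes the argument.
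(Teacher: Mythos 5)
Your proposal is correct and follows essentially the same route as the paper's proof: the same exact quadratic identity for $L(\tilde\Wb)-L(\Wb)$, the same two-stage telescoping that isolates a linear-in-$(\tilde\Wb_l-\Wb_l)$ piece matching the gradient inner product plus a second-order remainder, and the same operator-norm control $(1+0.5/L)^{L-1}\le\sqrt{e}$ followed by Cauchy--Schwarz to convert $(\sum_l a_l)^2$ into $L\sum_l a_l^2$. The only cosmetic differences are that you telescope with the tilde factors on the high-index side (the paper uses the low-index side) and you track the second-order remainder via $\sum_{l<k}a_la_k\le\tfrac{L}{2}\sum_l a_l^2$ rather than $(\sum_l a_l)^2\le L\sum_l a_l^2$, which even makes your intermediate constant slightly tighter.
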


Based on these two lemmas, we are able to complete the proof of all theorems, which are provided as follows.

\subsection{Proof of Theorem \ref{theorem:GD}}

\begin{proof}[Proof of Theorem \ref{theorem:GD}]
In order to simplify the proof, we use the short-hand notations $\lambda_A$, $\mu_A$, $\lambda_B$ and $\mu_B$ to denote $\|\Ab\|_2$, $\sigma_{\min}(\Ab)$, $\|\Bb\|_2$ and $\sigma_{\min}(\Bb)$ respectively.
Specifically, we rewrite the condition on $\Ab$ and $\Bb$ as follows
\begin{align*}
\frac{\mu_A^2\mu_B^2}{\lambda_A\lambda_B}\ge \frac{4\sqrt{2e^3}\|\Xb\|_2}{\sigma_r^2(\Xb)}\cdot\big(L(\Wb^{(0)}) - L(\Wb^*)\big)^{1/2}.
\end{align*}
We prove the theorem by induction on the update number
$s$, using the following two-part
inductive hypothesis:
\begin{enumerate}[leftmargin = *, label=(\roman*), itemsep=0mm, topsep=0pt]
    \item  $\max_{l\in[L]}\|\Wb_l^{(s)}\|_F\le 0.5/L$, \label{arg:gd1}
    \item  $L(\Wb^{(s)})  - L(\Wb^*)
\le \bigg(1 - \frac{\eta L \mu_A^2\mu_B^2\sigma_{r}^2(\Xb)}{e}\bigg)^s\cdot\big(L(\Wb^{(0)})  - L(\Wb^*)\big)$ \label{arg:gd2}.
\end{enumerate}

First, it can be easily verified that this holds for $s=0$. 
Now, assume that the inductive hypothesis holds for $s < t$.

\noindent\textbf{Induction for Part \ref{arg:gd1}:} We first prove that $\max_{l\in[L]}\|\Wb_l^{(t)}\|_F\le 0.5/L$. By triangle inequality and the update rule of gradient descent, we have
\begin{align*}
\|\Wb_l^{(t)}\|_F &\le\sum_{s=0}^{t-1}\eta\|\nabla_{\Wb_l}L(\Wb^{(s)})\|_F    \notag\\
&\le \eta\sum_{s=0}^{t-1} \sqrt{2e}\lambda_A\lambda_B\|\Xb\|_2 \cdot\big(L(\Wb^{(s)}) - L(\Wb^*)\big)^{1/2}\notag\\
&\le \sqrt{2e}\eta \lambda_A\lambda_B\|\Xb\|_2\cdot\big(L(\Wb^{(0)}) - L(\Wb^*)\big)^{1/2}\cdot \sum_{s=0}^{t-1}\bigg(1 - \frac{\eta L \mu_A^2\mu_B^2\sigma_{r}^2(\Xb)}{e}\bigg)^{s/2}
\end{align*}
where the second inequality follows from Lemma \ref{lemma:grad_bound}, and the third inequality follows from the inductive hypothesis.
Since
$\sqrt{1-x}\le 1- x/2$ for any $x\in[0,1]$, we further have
\begin{align*}
\|\Wb_l^{(t)}\|_F &\le \sqrt{2e}\eta \lambda_A\lambda_B\|\Xb\|_2\cdot\big(L(\Wb^{(0)}) - L(\Wb^*)\big)^{1/2}\cdot \sum_{s=0}^{t-1}\bigg(1 - \frac{\eta L \mu_A^2\mu_B^2\sigma_{r}^2(\Xb)}{2 e}\bigg)^{s}\notag\\
&\le \frac{\sqrt{8e^3}  \lambda_A\lambda_B\|\Xb\|_2}{L\mu_A^2\mu_B^2\sigma_{r}^2(\Xb)}\cdot\big(L(\Wb^{(0)}) - L(\Wb^*)\big)^{1/2}.
\end{align*}
Under the condition that $\mu_A^2\mu_B^2/(\lambda_A\lambda_B)\ge 2\sqrt{8e^3}\|\Xb\|_2\big(L(\Wb^{(0)}) - L(\Wb^*)\big)^{1/2}/\sigma_r^2(\Xb)$, it can be readily verified that $\|\Wb_l^{(t)}\|_F\le 0.5/L$. 
Since this holds for all $l\in[L]$, we have proved Part \ref{arg:gd1}
of the inductive step,
i.e.,  $\max_{l\in[L]}\|\Wb_l^{(t)}\|_F\le 0.5/L$.

\noindent\textbf{Induction for Part \ref{arg:gd2}:} 
Now we prove
Part \ref{arg:gd2} of the inductive step, bounding the
improvement in the objective function.
Note that we have already shown that $\Wb^{(t)}$ satisfies $\max_{l\in[L]}\|\Wb_l^{(t)}\|_F\le 0.5/L$, thus
by Lemma \ref{lemma:restricted_smooth} we have
\begin{align*}
L(\Wb^{(t)})  &\le L(\Wb^{(t-1)}) -\eta\sum_{l=1}^L\big\|\nabla_{\Wb_l}L(\Wb^{(t-1)})\big\|_F^2 \notag\\
& \qquad + \eta^2 L \lambda_A\lambda_B\|\Xb\|_2\cdot\big(\sqrt{eL(\Wb^{(t-1)})} + 0.5e\lambda_A\lambda_B\|\Xb\|_2\big)\cdot\sum_{l=1}^L\|\nabla_{\Wb_l} L(\Wb^{(t-1)})\|_F^2 ,
\end{align*}
where we use the fact that $\Wb_l^{(t)} - \Wb_l^{(t-1)} = -\eta\nabla_{\Wb_l} L(\Wb^{(l-1)})$.
Note that $L(\Wb^{(t-1)})\le L(\Wb^{(0)})$ and the step size is set to be
\begin{align*}
\eta = \frac{1}{2L\lambda_A\lambda_B\|\Xb\|_2\cdot\big(\sqrt{eL(\Wb^{(0)})} + 0.5e\lambda_A\lambda_B\|\Xb\|_2\big)\big)},
\end{align*}
so that we have
\begin{align*}
L(\Wb^{(t)}) - L(\Wb^{(t-1)}) &\le -\frac{\eta}{2}\sum_{l=1}^L\big\|\nabla_{\Wb_l}L(\Wb^{(t-1)})\big\|_F^2\notag\\
&\le -\frac{\eta L \mu_A^2\mu_B^2\sigma_{r}^2(\Xb)}{e}\big(L(\Wb^{(t-1)}) - L(\Wb^*)\big),
\end{align*}
where the second inequality is by  Lemma \ref{lemma:grad_bound}.
Applying the inductive hypothesis, we get
\begin{align}\label{eq:contraction_GD}
L(\Wb^{(t)})  - L(\Wb^*)&\le \bigg(1 - \frac{\eta L \mu_A^2\mu_B^2\sigma_{r}^2(\Xb)}{e}\bigg)\cdot\big(L(\Wb^{(t-1)})  - L(\Wb^*)\big)\notag\\
&\le \bigg(1 - \frac{\eta L \mu_A^2\mu_B^2\sigma_{r}^2(\Xb)}{e}\bigg)^{t}\cdot\big(L(\Wb^{(0)})  - L(\Wb^*)\big),
\end{align}
which completes the proof of the inductive step of Part \ref{arg:gd2}. Thus we are able to complete the proof.


\end{proof}

\subsection{Proof of Proposition \ref{prop:initial_loss}}
\label{s:initial_loss}
\begin{proof}[Proof of Proposition \ref{prop:initial_loss}]
We prove the bounds on the singular values and initial training loss separately. 

\textbf{Bounds on the singular values:}
Specifically, we set the neural network width as
\begin{align*}
m\ge 100\cdot\big(\sqrt{\max\{d,k\}}+\sqrt{2\log(12/\delta)} \big)^2
\end{align*}
By Corollary 5.35 in \citet{vershynin2010introduction}, we know that for a matrix $\Ub\in\RR^{d_1\times d_2}$ ($d_1\ge d_2$) with entries independently generated by standard normal distribution, with probability at least $1-2\exp(-t^2/2)$, its singular values satisfy
\begin{align*}
\sqrt{d_1} - \sqrt{d_2} - t\le \sigma_{\min}(\Ub)\le \sigma_{\max}(\Ub)\le \sqrt{d_1}+\sqrt{d_2} + t.    
\end{align*}
Based on our constructions of $\Ab$ and $\Bb$, we know that each entry of $\frac{1}{\beta}\Bb$ and $\frac{1}{\alpha} \Ab$ follows standard Gaussian distribution. Therefore, set $t = 2\sqrt{\log(12/\delta)}$ and apply union bound, with probability at least $1-\delta/3$, the following holds,
\begin{align*}
&\alpha \big(\sqrt{m} - \sqrt{d} - 2\sqrt{\log(12/\delta)}\big)\le \sigma_{\min}(\Ab)\le \sigma_{\max}(\Ab)\le \alpha \big(\sqrt{m} + \sqrt{d} + 2\sqrt{\log(12/\delta)}\big)\notag\\
& \beta \big(\sqrt{m} - \sqrt{k} - 2\sqrt{\log(12/\delta)}\big)\le \sigma_{\min}(\Bb)\le \sigma_{\max}(\Bb)\le \beta \big(\sqrt{m} + \sqrt{k} + 2\sqrt{\log(12/\delta)}\big),
\end{align*}
where we use the facts that $\sigma_{\min}(\kappa \Ub) = \kappa \sigma_{\min}(\Ub)$ and $\sigma_{\max}(\kappa \Ub) = \kappa \sigma_{\max}(\Ub)$ for any scalar $\kappa$ and matrix $\Ub$. Then applying our choice of $m$, we have with probability at least $1-\delta/3$,
\begin{align*}
0.9 \alpha \sqrt{m} \le\sigma_{\min}(\Ab)\le \sigma_{\max}(\Ab)\le 1.1 \alpha \sqrt{m} \quad \mbox{and}\quad 0.9 \beta \sqrt{m} \le \sigma_{\min}(\Bb)\le \sigma_{\max}(\Bb)\le 1.1\beta \sqrt{m}.
\end{align*}
This completes the proof of the bounds on the singular values of $\Ab$ and $\Bb$.

\textbf{Bounds on the initial training loss:} The proof in this part is similar to the proof of Proposition 6.5 in \citet{du2019width}.
Since we apply zero initialization on all hidden layers, by Young's inequality, we have the following for any $(\xb,\yb)$,
\begin{align}\label{eq:bound_l0}
\ell(\Wb^{(0)}; \xb,\yb) = \frac{1}{2}\|\Bb\Ab\xb - \yb\|_2^2 \le \|\Bb\Ab\xb\|_2^2 + \|\yb\|_2^2.
\end{align}
Since each entry of $\Bb$ is generated from $\cN(0, \beta^2)$,
conditioned on $\Ab$, each entry of $\Bb\Ab\xb$ is distributed
according to $\cN(0, \beta^2 || \Ab\xb ||_2^2)$, so
$\frac{\|\Bb\Ab\xb\|_2^2}{\|\Ab\xb\|_2^2
  \beta^2}$ follows a $\chi_{k}^2$ distribution. 
Applying a standard tail
bound for $\chi_{k}^2$ distribution, we have, with probability at least
$1-\delta'$,
\begin{align*}
\frac{\|\Bb\Ab\xb\|_2^2}{\|\Ab\xb\|_2^2}\le \beta^2 k (1 + 2 \sqrt{\log(1/\delta')/k}+2\log(1/\delta')).
\end{align*}
Note that by our bounds of the singular values, if $m\ge 100\cdot\big(\sqrt{\max\{d,k\}}+\sqrt{2\log(8/\delta)} \big)^2$, we have with probability at least $1-\delta/3$, $\|\Ab\|_2\le 1.1 \alpha \sqrt{m}$, thus, it follows that with probability at least $1-\delta' - \delta$,
\begin{align*}
\|\Bb\Ab\xb\|_2^2\le 1.21 \alpha^2 \beta^2 k m \big[1 + 2 \sqrt{\log(1/\delta')}+2\log(1/\delta')\big]\|\xb\|_2^2.
\end{align*}
Then by union bound, it is evident that with probability $1-n\delta'-\delta/3$,
\begin{align*}
\|\Bb\Ab\Xb\|_F^2 = \sum_{i=1}^n \|\Bb\Ab\xb_i\|_2^2 \le    1.21 \alpha^2 \beta^2 k m \big[1 + 2 \sqrt{\log(1/\delta')}+2\log(1/\delta')\big]\|\Xb\|_F^2.
\end{align*}
Set $\delta' = \delta/(3n)$, suppose $\log(1/\delta')\ge 1$, we have with probability at least $1-2\delta/3$,
\begin{align*}
L(\Wb^{(0)}) &= \frac{1}{2}\|\Bb\Ab\Xb-\Yb\|_F^2 \leq \|\Bb\Ab\Xb\|_F^2 + \|\Yb\|_F^2 \le 6.05 \alpha^2 \beta^2 k m \log(2n/\delta)\|\Xb\|_F^2 + \|\Yb\|_F^2.
\end{align*}
This completes the proof of the bounds on the initial training loss.

Applying a union bound on these two parts, we are able to complete the proof.
\end{proof}

\subsection{Proof of Corollary \ref{c:random.works}}
\label{s:random.works}
\begin{proof}[Proof of Corollary \ref{c:random.works}]
Recall the condition in Theorem \ref{theorem:GD}:
\begin{align}\label{eq:condition_GD_coro}
\frac{\sigma_{\min}^2(\Ab)\sigma_{\min}^2(\Bb)}{\|\Ab\|_2\|\Bb\|_2}\ge C\cdot\frac{ \|\Xb\|_2}{\sigma_r^2(\Xb)}\cdot\big(L(\Wb^{(0)}) - L(\Wb^*)\big)^{1/2}.
\end{align}
By Proposition \ref{prop:initial_loss}, we know that, with probability $1 - \delta$,  
\begin{align*}
& \frac{\sigma_{\min}^2(\Ab)\sigma_{\min}^2(\Bb)}{\|\Ab\|_2\|\Bb\|_2} = \Theta \big( m \big), \\
 & \frac{\|\Xb\|_2}{\sigma_r(\Xb)}\cdot\big(L(\Wb^{(0)}) - L(\Wb^*)\big)^{1/2} = 
O\left(\frac{ (\sqrt{ k m \log(n/\delta)} + 1)\|\Xb\|_F\|\Xb\|_2}{\sigma_r(\Xb)} \right).
\end{align*}
Note that $\|\Xb\|_F\le \sqrt{r}\|\Xb\|_2$, thus the condition \eqref{eq:condition_GD_coro} can be satisfied if $m = \Omega(kr\kappa^2 \log(n/\delta))$
where $\kappa = \|\Xb\|_2^2/\sigma_r^2(\Xb)$.

Theorem~\ref{theorem:GD} implies that 
$
L(\Wb^{(t)})  - L(\Wb^*) \leq \epsilon
$
after $T = O\left(\frac{1}{\eta L \sigma_{\min}^2(\Ab)\sigma_{\min}^2(\Bb) \sigma_{r}^2(\Xb)} \log \frac{1}{\epsilon} \right)$
iterations.  Plugging in the value of $\eta$, we get
\[
T = O\left(\frac{\|\Ab\|_2\|\Bb\|_2\|\Xb\|_2\cdot\big(\sqrt{L(\Wb^{(0)})} + \|\Ab\|_2\|\Bb\|_2\|\Xb\|_2\big)}{\sigma_{\min}^2(\Ab)\sigma_{\min}^2(\Bb) \sigma_{r}^2(\Xb)} \log \frac{1}{\epsilon} \right).
\]
By Proposition~\ref{prop:initial_loss}, we have
\begin{align*}
T 
 & = O\left(\frac{\|\Ab\|_2\|\Bb\|_2\|\Xb\|_2\cdot\big(
\sqrt{k m \log(n/\delta)} \|\Xb\|_F
+ \|\Ab\|_2\|\Bb\|_2\|\Xb\|_2\big)}{\sigma_{\min}^2(\Ab)\sigma_{\min}^2(\Bb) \sigma_{r}^2(\Xb)} \log \frac{1}{\epsilon} \right) \\
 & = O\left(\frac{\|\Xb\|_2\cdot\big(
 \sqrt{k m \log(n/\delta)} \|\Xb\|_F
+  m \|\Xb\|_2\big)}{ m \sigma_{r}^2(\Xb)} \log \frac{1}{\epsilon} \right) \\
 & = O\left(\frac{\|\Xb\|_2\cdot\big(
\sqrt{k r \log(n/\delta)/m} \|\Xb\|_2
+ \|\Xb\|_2\big)}{\sigma_{r}^2(\Xb)} \log \frac{1}{\epsilon} \right) \\
 & = O\left(\kappa \log \frac{1}{\epsilon} \right) \\
\end{align*}
for $m = \Omega(k r \log(n/\delta))$, completing the proof.
\end{proof}

\subsection{Proof of Theorem \ref{theorem:sgd}}

\begin{proof}[Proof of Theorem \ref{theorem:sgd}]
The guarantee is already achieved by $\Wb^{(0)}$ if $\epsilon \geq L(\Wb^{(0)})-L(\Wb^*)$, so
we may assume without loss of generality that $\epsilon < L(\Wb^{(0)})-L(\Wb^*)$.

Similar to the proof of Theorem \ref{theorem:GD}, we use the short-hand notations $\lambda_A$, $\mu_A$, $\lambda_B$ and $\mu_B$ to denote $\|\Ab\|_2$, $\sigma_{\min}(\Ab)$, $\|\Bb\|_2$ and $\sigma_{\min}(\Bb)$ respectively. Then we rewrite the condition on $\Ab$ and $\Bb$, and our choices of $\eta$ and $T$ as follows 
\begin{align*}
\frac{\mu_A^2\mu_B^2}{\lambda_A\lambda_B}&\ge \frac{\sqrt{8e^3}n\|\Xb\|_2\cdot \log(L(\Wb^{(0)})/\epsilonprime)}{B\sigma_r^2(\Xb)}\cdot\sqrt{2L(\Wb^{(0)})}\notag\\
\eta &\le \frac{B\mu_A^2\mu_B^2\sigma_{r}^2(\Xb) }{6e^3Ln\lambda_A^4\lambda_B^4\|\Xb\|_2^2}\cdot \min\bigg\{  \frac{\epsilonprime}{\|\Xb\|_{2,\infty}^2L(\Wb^*)},\frac{\log^2(2)B}{3n\|\Xb\|_2^2\cdot\log(T/\delta)\log(L(\Wb^{(0)})/\epsilonprime)}\bigg\},\notag\\
T &= \frac{e}{\eta L \mu_A^2\mu_B^2\sigma_r^2(\Xb)}\cdot\log\bigg(\frac{L(\Wb^{(0)})-L(\Wb^*)}{\epsilonprime}\bigg),
\end{align*}
 where we set $\epsilonprime = \epsilon/3$
 for the purpose of the proof. 

We first prove the convergence guarantees on expectation, and then apply the Markov inequality.

For SGD, our guarantee is not made on the last iterate but the best one. 
Define $\mathfrak{E}_t$ to be the event that
there is no $s \leq t$ such that $L(\Wb^{(t)})-L(\Wb^*)\le \epsilonprime$.
If $\ind(\EEE_t) = 0$, then there is an iterate $\Wb_s$ with $s\le t$ that achieves training loss within $\epsilonprime$ of optimal.


Similar to the proof of Theorem \ref{theorem:GD}, we prove the  theorem by induction on the update number $s$, using the following  inductive hypothesis: either $\ind(\EEE_s) = 0$ or the following three inequalities hold,
\begin{enumerate}[leftmargin = *, label=(\roman*), itemsep=0mm, topsep=0pt]
    \item $\max_{l\in[L]}\|\Wb^{(s)}_l\|_F\le \frac{\sqrt{2e}s\eta n\lambda_A\lambda_B\|\Xb\|_2}{B}\cdot \sqrt{2L(\Wb^{(0)})\cdot}$ \label{arg:sgd1}
    \item $\EE\big[\big(L(\Wb^{(s)}) - L(\Wb^*) \big)  \big]\le \Big(1-\frac{\eta L \mu_A^2\mu_B^2\sigma_{r}^2(\Xb)}{e} \Big)^{s}\cdot\big(L(\Wb^{(0)}) - L(\Wb^*)\big)$ \label{arg:sgd2}
    \item $L(\Wb^{(s)})\le 2L(\Wb^{(0)})$, \label{arg:sgd3}
\end{enumerate}
where the expectation in Part \ref{arg:sgd2} is with respect to all of the random choices of minibatches.
Clearly, if  $\ind(\EEE_s) = 0$, we have already finished the proof since there is an iterate that achieves training loss within $\epsilonprime$ of optimal.  
Recalling that $\epsilon < L(\Wb^{(0)})-L(\Wb^*)$,
it is easy to verify that the inductive hypothesis
holds when $s = 0$. 

For the inductive step, we will prove that if the inductive hypothesis holds
for $s < t$, then it holds for $s = t$.  
When $\ind(\EEE_{t-1})=0$, then $\ind(\EEE_{t})$ is also 0 and we are done. Therefore, the remaining part is to prove the inductive hypothesis for $s=t$ under the assumption that $\ind(\EEE_{t-1})=1$, which implies that \ref{arg:sgd1}, \ref{arg:sgd2}
and \ref{arg:sgd3} hold for all $s \leq t-1$. 
For Parts \ref{arg:sgd1} and \ref{arg:sgd2}, we will directly prove that the corresponding two inequalities hold. For Part \ref{arg:sgd3}, we will prove that either this inequality holds or $\ind(\EEE_t) = 0$.

\noindent\textbf{Induction for Part \ref{arg:sgd1}: } As we mentioned, this part will be proved under the assumption $\ind(\EEE_{t-1}) =1$. Besides, combining Part \ref{arg:sgd1} for $s= t-1$ and our choice of $\eta$ and $T$ implies that $\max_{l\in[L]}\|\Wb_l^{(t-1)}\|_F\le 0.5/L$. Then by triangle inequality, we have the following for $\|\Wb_l^{(t)}\|_F$, 
\begin{align*}
\|\Wb_l^{(t)}\|_F &\le \|\Wb_l^{(t-1)}\|_F + \eta\|\Gb_l^{(t-1)}\|_F.
\end{align*}
By Lemma \ref{lemma:grad_bound}, we have
\begin{align*}
\|\Gb_l^{(t-1)}\|_F\le\frac{\sqrt{2e}n\lambda_A\lambda_B\|\Xb\|_2}{B}\cdot \sqrt{L(\Wb^{(t-1)})}.     
\end{align*}
Then we have
\begin{align}\label{eq:bound_distance_SGD}
\|\Wb_l^{(t)}\|_F  &\le \big(\|\Wb_l^{(t-1)}\|_F + \eta\|\Gb_l^{(t-1)}\|_F\big)\notag\\
&\le \|\Wb_l^{(t-1)}\|_F + \frac{\sqrt{2e}\eta n\lambda_A\lambda_B\|\Xb\|_2}{B}\cdot \sqrt{L(\Wb^{(t-1)})}.
\end{align}
By Part \ref{arg:sgd3} for $s = t-1$, we know that $L(\Wb^{(t-1)})\le 2L(\Wb^{(0)})$. Then by Part \ref{arg:sgd1} for $s = t-1$, it is evident that
\begin{align}\label{eq:bound_induction_distance}
\|\Wb_l^{(t)}\|_F &\le\frac{\sqrt{2e}t\eta n\lambda_A\lambda_B\|\Xb\|_2}{B}\cdot \sqrt{2L(\Wb^{(0)})\cdot}.
\end{align}
This completes the proof of the inductive step of Part \ref{arg:sgd1}.

\noindent\textbf{Induction for Part \ref{arg:sgd2}:}
As we previously mentioned, we will prove this part under the assumption $\ind(\EEE_{t-1}) = 1$.  Thus, as mentioned earlier, the inductive
hypothesis implies that $\max_{l\in[L]} \|\Wb_l^{(t-1)}\|_F \le 0.5/L$.
By Part \ref{arg:sgd1} for $s = t$, which has been verified in \eqref{eq:bound_induction_distance}, it can be proved that $\max_{l\in[L]} \|\Wb_l^{(t)}\|_F \le 0.5/L$, then we have the following by Lemma \ref{lemma:restricted_smooth},
\begin{align}\label{eq:one_step_decrease_sgd}
L(\Wb^{(t)}) - L(\Wb^{(t-1)})
&\le-\eta\sum_{l=1}^L\big\la\nabla_{\Wb_l}L(\Wb^{(t-1)}), \Gb_l^{(t-1)}\big\ra \notag\\
&\qquad + \eta^2L\lambda_A\lambda_B\|\Xb\|_2\cdot\big(\sqrt{eL(\Wb^{(t-1)})} + 0.5e\lambda_A\lambda_B\|\Xb\|_2\big)\cdot\sum_{l=1}^L\|\Gb_l^{(t-1)}\|_F^2.
\end{align}
By our condition on $\Ab$ and $\Bb$, it is easy to verify that
\begin{align*}
\lambda_A\lambda_B\ge\frac{\mu_A^2\mu_B^2}{\lambda_A\lambda_B}\ge \frac{2\sqrt{2e^{-1}L(\Wb^{(0)})}}{\|\Xb\|_2}. 
\end{align*}
Then by Part \ref{arg:sgd3} for $s = {t-1}$ 
\eqref{eq:one_step_decrease_sgd} yields
\begin{align}\label{eq:onestep_SGD1}
L(\Wb^{(t)}) - L(\Wb^{(t-1)})
&\le-\eta\sum_{l=1}^L\big\la\nabla_{\Wb_l}L(\Wb^{(t-1)}), \Gb_l^{(t-1)}\big\ra  +e\eta^2L\lambda_A^2\lambda_B^2\|\Xb\|_2^2\cdot\sum_{l=1}^L\|\Gb_l^{(t-1)}\|_F^2.
\end{align}
Taking expectation conditioning on $\Wb^{(t-1)}$ gives
\begin{align}\label{eq:one_step_sgd}
\EE\big[L(\Wb^{(t)})|\Wb^{(t-1)}\big] - L(\Wb^{(t-1)})&\le -\eta \sum_{l=1}^L\big\|\nabla_{\Wb_l}L(\Wb^{(t-1)})\|_F^2 \notag\\
&\qquad+ e\eta^2L\lambda_A^2\lambda_B^2\|\Xb\|_2^2\sum_{l=1}^L\EE\big[\|\Gb_l^{(t-1)}\|_F^2|\Wb^{(t-1)}\big].
\end{align}
Note that, for $i$ sampled uniformly from $\{ 1,..., n\}$, the expectation $\EE[\|\Gb_l^{(t-1)}\|_F^2|\Wb^{(t-1)}]$ can be upper bounded by
\begin{align}\label{eq:bound_variance}
\EE[\|\Gb_l^{(t-1)}\|_F^2|\Wb^{(t-1)}] &= \EE\big[\|\Gb_l^{(t-1)} - \nabla_{\Wb_l} L(\Wb^{(t-1)})\|_F^2|\Wb^{(t-1)}\big] + \|\nabla_{\Wb_l} L(\Wb^{(t-1)})\|_F^2\notag\\
%
&\le \frac{n^2}{B}\EE[\|\nabla_{\Wb_l}\ell(\Wb^{(t-1)}; \xb_i, \yb_i)\|_F^2|\Wb^{(t-1)}] + \|\nabla_{\Wb_l} L(\Wb^{(t-1)})\|_F^2.
\end{align}
By Lemma \ref{lemma:grad_bound}, we have
\begin{align*}
\EE[\|\nabla_{\Wb_l}\ell(\Wb^{(t-1)}; \xb_i, \yb_i)\|_F^2|\Wb^{(t-1)}]&\le 2e\lambda_A^2\lambda_B^2\EE[\|\xb_i\|_2^2\ell(\Wb^{(t-1)};\xb_i,\yb_i)|\Wb^{(t-1)}]\notag\\
&\le \frac{2e\lambda_A^2\lambda_B^2}{n}\sum_{i=1}^n\|\xb_i\|_2^2 \ell(\Wb^{(t-1)};\xb_i,\yb_i)\notag\\
&\le \frac{2e\lambda_A^2\lambda_B^2\|\Xb\|_{2,\infty}^2L(\Wb^{(t-1)})}{n}.
\end{align*}
Plugging the above inequality into \eqref{eq:bound_variance} and \eqref{eq:one_step_sgd}, we get
\begin{align*}
&\EE\big[L(\Wb^{(t)})|\Wb^{(t-1)}\big] - L(\Wb^{(t-1)})\notag\\
&\le -\eta \sum_{l=1}^L\big\|\nabla_{\Wb_l}L(\Wb^{(t-1)})\|_F^2 \notag\\
&\qquad+ e\eta^2L\lambda_A^2\lambda_B^2\|\Xb\|_2^2\cdot\sum_{l=1}^L\bigg(\frac{2en\lambda_A^2\lambda_B^2\|\Xb\|_{2,\infty}^2L(\Wb^{(t-1)})}{B} + \|\nabla_{\Wb_l}L(\Wb^{(t-1)})\|_F^2\bigg).
\end{align*}
Recalling that $\eta \le 1/(6eL\lambda_A^2\lambda_B^2\|\Xb\|_2^2)$, we have
\begin{align}\label{eq:one_step_SGD_temp2}
\EE\big[L(\Wb^{(t)})|\Wb^{(t-1)}\big] - L(\Wb^{(t-1)})&\le -\frac{5\eta}{6} \sum_{l=1}^L\big\|\nabla_{\Wb_l}L(\Wb^{(t-1)})\|_F^2\notag\\
&\qquad + \frac{2e^2\eta^2L^2n\lambda_A^4\lambda_B^4\|\Xb\|_2^2\|\Xb\|_{2,\infty}^2L(\Wb^{(t-1)})}{B}.   
\end{align}
By Lemma \ref{lemma:grad_bound}, we have
\begin{align*}
\sum_{l=1}^L\|\nabla_{\Wb_l}L(\Wb^{(t-1)})\|_F^2 \ge 2e^{-1}L\mu_A^2\mu_B^2\sigma_r^2(\Xb)\big(L(\Wb^{(t-1)}) - L(\Wb^*)\big).
\end{align*}
If we set
\begin{align}\label{eq:eta_1}
\eta\le \frac{B\mu_A^2\mu_B^2\sigma_{r}^2(\Xb)}{6e^3Ln\lambda_A^4\lambda_B^4\|\Xb\|^2_2\|\Xb\|_{2,\infty}^2},
\end{align}
then \eqref{eq:one_step_SGD_temp2} yields
\begin{align}
&\EE\big[L(\Wb^{(t)})|\Wb^{(t-1)}\big] - L(\Wb^{(t-1)})\notag\\
&\le -\frac{5\eta L\mu_A^2\mu_B^2\sigma_r^2(\Xb)}{3e}\big(L(\Wb^{(t-1)}) - L(\Wb^*)\big) \notag\\
&\hspace{0.2in}\quad + \frac{2e^2\eta^2L^2n\lambda_A^4\lambda_B^4\|\Xb\|_2^2\|\Xb\|_{2,\infty}^2\big(L(\Wb^{(t-1)})-L(\Wb^*)\big)}{B}\notag\\
&\hspace{0.2in}\quad  +\frac{2e^2\eta^2L^2n\lambda_A^4\lambda_B^4\|\Xb\|_2^2\|\Xb\|_{2,\infty}^2L(\Wb^{*})}{B}\notag\\
\label{eq:one_step_SGD_temp3}
&\le - \frac{4\eta L\mu_A^2\mu_B^2\sigma_r^2(\Xb)}{3e} \big(L(\Wb^{(t-1)}) - L(\Wb^*)\big)+\frac{2e^2\eta^2L^2n\lambda_A^4\lambda_B^4\|\Xb\|_2^2\|\Xb\|_{2,\infty}^2L(\Wb^{*})}{BL^2}.
\end{align}
Define 
\begin{align*}
\gamma_0 = \frac{4L\mu_A^2\mu_B^2\sigma_r^2(\Xb)}{3e},\quad\text{and}\quad \gamma_1 = \frac{2e^2\eta^2L^2n\lambda_A^4\lambda_B^4\|\Xb\|_2^2\|\Xb\|_{2,\infty}^2L(\Wb^{*})}{B},
\end{align*}
rearranging \eqref{eq:one_step_SGD_temp3} further gives
\begin{align}\label{eq:one_step_SGD_temp4}
\EE\big[L(\Wb^{(t)})|\Wb^{(t-1)}\big] - L(\Wb^*) &\le (1-\eta\gamma_0)\cdot\big(L(\Wb^{(t)}) - L(\Wb^*)\big)+\eta^2\gamma_1.
\end{align}
Therefore, setting the step size as
\begin{align*}
\eta \le \frac{\gamma_0\epsilonprime}{4\gamma_1} = \frac{B\mu_A^2\mu_B^2\sigma_{r}^2(\Xb)}{6e^3Ln\lambda_A^4\lambda_B^4\|\Xb\|^2_2\|\Xb\|_{2,\infty}^2}\cdot\frac{\epsilonprime}{L(\Wb^*)},
\end{align*}
we further have
\begin{align}\label{eq:one_step_SGD_temp5}
\EE\big[L(\Wb^{(t)})- L(\Wb^*)|\Wb^{(t-1)}\big] 
&\le \big[(1-\eta \gamma_0)\cdot [L(\Wb^{(t-1)}) - L(\Wb^*)] + \eta^2\gamma_1\big]\notag\\
&\le (1-3\eta\gamma_0/4)\cdot[L(\Wb^{(t-1)}) - L(\Wb^*)],
\end{align}
where the second inequality is by \eqref{eq:one_step_SGD_temp4} and the last inequality is by the fact that we assume  $\ind(\EEE_{t-1})=1$, which implies that $L(\Wb^{(t-1)}) - L(\Wb^*)\ge \epsilonprime \ge 4\gamma_1\eta/\gamma_0$. Further taking expectation over $\Wb^{(t-1)}$, we get
\begin{align*}
\EE\big[L(\Wb^{(t)})- L(\Wb^*)\big]&\le (1-3\eta\gamma_0/4)\cdot\EE\big[L(\Wb^{(t-1)}) - L(\Wb^*)\big]\notag\\
&\le (1-3\eta\gamma_0/4)^t\cdot\big(L(\Wb^{(0)}) - L(\Wb^*)\big),
\end{align*}
where the second inequality follows from Part \ref{arg:sgd2} for $s = t-1$ and the assumption that $\ind(\EEE_0) = 1$. Plugging the definition of $\gamma_0$, we are able to complete the proof of the inductive step of Part \ref{arg:sgd2}.

\noindent\textbf{Induction for Part \ref{arg:sgd3}:}
Recalling that for this part, we are going to prove that either $L(\Wb^{(t)})\le 2L(\Wb^{(0)})$ or $\ind(\EEE_t) = 0$, which is equivalent to $L(\Wb^{(t)})\cdot\ind(\EEE_t)\le 2L(\Wb^{(0)})$ since $L(\Wb^{(0)})$ and  $L(\Wb^{(t)})$ are both positive.
We will prove this by martingale inequality. Let $\cF_{t} = \sigma\{\Wb^{(0)},\cdots,\Wb^{(t)}\}$ be a $\sigma$-algebra, and  $\mathbb{F} = \{\cF_t\}_{t\geq 1}$ be a filtration. We first  prove that $\EE[L(\Wb^{(t)})\ind(\EEE_t)|\cF_{t-1}]\le L(\Wb^{(t-1)})\ind(\EEE_{t-1})$. Apparently, this inequality holds when $\ind(\EEE_{t-1}) = 0$ since both sides will be zero. Then if $\ind(\EEE_{t-1}) = 1$, by \eqref{eq:one_step_SGD_temp5} we have $\EE[L(\Wb^{(t)})|\Wb^{(t-1)}]\le L(\Wb^{(t-1)})$ since $L(\Wb^*)$ is the global minimum. Therefore,
\begin{align*}
\EE[L(\Wb^{(t)})\ind(\EEE_t)|\cF_{t-1},\Wb^{(t-1)}, \ind(\EEE_{t-1})=1]
&\le \EE[L(\Wb^{(t)})|\cF_{t-1}, \ind(\EEE_{t-1})=1]\notag\\
&\le L(\Wb^{(t-1)}).
\end{align*}
Combining these two cases, by Jensen's inequality, we further have 
\begin{align*}
\EE\big[\log\big(L(\Wb^{(t)})\ind(\EEE_t)\big)|\cF_{t-1}\big]&\le \log\big(\EE[L(\Wb^{(t)})\ind(\EEE_t)|\cF_{t-1}]\big)\notag\\
&\le  \log\big(L(\Wb^{(t-1)})\ind(\EEE_{t-1})\big),
\end{align*}
which implies  that $\{\log\big(L(\Wb^{(t)})\cdot\ind(\EEE_t)\big)\}_{t\ge 0}$ is a super-martingale. 
Then we will upper bound the martingale difference $\log\big(L(\Wb^{(t)})\cdot\ind(\EEE_t)\big) - \log\big(L(\Wb^{(t-1)})\cdot\ind(\EEE_{t-1})\big)$. Clearly this quantity would be zero if $\ind(\EEE_{t-1})=0$. Then if $\ind(\EEE_{t-1})=1$, by \eqref{eq:onestep_SGD1} we have
\begin{align*}
L(\Wb^{(t)}) \le L(\Wb^{(t-1)})+\eta\sum_{l=1}^L\|\nabla_{\Wb_l}L(\Wb^{(t-1)})\|_F \|\Gb_l^{(t-1)}\|_F + e\eta^2L\lambda_A^2\lambda_B^2\|\Xb\|_2^2\sum_{l=1}^L\|\Gb_l^{(t-1)}\|_F^2.
\end{align*}
By Part \ref{arg:sgd1} for $s = t-1$, Lemma \ref{lemma:grad_bound}, we further have
\begin{align}\label{eq:bound_martingale_difference}
L(\Wb^{(t)}) &\le \bigg(1 +\frac{2e\eta L n\lambda_A^2\lambda_B^2\|\Xb\|_2^2}{B} + \frac{2e^2n^2\eta^2L^2\lambda_A^4\lambda_B^4\|\Xb\|_2^4 }{B^2}\bigg)L(\Wb^{(t-1)})\notag\\
&\le \bigg(1 +\frac{3e\eta nL\lambda_A^2\lambda_B^2\|\Xb\|_2^2}{B}\bigg)L(\Wb^{(t-1)}),
\end{align}
where the second inequality follows from the choice of $\eta$ that
\begin{align*}
\eta \le \frac{B}{2enL\lambda_A^2\lambda_B^2\|\Xb\|_2^2}.
\end{align*}
Using the fact that $\ind(\EEE_t)\le 1$ and $\ind(\EEE_{t-1})= 1$, we further have
\begin{align*}
\log\big(L(\Wb^{(t)})\cdot\ind(\EEE_t)\big)\le\log\big(L(\Wb^{(t-1)})\cdot\ind(\EEE_{t-1})\big) + \frac{3e\eta L n\lambda_A^2\lambda_B^2\|\Xb\|_2^2}{B},
\end{align*}
which also holds for the case $\ind(\EEE_{t-1})=0$.
Recall that $\{\log\big(L(\Wb^{(t)})\cdot\ind(\EEE_t)\big)\}_{t\ge 0}$ is a super-martingale, thus by one-side Azuma's inequality, we have with probability at least $1- \delta'$,
\begin{align*}
\log\big(L(\Wb^{(t)})\cdot\ind(\EEE_t)\big)\le \log\big(L(\Wb^{(0)})\big) + \frac{3e\eta Ln\lambda_A^2\lambda_B^2\|\Xb\|_2^2}{B}\cdot\sqrt{2t\log(1/\delta')}.
\end{align*}
Setting $\delta' = \delta/T$, using the fact that $t\le T$ and leveraging our choice of $T$ and $\eta$, we have with probability at least $1-\delta/T$,
\begin{align*}
\sqrt{T}\eta = \frac{\log(2)B}{3e\sqrt{2\log(\delta/T)}Ln\lambda_A^2\lambda_B^2\|\Xb\|_2^2},
\end{align*}
which implies that
\begin{align}\label{eq:bound_func_sgd}
L(\Wb^{(t)})\ind(\EEE_t) \le \exp\Big[\log\big(L(\Wb^{(0)})\big) + \log(2)\Big]\le 2L(\Wb^{(0)}).
\end{align}
This completes the proof of the inductive step of Part \ref{arg:sgd3}. 

Note that this result holds with probability at least $1-\delta/T$. Thus applying union bound over all iterates $\{\Wb^{(t)}\}_{t=0,\dots,T}$ yields that all induction arguments hold for all $t\le T$ with probability at least $1-\delta$.

Moreover, plugging our choice of $T$ and $\eta$ into Part \ref{arg:sgd2} gives
\begin{align*}
\EE\big[L(\Wb^{(t)}) - L(\Wb^*)\big] \le \epsilonprime.
\end{align*}
By Markov inequality, we further have with probability at least $2/3$, it holds that $[L(\Wb^{(T)}) - L(\Wb^*)]\cdot \ind(\EEE_t)\le 3\epsilonprime = \epsilon$. Therefore, by union bound (together with the high probability arguments of \eqref{eq:bound_func_sgd}) and assuming $\delta<1/6$, we have with probability at least $2/3-\delta\ge 1/2$, one of the iterates of SGD can achieve training loss within $\epsilonprime$ of optimal. This completes the proof. 
\end{proof}

\subsection{Proof of Corollary \ref{corollary:SGD}}
\begin{proof}[Proof of Corollary \ref{corollary:SGD}]
Recall the condition in Theorem \ref{theorem:sgd}:
\begin{align}\label{eq:condition_SGD_coro}
\frac{\sigma_{\min}^2(\Ab)\sigma_{\min}^2(\Bb)}{\|\Ab\|_2\|\Bb\|_2}&\ge C\cdot\frac{n\|\Xb\|_2\cdot \log(L(\Wb^{(0)})/\epsilon)}{B\sigma_r^2(\Xb)}\cdot\sqrt{L(\Wb^{(0)})},
\end{align}
Then plugging in the results in Proposition \ref{prop:initial_loss} and the fact that $\|\Xb\|_F\le \sqrt{r}\|\Xb\|_2$, we obtain that condition \eqref{eq:condition_SGD_coro} can be satisfied if $m = O\big(kr\kappa^2\log^2(1/\epsilon)\cdot B/n\big)$.

In addition, consider sufficiently small $\epsilon$ such that $\epsilon\le \tilde O\big(B\|\Xb\|_{2,\infty}^2/(n\|\Xb\|_2^2)\big)$, then and use the fact that $\|\Xb\|_{2,\infty}\le \|\Xb\|_2$ we have $\eta = O\big(kB\epsilon/(L mn\kappa\|\Xb\|_2^2)\big)$ based on the results in Proposition \ref{prop:initial_loss}. Then in order to achieve $\epsilon$-suboptimal training loss, the iteration complexity is 
\begin{align*}
T = \frac{e}{\eta L\sigma_{\min}^2\sigma_{\min}^2(\Bb)\sigma_r^2(\Xb)}\log\bigg(\frac{L(\Wb^{(0)} - L(\Wb^*))}{\epsilon}\bigg) = O\big(\kappa^2\epsilon^{-1}\log(1/\epsilon)\cdot n/B\big).
\end{align*}
This completes the proof.
\end{proof}


\subsection{Proof of Theorem \ref{theorem:SGD2}}
\begin{proof}[Proof of Theorem \ref{theorem:SGD2}]
Similar to the proof of
Theorem \ref{theorem:sgd}, we set the neural network width and step size as follows,
\begin{align*}
\frac{\mu_A^2\mu_B^2}{\lambda_A\lambda_B}&\ge \frac{4\sqrt{2e^3}n\|\Xb\|_2}{B\sigma_r^2(\Xb)}\cdot\sqrt{2L(\Wb^{(0)})}\notag\\
\eta &\le \frac{\log(2)B^2\mu_A^2\mu_B^2(\Bb)\sigma_{r}^2(\Xb) }{54e^3Ln^2\lambda_A^4\lambda_B^4\|\Xb\|_2^4\cdot\log(T/\delta)},
\end{align*}
where $\lambda_A$, $\mu_A$, $\lambda_B$ and $\mu_B$ denote $\|\Ab\|_2$, $\sigma_{\min}(\Ab)$, $\|\Bb\|_2$ and $\sigma_{\min}(\Bb)$ respectively.

Different from the proof of Theorem \ref{theorem:sgd}, the convergence guarantee established in this regime is made on the last iterate of SGD, rather than the best one. Besides, we will  prove the theorem by induction on the update parameter $t$, using the following two-part inductive hypothesis:
\begin{enumerate}[leftmargin = *, label=(\roman*), itemsep=0pt, topsep=0pt]
    \item $\max_{l\in[L]}\|\Wb_l^{(t)}\|_F\le 0.5/L$
    \label{arg:sgd_21}

    \item $L(\Wb^{(t)}) \le 2L(\Wb^{(0)})\cdot \Big(1-\frac{s\eta L\mu_A^2\mu_B^2\sigma_{r}^2(\Xb)}{e} \Big)^s$. \label{arg:sgd_22}
\end{enumerate}

\noindent\textbf{Induction for Part \ref{arg:sgd_21}} 
We first prove that $\max_{l\in[L]}\|\Wb_l^{(t)}\|_F\le 0.5/L$. By triangle inequality and the update rule of SGD, we have
\begin{align*}
\|\Wb_l^{(t)}\|_F &\le\sum_{s=0}^{t-1}\eta\|\Gb_l\|_F    \notag\\
&\le \eta\sum_{s=0}^{t-1} \frac{\sqrt{2e}n\lambda_A\lambda_B\|\Xb\|_2}{B} \big(L(\Wb^{(s)}) - L(\Wb^*)\big)^{1/2}\notag\\
&\le \frac{\sqrt{2e}\eta n \lambda_A\lambda_B\|\Xb\|_2}{B}\cdot\big(L(\Wb^{(0)}) - L(\Wb^*)\big)^{1/2}\cdot \sum_{s=0}^{t-1}\bigg(1 - \frac{\eta L\mu_A^2\mu_B^2\sigma_{r}^2(\Xb)}{2e}\bigg)^{s}\notag\\
&\le \frac{\sqrt{8e^3} n \lambda_A\lambda_B\|\Xb\|_2}{BL\mu_A^2\mu_B^2\sigma_{r}^2(\Xb)}\cdot\big(L(\Wb^{(0)}) - L(\Wb^*)\big)^{1/2}
\end{align*}
where the second inequality is by Lemma \ref{lemma:grad_bound}, the third inequality follows from Part \ref{arg:sgd_22} for all $s< t$ and the fact that $(1-x)^{1/2}\le 1-x/2$ for all $x\in[0,1]$. Then applying our choice of $m$ implies that $\|\Wb^{(t)}_l\|_F\le 0.5/L$.

\noindent\textbf{Induction for Part \ref{arg:sgd_22}} 
Similar to Part \ref{arg:sgd2} and \ref{arg:sgd3} of the induction step in the proof of Theorem~\ref{theorem:sgd}, we first prove the convergence in expectation, and then use Azuma's inequality to get the high-probability based results. It can be simply verfied that
\begin{align*}
\lambda_A\lambda_B&\ge\frac{\mu_A^2\mu_B^2}{\lambda_A\lambda_B}\ge \frac{4\sqrt{2e^3}n\|\Xb\|_2\cdot \log(L(\Wb^{(0)})/\epsilon)}{B\sigma_r^2(\Xb)}\cdot\sqrt{2L(\Wb^{(0)})}\ge \frac{2\sqrt{2e^{-1}L(\Wb^{(0)})}}{\|\Xb\|_2}\notag\\
\eta&\le \frac{\log(2)B^2\mu_A^2\mu_B^2(\Bb)\sigma_{r}^2(\Xb) }{96e^3Ln^2\lambda_A^4\lambda_B^4\|\Xb\|_2^4\cdot\log(T/\delta)}\le \frac{B\mu_A^2\mu_B^2\sigma_{r}^2(\Xb)}{6e^3Ln\lambda_A^4\lambda_B^4\|\Xb\|^2_2\|\Xb\|_{2,\infty}^2}.
\end{align*}
Thus, we can leverage \eqref{eq:one_step_SGD_temp3} and obtain
\begin{align*}
\EE\big[L(\Wb^{(t)})|\Wb^{(t-1)}\big] - L(\Wb^{(t-1)})&\le - \frac{4\eta L \mu_A^2\mu_B^2\sigma_r^2(\Xb)}{3e} L(\Wb^{(t-1)}),
\end{align*}
where we use the fact that $L(\Wb^*) = 0$. Then by Jensen's inequality, we have
\begin{align*}
\EE\big[\log\big(L(\Wb^{(t)})\big)|\Wb^{(t-1)}\big]&\le \log\big(L(\Wb^{(t-1)})\big)  + \log\bigg(1 -  \frac{4\eta L \mu_A^2\mu_B^2\sigma_r^2(\Xb)}{3e}\bigg),\notag\\
&\le \log\big(L(\Wb^{(t-1)})\big)  -  \frac{4\eta L \mu_A^2\mu_B^2\sigma_r^2(\Xb)}{3e},
\end{align*}
where the second inequality is by $\log(1+x)\le x$. Then similar to the proof of Theorem \ref{theorem:sgd}, we are going to apply martingale inequality to prove this part. Let $\cF_{t} = \sigma\{\Wb^{(0)},\cdots,\Wb^{(t)}\}$ be a $\sigma$-algebra, and  $\mathbb{F} = \{\cF_t\}_{t\geq 1}$ be a filtration, the above inequality implies that
\begin{align}\label{eq:contraction_sgd}
\EE\big[\log\big(L(\Wb^{(t)})\big)|\cF_{t-1}\big] + \frac{4t\eta L \mu_A^2\mu_B^2\sigma_r^2(\Xb)}{3e}\le \log\big(L(\Wb^{(t-1)})\big)  +  \frac{4(t-1)\eta L \mu_A^2\mu_B^2\sigma_r^2(\Xb)}{3e},
\end{align}
which implies that $\big\{\log\big(L(\Wb^{(t)})\big)+ 4t\eta L\mu_A^2\mu_B^2\sigma_r^2(\Xb)/(3e)\big\}$ is a super-martingale.
Besides, by \eqref{eq:bound_martingale_difference}, we can obtain
\begin{align*}
\log\big(L(\Wb^{(t)})\big) \le     \log\big(L(\Wb^{(t-1)})\big) + \frac{3e\eta L n\lambda_A^2\lambda_B^2\|\Xb\|_2^2}{B},
\end{align*}
which implies that
\begin{align*}
 & \log\big(L(\Wb^{(t)})\big)+\frac{4t\eta L\mu_A^2\mu_B^2\sigma_r^2(\Xb)}{3e} \\
 & \le     \log\big(L(\Wb^{(t-1)})\big) +\frac{4(t-1)\eta L \mu_A^2\mu_B^2\sigma_r^2(\Xb)}{3e} + \frac{4e\eta L n\lambda_A^2\lambda_B^2\|\Xb\|_2^2}{B},
\end{align*}
where we again use the fact that $\log(1+x)\le x$. 
Thus, by the one-sided Azuma's inequality we have with probability at least $1-\delta'$ that
\begin{align*}
\log\big(L(\Wb^{(t)})\big)
&\le \log\big(L(\Wb^{(0)})\big) - \frac{4t\eta L \mu_A^2\mu_B^2\sigma_r^2(\Xb)}{3e}+  \frac{4e\eta L n\lambda_A^2\lambda_B^2\|\Xb\|_2^2}{B}\cdot\sqrt{2t\log(1/\delta')}\notag\\
&\le \log\big(L(\Wb^{(0)})\big) - \frac{t\eta L \mu_A^2\mu_B^2\sigma_r^2(\Xb)}{e}+  \frac{96e^3\eta L n^2\lambda_A^4\lambda_B^4\|\Xb\|_2^4\log(1/\delta')}{B^2\mu_A^2\mu_B^2\sigma_{r}^2(\Xb)}\notag\\
&\le \log\big(L(\Wb^{(0)})\big) - \frac{t\eta L \mu_A^2\mu_B^2\sigma_r^2(\Xb)}{e} + \log(2),
\end{align*}
where the second inequality  follows from the fact that $-at + b\sqrt{t}\le b^2/a$, and  the last inequality is by our choice of $\eta$ that
\begin{align*}
\eta \le \frac{\log(2)B^2\mu_A^2\mu_B^2(\Bb)\sigma_{r}^2(\Xb) }{96e^3 Ln^2\lambda_A^4\lambda_B^4\|\Xb\|_2^4\cdot\log(1/\delta')}.
\end{align*}
Then it is clear that with probability at least $1-\delta'$,
\begin{align}\label{eq:complete_part3_sgd2}
L(\Wb^{(t)}) \le 2L(\Wb^{(0)})\cdot \exp\bigg(-\frac{t\eta L\mu_A^2\mu_B^2\sigma_{r}^2(\Xb)}{e}\bigg) ,
\end{align}
which completes the induction for Part \ref{arg:sgd_22}.

Similar to the proof of Theorem \ref{theorem:sgd}, \eqref{eq:complete_part3_sgd2} holds with probability at least $1-\delta'$ for a given $t$. Then we can set $\delta' = \delta/T$ and apply union bound such that with probability at least $1-\delta$, \eqref{eq:complete_part3_sgd2} holds for all $t\le T$. This completes the proof.
\end{proof}
\subsection{Proof of Corollary \ref{coro:sgd2}}
\begin{proof}[Proof of Corollary \ref{coro:sgd2}]
Recall the condition in Theorem \ref{theorem:SGD2}:
\begin{align}\label{eq:condition_SGD2_coro}
\frac{\sigma_{\min}^2(\Ab)\sigma_{\min}^2(\Bb)}{\|\Ab\|_2\|\Bb\|_2}&\ge C\cdot\frac{n\|\Xb\|_2}{B\sigma_r^2(\Xb)}\cdot\sqrt{L(\Wb^{(0)})},
\end{align}
Then plugging in the results in Proposition \ref{prop:initial_loss} and the fact that $\|\Xb\|_F\le \sqrt{r}\|\Xb\|_2$, we obtain that condition \eqref{eq:condition_SGD_coro} can be satisfied if $m = O\big(kr\kappa^2\cdot B/n\big)$.

In addition, it can be computed that $\eta = O\big(kB^2/(L mn^2\kappa\|\Xb\|_2^2)\big)$ based on the results in Proposition \ref{prop:initial_loss}. Then in order to achieve $\epsilon$-suboptimal training loss, the iteration complexity is 
\begin{align*}
T = \frac{e}{\eta L\sigma_{\min}^2\sigma_{\min}^2(\Bb)\sigma_r^2(\Xb)}\log\bigg(\frac{L(\Wb^{(0)}) - L(\Wb^*)}{\epsilon}\bigg) = O\big(\kappa^2\log(1/\epsilon)\cdot n^2/B^2\big).
\end{align*}
This completes the proof.
\end{proof}

\section{Proofs of Technical Lemmas}
\label{s:technical}
\subsection{Proof of Lemma \ref{lemma:grad_bound}}
\label{s:grad_bound}
We first 
note
the following useful lemmas.


\begin{lemma}[Claim B.1 in \citet{du2019width}]\label{lemma:du2019width}
Define $\bPhi = \arg\min_{\bTheta\in\RR^{k\times d}}\|\bTheta\Xb - \Yb\|_F^2$, then for any $\Ub\in\RR^{k\times d}$ it holds that
\begin{align*}
\|\Ub\Xb - \Yb\|_F^2 = \|\Ub\Xb - \bPhi \Xb\|_F^2 + \|\bPhi\Xb - \Yb\|_F^2.
\end{align*}
\end{lemma}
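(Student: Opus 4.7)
The plan is to view this as the Pythagorean identity for the orthogonal projection of $\Yb$ onto the row-space of $\Xb$ (regarded via the linear map $\bTheta\mapsto \bTheta\Xb$). The key is to exploit the first-order optimality condition characterizing $\bPhi$: since $\bTheta\mapsto \tfrac12\|\bTheta\Xb-\Yb\|_F^2$ is a convex quadratic in $\bTheta$ with gradient $(\bTheta\Xb-\Yb)\Xb^\top$, any minimizer $\bPhi$ must satisfy
\begin{align*}
(\bPhi\Xb - \Yb)\Xb^\top = \boldsymbol{0}.
\end{align*}

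With this in hand, I would simply write $\Ub\Xb-\Yb = (\Ub\Xb-\bPhi\Xb)+(\bPhi\Xb-\Yb)$ and expand the Frobenius norm:
\begin{align*}
\|\Ub\Xb-\Yb\|_F^2 = \|\Ub\Xb-\bPhi\Xb\|_F^2 + \|\bPhi\Xb-\Yb\|_F^2 + 2\,\Tr\!\big((\Ub\Xb-\bPhi\Xb)^\top(\bPhi\Xb-\Yb)\big).
\end{align*}
The cross term equals $2\,\Tr\!\big((\Ub-\bPhi)\Xb(\bPhi\Xb-\Yb)^\top\big) = 2\,\Tr\!\big((\Ub-\bPhi)\,[(\bPhi\Xb-\Yb)\Xb^\top]^\top\big)$ after using cyclicity and the identity $\Tr(M^\top N) = \Tr(N^\top M)$, and by the optimality condition above, the bracketed matrix vanishes, so the cross term is zero. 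This yields the claim.

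There is essentially no technical obstacle; the only point worth noting is that $\bPhi$ need not be unique when $\Xb$ is rank-deficient (i.e.\ when $r<d$), but every minimizer satisfies $(\bPhi\Xb-\Yb)\Xb^\top=\boldsymbol{0}$ because this is the normal equation for the least-squares problem, and the identity only involves the product $\bPhi\Xb$, which is the same for all minimizers (it is the orthogonal projection of the rows of $\Yb$ onto the row-space of $\Xb$). Thus the statement is well-defined and the argument goes through without modification.
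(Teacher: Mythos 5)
The paper does not prove this lemma; it simply cites it as Claim B.1 of \citet{du2019width}. Your proof is the standard and correct argument: from the first-order optimality condition $(\bPhi\Xb-\Yb)\Xb^\top=\boldsymbol{0}$ the cross term in the Frobenius-norm expansion vanishes by cyclicity of trace, and your remark that every minimizer yields the same $\bPhi\Xb$ (so the identity is well-posed even when $\Xb$ is rank-deficient) is exactly the right point to flag.
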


\begin{lemma}[Theorem 1 in \citet{fang1994inequalities}]\label{lemma:trace_inequality}
Let $\Ub, \Vb\in \RR^{d\times d}$ be two positive definite matrices, then it holds that
\begin{align*}
\lambda_{\min}(\Ub)\text{Tr}(\Vb)\le \text{Tr}(\Ub\Vb)\le \lambda_{\max}(\Ub)\text{Tr}(\Vb).
\end{align*}
\end{lemma}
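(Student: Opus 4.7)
The plan is to reduce the two-sided inequality to the elementary Rayleigh-quotient bound $\lambda_{\min}(\Ub)\|\wb\|_2^2 \le \wb^\top \Ub \wb \le \lambda_{\max}(\Ub)\|\wb\|_2^2$, which holds for every $\wb \in \RR^d$ because $\Ub$ is symmetric positive definite. The bridge between the scalar Rayleigh-quotient bound and the matrix trace is the cyclic property of the trace together with a symmetric square-root factorization of $\Vb$.

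Concretely, I would proceed as follows. First, since $\Vb$ is positive definite, it admits a unique positive definite square root $\Vb^{1/2}$ satisfying $\Vb^{1/2}\Vb^{1/2} = \Vb$. Using the cyclic property of the trace, rewrite
\begin{align*}
\text{Tr}(\Ub\Vb) = \text{Tr}\bigl(\Vb^{1/2}\Ub\Vb^{1/2}\bigr).
\end{align*}
Second, denote the columns of $\Vb^{1/2}$ by $\wb_1, \dots, \wb_d$. Expanding the trace column by column gives
\begin{align*}
\text{Tr}\bigl(\Vb^{1/2}\Ub\Vb^{1/2}\bigr) = \sum_{i=1}^d \wb_i^\top \Ub \wb_i.
\end{align*}
Third, apply the Rayleigh-quotient bound term by term to get
\begin{align*}
\lambda_{\min}(\Ub)\sum_{i=1}^d \|\wb_i\|_2^2 \;\le\; \sum_{i=1}^d \wb_i^\top \Ub \wb_i \;\le\; \lambda_{\max}(\Ub)\sum_{i=1}^d \|\wb_i\|_2^2.
\end{align*}
Finally, recognize that $\sum_{i=1}^d \|\wb_i\|_2^2 = \text{Tr}\bigl((\Vb^{1/2})^\top \Vb^{1/2}\bigr) = \text{Tr}(\Vb)$, since $\Vb^{1/2}$ is symmetric. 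Chaining these identities and inequalities yields the claimed two-sided bound.

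There is essentially no hard step here; the statement is a standard consequence of the spectral theorem. The only item requiring a little care is ensuring symmetry is used correctly: we invoke the symmetry of $\Vb^{1/2}$ to identify $\sum_i \|\wb_i\|_2^2$ with $\text{Tr}(\Vb)$, and we invoke the symmetry of $\Ub$ so that the Rayleigh-quotient bound applies with the smallest and largest eigenvalues (rather than some more complicated singular-value bound). An equivalent route, if one prefers to avoid square roots, would be to diagonalize $\Vb = \Qb \mathrm{diag}(\mu_1,\dots,\mu_d) \Qb^\top$, write $\text{Tr}(\Ub\Vb) = \sum_i \mu_i\, \qb_i^\top \Ub \qb_i$, and then bound each $\qb_i^\top \Ub \qb_i \in [\lambda_{\min}(\Ub), \lambda_{\max}(\Ub)]$ before summing against $\sum_i \mu_i = \text{Tr}(\Vb)$; this gives the same conclusion via a slightly different bookkeeping.
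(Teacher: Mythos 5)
Your proof is correct. The paper does not prove this lemma at all—it simply cites it as Theorem~1 in \citet{fang1994inequalities}—so there is no in-paper argument to compare against. Your derivation via the symmetric square root of $\Vb$, the cyclicity of the trace, and the Rayleigh-quotient bound for the symmetric matrix $\Ub$ is a clean, self-contained proof of the cited result, and the alternative you sketch (diagonalizing $\Vb$ and bounding each $\qb_i^\top\Ub\qb_i$ before summing against the eigenvalues $\mu_i$, which are nonnegative) is an equally valid bookkeeping of the same spectral fact. One small point worth making explicit: positive definiteness of $\Vb$ is what guarantees the eigenvalues $\mu_i$ (or equivalently the weights $\|\wb_i\|_2^2$) are nonnegative, which is needed so the termwise inequalities sum in the right direction; for $\Ub$ only symmetry is actually used, so the lemma holds more generally for symmetric $\Ub$ and positive semidefinite $\Vb$.
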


The following lemma is proved in Section~\ref{s:matrix_product_bound}.
\begin{lemma}\label{lemma:matrix_product_bound}
Let $\Ub\in\RR^{d\times r}$ be a rank-$r$ matrix. Then for any $\Vb\in\RR^{r\times k}$, it holds that
\begin{align*}
 \sigma_{\min}(\Ub)\|\Vb\|_F\le\|\Ub\Vb\|_F\le \sigma_{\max}(\Ub)\|\Vb\|_F.
\end{align*}
\end{lemma}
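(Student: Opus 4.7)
The plan is to reduce the Frobenius norm inequality to a trace inequality and then invoke Lemma \ref{lemma:trace_inequality}, which is already stated in the excerpt.

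First I would rewrite $\|\Ub\Vb\|_F^2$ using the cyclic property of trace:
\begin{align*}
\|\Ub\Vb\|_F^2 = \Tr(\Vb^\top \Ub^\top \Ub \Vb) = \Tr(\Ub^\top \Ub \, \Vb\Vb^\top).
\end{align*}
Since $\Ub\in\RR^{d\times r}$ has rank $r$, the Gram matrix $\Ub^\top \Ub\in\RR^{r\times r}$ is positive definite, and $\Vb\Vb^\top\in\RR^{r\times r}$ is positive semidefinite. Strictly speaking, Lemma \ref{lemma:trace_inequality} is stated for two positive definite matrices, so I would either note that the inequality extends to the positive semidefinite case by a standard continuity argument (perturbing $\Vb\Vb^\top$ by $\epsilon\Ib$ and letting $\epsilon\to 0$), or alternatively prove the trace inequality directly from the spectral decomposition of $\Ub^\top\Ub$.

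Then applying Lemma \ref{lemma:trace_inequality} gives
\begin{align*}
\lambda_{\min}(\Ub^\top\Ub)\,\Tr(\Vb\Vb^\top)\le \Tr(\Ub^\top\Ub\,\Vb\Vb^\top)\le \lambda_{\max}(\Ub^\top\Ub)\,\Tr(\Vb\Vb^\top).
\end{align*}
Using the identities $\lambda_{\min}(\Ub^\top\Ub)=\sigma_{\min}^2(\Ub)$, $\lambda_{\max}(\Ub^\top\Ub)=\sigma_{\max}^2(\Ub)$, and $\Tr(\Vb\Vb^\top)=\|\Vb\|_F^2$, this yields
\begin{align*}
\sigma_{\min}^2(\Ub)\,\|\Vb\|_F^2 \le \|\Ub\Vb\|_F^2 \le \sigma_{\max}^2(\Ub)\,\|\Vb\|_F^2,
\end{align*}
and taking square roots gives the claim.

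I do not anticipate any real obstacle. The only subtlety is the positive semidefinite versus positive definite issue for Lemma \ref{lemma:trace_inequality}; a direct proof via SVD is an easy fallback, writing $\Ub=\Pb\bSigma\Qb^\top$ with $\Pb$ having orthonormal columns and $\Qb$ orthogonal, then noting $\|\Ub\Vb\|_F=\|\bSigma\Qb^\top\Vb\|_F$ and bounding each row of $\bSigma\Qb^\top\Vb$ entrywise by $\sigma_{\min}(\Ub)$ and $\sigma_{\max}(\Ub)$ times the corresponding row of $\Qb^\top\Vb$, whose Frobenius norm equals $\|\Vb\|_F$.
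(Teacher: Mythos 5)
Your proposal is correct and follows essentially the same route as the paper: rewrite $\|\Ub\Vb\|_F^2 = \Tr(\Ub^\top\Ub\,\Vb\Vb^\top)$, apply Lemma~\ref{lemma:trace_inequality}, and convert eigenvalues of $\Ub^\top\Ub$ to singular values of $\Ub$. You are more careful than the paper in flagging that $\Vb\Vb^\top$ is only positive semidefinite (Lemma~\ref{lemma:trace_inequality} is stated for positive definite matrices), and your continuity/SVD fallback is a clean way to close that gap.
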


\begin{proof}[Proof of Lemma \ref{lemma:grad_bound}]
\noindent\textbf{Proof of gradient lower bound:}
We first prove the gradient lower bound. Let $\Ub = \Bb(\Ib+\Wb_L)\dots(\Ib+\Wb_1)\Ab$, by Lemma \ref{lemma:du2019width} and the definition of $L(\Wb^*)$, we know that there exist a matrix $\bPhi \in \RR^{k\times d}$ such that
\begin{align}\label{eq:reformulate_loss}
L(\Wb) = \frac{1}{2}\|\Ub\Xb - \bPhi \Xb\|_F^2 + L(\Wb^*).
\end{align}
Therefore, based on the assumption that $\max_{l\in[L]}\|\Wb_l\|_F\le 0.5/L$, we have
\begin{align*}
\|\nabla_{\Wb_l}L(\Wb)\|_F^2  &= \big\|\big[\Bb(\Ib + \Wb_L)\cdots (\Ib + \Wb_{l+1})\big]^\top\big(\Ub\Xb - \bPhi\Xb\big)\big[(\Ib + \Wb_{l-1})\cdots \Ab\Xb\big]^\top\big\|_F^2\notag\\
&\ge\sigma^2_{\min}((\Ib + \Wb_L)\cdots (\Ib + \Wb_{l+1}))\cdot\sigma^2_{\min}\big((\Ib +  \Wb_{l-1})\cdots (\Ib +  \Wb_{1})\big)\notag\\
&\qquad\cdot \|\Bb^\top(\Ub - \bPhi)\Xb\Xb^\top\Ab^\top\|_F^2\notag\\
&\ge \big(1 - 0.5/L\big)^{2L-2} \|\Bb^\top(\Ub - \bPhi)\Xb\Xb^\top\Ab^\top\|_F^2,
\end{align*}
where the last inequality follows from the fact
that $\sigma_{\min}(\Ib+\Wb_l)\ge 1 - \|\Wb_l\|_2\ge1 - \|\Wb_l\|_F\ge 1-0.5/L$.
Applying Lemma~\ref{lemma:trace_inequality}, we get
\begin{align*}
 \|\Bb^\top(\Ub - \bPhi)\Xb\Xb^\top\Ab^\top\|_F^2 
  &= \text{Tr}\big(\Bb\Bb^\top(\Ub - \bPhi)\Xb\Xb^\top\Ab^\top\Ab\Xb\Xb^\top(\Ub - \bPhi)^\top\big)\notag\\
 &\ge \lambda_{\min}(\Bb\Bb^\top) \cdot \text{Tr}\big(\Ab^\top\Ab\Xb\Xb^\top(\Ub - \bPhi)^\top(\Ub - \bPhi)\Xb\Xb^\top\big)\notag\\
 &\ge \lambda_{\min}(\Bb\Bb^\top) \cdot \lambda_{\min}(\Ab^\top\Ab)\cdot \|(\Ub - \bPhi)\Xb\Xb^\top\|_F^2.
\end{align*}
Note that $\Xb$ is of $r$-rank, thus there exists a full-rank matrix $\tilde \Xb\in\RR^{d\times r}$ such that $\tilde \Xb\tilde \Xb^\top = \Xb\Xb^\top$.
Thus we have
\begin{align}\label{eq:equality_XX}
\|(\Ub-\bPhi)\Xb\|_F^2 = \text{Tr}\big((\Ub-\bPhi)\Xb\Xb^\top(\Ub-\bphi)^\top\big) = \text{Tr}\big((\Ub-\bPhi)\tilde\Xb\tilde\Xb^\top(\Ub-\bphi)^\top\big) =  \big\|(\Ub-\bPhi)\tilde\Xb\big\|_F^2.
\end{align}
Therefore, 
\begin{align}\label{eq:bound_loss}
\|(\Ub - \bPhi)\Xb\Xb^\top\|_F^2 
&= \big\|(\Ub - \bPhi)\tilde\Xb\tilde\Xb^\top\big\|_F^2\notag\\
&= \text{Tr}\big((\Ub - \bPhi)\tilde\Xb\tilde\Xb^\top\tilde \Xb\tilde \Xb^\top (\Ub - \bPhi)^\top\big)\notag\\
&\ge \lambda_{\min}(\tilde\Xb^\top\tilde\Xb)\cdot\|(\Ub-\bPhi)\tilde\Xb\|_F^2\notag\\
&=2 \sigma_r^2(\Xb)\cdot(L(\Wb)-L(\Wb^*)),
\end{align}
where the  inequality follows from Lemma \ref{lemma:trace_inequality} and the last equality follows from \eqref{eq:equality_XX}, \eqref{eq:reformulate_loss} and the fact that $\lambda_{\min}(\tilde\Xb^\top\tilde\Xb) = \lambda_{r}(\Xb\Xb^\top) = \sigma_r^2(\Xb)$. 
Note that we assume $d,k\le m$ and $d\le n$. Thus it follows that $\lambda_{\min}(\Bb\Bb^\top) = \sigma_{\min}^2(\Bb)$ and $\lambda_{\min}(\Ab^\top\Ab) = \sigma_{\min}^2(\Ab)$. Then putting everything together, we can obtain
\begin{align*}
\|\nabla_{\Wb_l}L(\Wb)\|_F^2\ge 2\sigma_{\min}^2(\Bb)\sigma_{\min}^2(\Ab)\sigma_{r}^2(\Xb)(1 - 0.5/L)^{2L-2}\big(L(\Wb - L(\Wb^*)\big).
\end{align*}
Then using the inequality $(1-0.5/L)^{2L-2}\ge e^{-1}$, we are able to complete the proof of gradient lower bound. 

\noindent\textbf{Proof of gradient upper bound:} The gradient upper bound can be proved in a similar way. Specifically, Lemma \ref{lemma:matrix_product_bound} implies
\begin{align*}
\|\nabla_{\Wb_l}L(\Wb)\|_F^2  &= \big\|\big[\Bb(\Ib + \Wb_L)\cdots (\Ib + \Wb_{l+1})\big]^\top\big(\Ub\Xb - \bPhi\Xb\big)\big[(\Ib + \Wb_{l-1})\cdots \Ab\Xb\big]^\top\big\|_F^2\notag\\
&\le \sigma^2_{\max}((\Ib + \Wb_L)\cdots (\Ib + \Wb_{l+1}))\cdot\sigma^2_{\max}\big((\Ib + \Wb_{l-1})\cdots (\Ib +  \Wb_{1})\big)\notag\\
&\qquad\cdot\|\Bb^\top(\Ub-\bPhi)\Xb\Xb^\top\Ab^\top\|_F^2\notag\\ 
&\le\sigma^2_{\max}((\Ib + \Wb_L)\cdots (\Ib + \Wb_{l+1}))\cdot\sigma^2_{\max}\big((\Ib +  \Wb_{l-1})\cdots (\Ib +  \Wb_{1})\big)\notag\\
&\qquad\cdot\|\Bb\|_2^2\|\Ab\|_2^2\cdot\|(\Ub-\bPhi)\Xb\Xb^\top\|_F^2\notag\\ 
&\le (1+0.5/L)^{2L-2}\|\Bb\|_2^2\|\Ab\|_2^2\cdot\|(\Ub-\bPhi)\Xb\Xb^\top\|_F^2,
\end{align*}
where the last inequality is by the assumption that $\max_{l\in[L]}\|\Wb_l\|_F\le 0.5/L$. 
By \eqref{eq:bound_loss}, we have
\begin{align*}
\|(\Ub - \bPhi)\Xb\Xb^\top\|_F^2 
&= \big\|(\Ub - \bPhi)(\Xb\Xb^\top)^{1/2}(\Xb\Xb^\top)^{1/2}\big\|_F^2\notag\\
&\le \lambda_{\max}(\Xb\Xb^\top)\cdot\|(\Ub-\bPhi)(\Xb\Xb^\top)^{1/2}\|_F^2\notag\\
&= \lambda_{\max}(\Xb\Xb^\top)\cdot\|(\Ub-\bPhi)\Xb\|_F^2\notag\\
&=2 \|\Xb\|_2^2\cdot(L(\Wb)-L(\Wb^*)),
\end{align*}
where the inequality is by Lemma \ref{lemma:matrix_product_bound} and the second equality is by \eqref{eq:equality_XX}.
Therefore, combining the above results yields
\begin{align*}
\|\nabla_{\Wb_l}L(\Wb)\|_F^2\le 2\sigma_{\max}^2(\Bb)\sigma_{\max}^2(\Ab)\|\Xb\|_2^2(1 + 0.5/L)^{2L-2}\big(L(\Wb - L(\Wb^*)\big).  
\end{align*}
Using the inequality $(1+0.5/L)^{2L-2}\le(1+0.5/L)^{2L}\le e$, we are able to complete the proof of gradient upper bound. 

\noindent\textbf{Proof of the upper bound of $\|\nabla_{\Wb_l}\ell(\Wb;\xb_i,\yb_i)\|_F^2$:}
Let $\Ub = \Bb(\Ib+\Wb_L)\cdots(\Ib+\Wb_1)\Ab$, we have
\begin{align*}
\nabla_{\Wb_l}\ell(\Wb;\xb_i,\yb_i) = \big[\Bb(\Ib + \Wb_L)\cdots(\Ib+\Wb_{l+1})\big]^\top(\Ub\xb_i - \yb_i)\big[(\Ib+\Wb_{l-1})\cdots\Ab\bar\xb_i\big]^\top.
\end{align*}
Therefore, by Lemma \ref{lemma:matrix_product_bound}, we have
\begin{align*}
\|\nabla_{\Wb_l}\ell(\Wb;\xb_i,\yb_i)\|_F^2&\le \sigma_{\max}^2\big((\Ib+\Wb_L)\cdot(\Ib+\Wb_{l+1})\big)\cdot \sigma_{\max}^2\big((\Ib+\Wb_{l-1})\cdots(\Ib + \Wb_1)\big)\notag\\
&\cdot \|\Bb^\top(\Ub\xb_i - \yb_i)\xb_i\Ab^\top\|_F^2\notag\\
&\le (1+0.5/L)^{2L-2}\cdot \|\Bb\|_2^2\|\Ab\|_2^2\|\xb_i\|_2^2\cdot \|\Ub\xb_i - \yb_i\|_F^2\notag\\
&\le 2e \|\Ab\|_2^2\|\Bb\|_2^2\|\xb_i\|_2^2\ell(\Wb;\xb_i,\yb_i),
\end{align*}
where the last inequality is by the fact that $(1+0.5/L)^{2L-2}\le e$. 

\noindent\textbf{Proof of the upper bound of stochastic gradient:} 
Define by $\cB$ the set of training data points used to compute the stochastic gradient, then define by $\bar\Xb$ and $\bar\Yb$ the stacking of $\{\xb_i\}_{i\in\cB}$ and $\{\yb_i\}_{i\in\cB}$ respectively. Let $\Ub = \Bb(\Ib+\Wb_L)\cdots(\Ib+\Wb_1)\Ab$, the minibatch stochastic gradient takes form 
\begin{align*}
\Gb_l &= \frac{n}{B}\sum_{i\in\cB}\nabla_{\Wb_l}\ell(\Wb;\xb_i,\yb_i)\notag\\
& = \frac{n}{B}\big[\Bb(\Ib + \Wb_L)\cdots(\Ib+\Wb_{l+1})\big]^\top(\Ub\bar\Xb - \bar\Yb)\big[(\Ib+\Wb_{l-1})\cdots\Ab\bar\Xb\big]^\top.
\end{align*}
Then by Lemma \ref{lemma:matrix_product_bound}, we have
\begin{align*}
\|\Gb_l\|_F^2&\le \frac{n^2}{B^2}\sigma_{\max}^2\big((\Ib+\Wb_L)\cdot(\Ib+\Wb_{l+1})\big)\cdot \sigma_{\max}^2\big((\Ib+\Wb_{l-1})\cdots(\Ib + \Wb_1)\big)\notag\\
&\qquad\cdot \|\Bb^\top(\Ub\bar\Xb - \bar\Yb)\bar\Xb^\top\Ab^\top\|_F^2\notag\\
&\le \frac{n^2}{B^2} \cdot(1+0.5/L)^{2L-2}\cdot \|\Bb\|_2^2\|\Ab\|_2^2\|\bar\Xb\|_2^2\cdot \|\Ub\bar\Xb - \bar\Yb\|_F^2\notag\\
&\le \frac{en^2}{B^2}\|\Bb\|_2^2\|\Ab\|_2^2\|\bar\Xb\|_2^2\cdot \|\Ub\bar\Xb - \bar\Yb\|_F^2.
\end{align*}
where the second inequality is by the assumptions that $\max_{l\in[L]}\|\Wb_l\|_F\le 0.5/L$, and the last inequality follows from the the fact that $(1+0.5/L)^{2L-2}\le (1+0.5/L)^{2L}\le e$.
Note that $\bar\Xb$ and $\bar \Yb$ are constructed by stacking $B$ columns from $\Xb$ and $\Yb$ respectively, thus we have $\|\bar\Xb\|_2^2\le\|\Xb\|_2^2$ and $\|\Ub \bar\Xb- \bar\Yb\|_F^2\le \|\Ub \Xb - \Yb\|_F^2 = 2L(\Wb)$. Then it follows that
\begin{align*}
\|\Gb_l\|_F^2&\le  \frac{2en^2}{B^2}\|\Bb\|_2^2\|\Ab\|_2^2\|\Xb\|_2^2\cdot L(\Wb).
\end{align*}
This completes the proof of the upper bound of stochastic gradient.
\end{proof}

\subsection{Proof of Lemma \ref{lemma:restricted_smooth}}
\label{s:restricted_smooth}
\begin{proof}[Proof of Lemma \ref{lemma:restricted_smooth}]
\newcommand{\Deltab}{{\bf \Delta}}
Let $\Ub = \Bb(\Ib+\Wb_L)\cdots(\Ib+\Wb_1)\Ab$ and $\tilde\Ub = \Bb(\Ib+\tilde\Wb_L)\cdots(\Ib+\tilde\Wb_1)\Ab$
and $\Deltab = \tilde\Ub - \Ub$.  We have
\begin{align}\label{eq:GD_one_step}
L(\tilde \Wb) - L(\Wb) &= \frac{1}{2}\big(\|\tilde \Ub\Xb - \Yb\|_F^2 - \|\Ub\Xb - \Yb\|_F^2\big)\notag\\
&= \frac{1}{2}\big(\| (\Ub + \Deltab) \Xb - \Yb\|_F^2 - \|\Ub\Xb - \Yb\|_F^2\big)\notag\\
&= \frac{1}{2}\big(\| \Ub \Xb - \Yb + \Deltab \Xb \|_F^2 - \|\Ub\Xb - \Yb\|_F^2\big)\notag\\
&= \frac{1}{2}\big(\| 2 \langle \Ub \Xb - \Yb, \Deltab \Xb \rangle + \| \Deltab \Xb \|_F^2 \big)\notag\\
&= \big\la\Ub\Xb - \Yb, \big(\tilde \Ub - \Ub\big)\Xb\big\ra + \frac{1}{2}\big\|\big(\tilde \Ub -\Ub \big)\Xb\|_F^2.
\end{align}
We begin by working on the first term.
Let $\Vb = (\Ib+\Wb_L)\cdots(\Ib+\Wb_1)$ and $\tilde\Vb = (\Ib+\tilde\Wb_L)\cdots(\Ib+\tilde\Wb_1)$, so that $\tilde \Ub - \Ub = \Bb(\tilde\Vb - \Vb)\Ab$. 
Breaking down the effect of transforming $\Vb = \prod_{j=L}^1 (\Ib+\Wb_j)$ into $\tilde\Vb = \prod_{j=L}^1 (\Ib+\tilde\Wb_j)$ 
into the effects of replacing one layer at a time, we get
\begin{align*}
\tilde\Vb - \Vb
&= \sum_{l = 1}^L \left[ \left( \prod_{j=L}^{l+1} (\Ib+\Wb_j) \right)
                  \left( \prod_{j=l}^{1} (\Ib+\tilde\Wb_j) \right)
                   - \left( \prod_{j=L}^{l} (\Ib+\Wb_j) \right)
                      \left( \prod_{j=l-1}^1 (\Ib+\tilde\Wb_j) \right) \right]
                         \notag\\
\end{align*}
and, for each $l$, pulling out a common factor of $\left( \prod_{j=L}^{l+1} (\Ib+\Wb_j) \right) \left( \prod_{j=l}^{1-l} (\Ib+\tilde\Wb_j) \right)$ gives
\begin{align}\label{eq:decomposition_V}
\tilde\Vb - \Vb
& = \sum_{l=1}^L (\Ib+\Wb_L)\cdots(\Ib+\Wb_{l+1})(\tilde\Wb_l - \Wb_l)(\Ib+\tilde \Wb_{l-1})\cdots(\Ib+\tilde\Wb_{1})\notag\\
& = \underbrace{\sum_{l=1}^L (\Ib+\Wb_L)\cdots(\Ib+\Wb_{l+1})(\tilde \Wb_l - \Wb_l)(\Ib+\Wb_{l-1})\cdots(\Ib+\Wb_{1})}_{
\Vb_1}\notag\\
& \quad + \underbrace{\sum_{l=1}^L (\Ib+\Wb_L)\cdots(\Ib+\Wb_{l+1})(\tilde \Wb_l - \Wb_l)}_{\Vb_2}\notag\\
&\hspace{0.5in}\quad\underbrace{\cdot \big[(\Ib+\tilde\Wb_{l-1})\cdots(\Ib+\tilde\Wb_{1})-(\Ib+\Wb_{l-1})\cdots(\Ib+\Wb_{1})\big]}_{\Vb_2}.
\end{align}
The first term $\Vb_1$ satisfies
\begin{align}\label{eq:bound_V1}
&\la\Ub\Xb - \Yb, \Bb\Vb_1\Ab\Xb\ra \notag\\
& = \left\la\Ub\Xb - \Yb, \Bb \left( \sum_{l=1}^L (\Ib+\Wb_L)\cdots(\Ib+\Wb_{l+1})(\tilde \Wb_l - \Wb_l)(\Ib+\Wb_{l-1})\cdots(\Ib+\Wb_{1}) \right) \Ab\Xb\right\ra \notag\\
& = \sum_{l=1}^L \left\la\Ub\Xb - \Yb, \Bb (\Ib+\Wb_L)\cdots(\Ib+\Wb_{l+1})(\tilde \Wb_l - \Wb_l)(\Ib+\Wb_{l-1})\cdots(\Ib+\Wb_{1}) \Ab\Xb\right\ra \notag\\
& = \sum_{l=1}^L \Tr( (\Ub\Xb - \Yb)^{\top} \Bb (\Ib+\Wb_L)\cdots(\Ib+\Wb_{l+1})(\tilde \Wb_l - \Wb_l)(\Ib+\Wb_{l-1})\cdots(\Ib+\Wb_{1}) \Ab\Xb) \notag\\
& = \sum_{l=1}^L \Tr( (\Ib+\Wb_{l-1})\cdots(\Ib+\Wb_{1}) \Ab\Xb (\Ub\Xb - \Yb)^{\top} \Bb (\Ib+\Wb_L)\cdots(\Ib+\Wb_{l+1})(\tilde \Wb_l - \Wb_l)) \notag\\
&= \sum_{l=1}^L \big\la[\Bb(\Ib+\Wb_L)\cdots(\Ib+\Wb_{l+1})]^\top(\Ub\Xb - \Yb)[(\Ib+\Wb_{l-1})\cdots\Ab\Xb]^\top, \tilde \Wb_l - \Wb_l\big\ra\notag\\
&=\sum_{l=1}^L\la\nabla_{\Wb_l}L(\Wb), \tilde\Wb_l -\Wb_l\ra,
\end{align}
where the first equality is by the definition of $\Vb_1$.
Now
we focus on the second term $\Vb_2$ of \eqref{eq:decomposition_V},
\begin{align*}
\Vb_2 &= \sum_{l=1}^L (\Ib+\Wb_L)\cdots(\Ib+\Wb_{l+1})(\tilde\Wb_l - \Wb_l)\notag\\
&\qquad \cdot \sum_{s=1}^{l-1} (\Ib+\Wb_{l-1})\cdots(\Ib+\Wb_{s+1})(\tilde \Wb_s - \Wb_s)(\Ib+\tilde\Wb_{s-1})\cdots(\Ib+\tilde\Wb_{1}).
\end{align*}
Recalling that $\|\Wb_{l}\|_F, \|\tilde\Wb_{l}\|_F \le 0.5/L$ for all $l\in[L]$, by triangle inequality we have
\begin{align*}
\|\Vb_2\|_F&\le  (1+0.5/L)^{L}\cdot \sum_{l, s \in [L]\;:\;l> s}\|\tilde \Wb_l - \Wb_l\|_F\cdot\|\tilde\Wb_s -\Wb_s \|_F\notag\\
&\le (1+0.5/L)^{L}\cdot\bigg(\sum_{l=1}^L\|\tilde\Wb_l - \Wb_l\|_F\bigg)^2,
\end{align*}
where we use the fact that $\sum_{l, s \in [L]\;:\;l> s}a_la_s\le \sum_{l, s \in [L]}a_la_s = \big(\sum_{l}a_l\big)^2$ holds for all $a_1,\dots,a_L\ge0$. Therefore, the following holds regarding $\Vb_2$:
\begin{align}
\la\Ub\Xb - \Yb, \Bb\Vb_2\Ab\Xb\ra &\le \|\Ub\Xb - \Yb\|_F\|\Bb\Vb_2\Ab\Xb\|_F\nonumber\\
&\le \sqrt{2L(\Wb)}\|\Bb\|_2\|\Ab\|_2\|\Xb\|_2\|\Vb_2\|_F\nonumber\\
\label{eq:bound_V2}
&\le\sqrt{2e} \sqrt{L(\Wb)}\|\Bb\|_2\|\Ab\|_2\|\Xb\|_2\bigg(\sum_{l=1}^L\|\tilde\Wb_l - \Wb_l\|_F\bigg)^2
\end{align}
where the third inequality follows from the  fact that $(1+0.5/L)^L = (1+0.5/L)^L\le \sqrt{e}$.
Next, we are going to upper bound the second term of (\ref{eq:GD_one_step}):
$\frac{1}{2} \|(\tilde \Ub - \Ub)\Xb\|_F^2$. Note that, since $\|(\tilde \Ub - \Ub)\Xb\|_F^2 = \|\Bb(\tilde\Vb - \Vb)\Ab\Xb\|_F^2\le \|\Ab\|_2^2\|\Bb\|_2^2\|\Xb\|_2^2 \|\tilde\Vb - \Vb\|_F^2$, it suffices to bound the norm $\|\tilde\Vb - \Vb\|_F$. By \eqref{eq:decomposition_V}, we have
\begin{align}\label{eq:bound_diff_V}
\|\tilde\Vb - \Vb\|_F &= \bigg\|\sum_{l=1}^L (\Ib+\Wb_L)\cdots(\Ib+\Wb_{l+1})(\tilde \Wb_l - \Wb_l)(\Ib+\tilde\Wb_{l-1})\cdots(\Ib+\tilde\Wb_{1})\bigg\|_F\notag\\
&\le  (1+0.5/L)^L \sum_{l=1}^L\|\tilde\Wb_l - \Wb_l\|_F.
\end{align}
Plugging \eqref{eq:bound_V1}, \eqref{eq:bound_V2} and \eqref{eq:bound_diff_V} into \eqref{eq:GD_one_step}, we have
\begin{align}\label{eq:restrict_smooth_temp0}
&L(\tilde \Wb) - L(\Wb) \notag\\
&= \big\la\Ub\Xb - \Yb, \Bb(\Vb_1 + \Vb_2)\Xb\big\ra + \frac{1}{2}\big\|\Bb\big(\tilde\Vb -\Vb \big)\Ab\Xb\|_F^2\notag\\
&\le \sum_{l=1}^L\la\nabla_{\Wb_l}L(\Wb), \tilde\Wb_l -\Wb_l\ra \notag\\
&\qquad+ \|\Ab\|_2\|\Bb\|_2\|\Xb\|_2 \big(\sqrt{2 eL(\Wb)}+0.5e\|\Ab\|_2\|\Bb\|_2\|\Xb\|_2\big) 
     \left( \sum_{l=1}^L\|\tilde \Wb_l - \Wb_l\|_F \right)^2\notag\\
&\le \sum_{l=1}^L\la\nabla_{\Wb_l}L(\Wb), \tilde\Wb_l -\Wb_l\ra \notag\\
&\qquad+ L\|\Ab\|_2\|\Bb\|_2\|\Xb\|_2 \big(\sqrt{2 eL(\Wb)}+0.5e \|\Ab\|_2\|\Bb\|_2\|\Xb\|_2\big) 
      \sum_{l=1}^L\|\tilde \Wb_l - \Wb_l\|_F^2,
\end{align}
where the last inequality is by Jesen's inequality. This completes the proof.

\end{proof}

\subsection{Proof of Lemma \ref{lemma:matrix_product_bound}}
\label{s:matrix_product_bound}
\begin{proof}[Proof of Lemma \ref{lemma:matrix_product_bound}]
Note that we have \begin{align*}
\|\Ub\Vb\|_F^2= \text{Tr}(\Ub\Vb\Vb^\top\Ub^\top) = \text{Tr}(\Ub^\top\Ub\Vb\Vb^\top).  
\end{align*}
By Lemma \ref{lemma:trace_inequality}, it is clear that
\begin{align*}
\lambda_{\min}(\Ub^\top\Ub)\text{Tr}(\Vb\Vb^\top)\le \text{Tr}(\Ub^\top\Ub\Vb\Vb^\top)\le \lambda_{\max}(\Ub^\top\Ub)\text{Tr}(\Vb\Vb^\top).
\end{align*}
Since $\Ub\in\RR^{d\times r}$ is of $r$-rank, thus we have $\lambda_{\min}(\Ub^\top\Ub) = \sigma^2_{\min}(\Ub)$. Then applying the facts that $\lambda_{\max}(\Ub^\top\Ub) = \sigma_{\max}^2(\Ub)$ and $\text{Tr}(\Vb\Vb^\top) = \|\Vb\|_F^2$, we are able to complete the proof.

\end{proof}

\end{document}